%% file: arxiv.tex
\documentclass[11pt]{article}
\def\TSPREVIEW{1}
\def\ARXIVCOMBINED{1}

\input{shared}
\date{}

\ifdefined\TSPREVIEW
  \newenvironment{manuscriptwidefigure}{\begin{figure}[H]}{\end{figure}}
\else
  \newenvironment{manuscriptwidefigure}{\begin{figure*}[!t]}{\end{figure*}}
\fi

\ifpdf
\hypersetup{
  hypertexnames=false,
  pdftitle={Physical-Support Confidence Sets for Highly Coherent Dictionaries},
  pdfauthor={Guan-Ju Peng},
  pdfsubject={Physical-support confidence sets under learned-dictionary uncertainty},
  pdfkeywords={dictionary learning, sparse representation, physical support, coherent atoms, uncertainty quantification, resolution control}
}
\fi

\begin{document}
\input{main_body}

\clearpage
\section*{Supplementary Material}

\setcounter{section}{0}
\setcounter{subsection}{0}
\setcounter{subsubsection}{0}
\setcounter{equation}{0}
\setcounter{figure}{0}
\setcounter{table}{0}
\setcounter{algorithm}{0}
\setcounter{theorem}{0}
\renewcommand{\thesection}{S\arabic{section}}
\renewcommand{\thesubsection}{\thesection.\arabic{subsection}}
\renewcommand{\thesubsubsection}{\thesubsection.\arabic{subsubsection}}
\renewcommand{\theequation}{S\arabic{equation}}
\renewcommand{\thefigure}{S\arabic{figure}}
\renewcommand{\thetable}{S\arabic{table}}
\renewcommand{\thealgorithm}{S\arabic{algorithm}}

\input{supplement_body}
\end{document}

%% file: shared.tex
\usepackage[T1]{fontenc}
\usepackage[utf8]{inputenc}
\ifdefined\TSPPUBLICATION
\else
  \usepackage{lmodern}
\fi
\usepackage{microtype}
\usepackage{amsfonts,amssymb,mathtools,bm,mathrsfs}
\ifdefined\TSPPUBLICATION
\else
  \usepackage[a4paper,margin=0.92in]{geometry}
\fi
\usepackage{amsthm}
\usepackage{iftex,ifpdf}
\usepackage[caption=false,font=footnotesize]{subfig}
\usepackage[colorlinks=true,allcolors=blue!45!black]{hyperref}
\usepackage[capitalize,noabbrev]{cleveref}
\ifdefined\ARXIVCOMBINED
\else
  \usepackage{xr-hyper}
\fi
\usepackage{booktabs,array,longtable,tabularx}
\usepackage[numbers,sort&compress]{natbib}
\usepackage{graphicx}
\usepackage{float}
\usepackage{placeins}
\usepackage{xcolor}
\usepackage{enumitem}
\usepackage{tikz}
\usetikzlibrary{arrows.meta,calc,positioning,shapes.geometric}
\usepackage{algorithm}
\usepackage{algorithmic}

\ifpdf
  \DeclareGraphicsExtensions{.pdf,.png,.jpg,.eps}
\else
  \DeclareGraphicsExtensions{.eps}
\fi

\setlist[enumerate]{leftmargin=.42in}
\setlist[itemize]{leftmargin=.32in}

\newtheorem{theorem}{Theorem}[section]
\newtheorem{lemma}[theorem]{Lemma}
\newtheorem{corollary}[theorem]{Corollary}
\newtheorem{proposition}[theorem]{Proposition}

\theoremstyle{remark}
\newtheorem{remark}[theorem]{Remark}
\newtheorem{assumption}[theorem]{Assumption}
\ifdefined\TSPPUBLICATION
  \newenvironment{keywords}{\begin{IEEEkeywords}}{\end{IEEEkeywords}}
\else
  \newenvironment{keywords}{\par\smallskip\noindent\textbf{Index Terms---}}{\par\medskip}
\fi

\DeclareMathOperator*{\argmin}{arg\,min}

\renewcommand{\argmin}{\operatornamewithlimits{arg\,min}}

\ifdefined\RISKREDLINE
  
  \newenvironment{riskrepairblock}{\color{red!80!black}}{}
\else

\fi

\allowdisplaybreaks

\title{Physical-Support Confidence Sets for Highly Coherent Dictionaries}
\author{Guan-Ju Peng
\thanks{Institute of Data Science and Information Computing,
National Chung Hsing University, Taichung, Taiwan
(gjpeng@email.nchu.edu.tw).
ORCID: \url{https://orcid.org/0000-0001-5508-9485}.
Supplementary material and reproducibility code are also available at
\url{https://github.com/GJPengAtNchu/PhysicalSupportConfidenceSets}.}
}

%% file: main_body.tex
\maketitle

\input{sections/00_abstract}

\begin{keywords}
dictionary learning, sparse representation, physical-support inference,
highly coherent dictionaries, uncertainty quantification, resolution control
\end{keywords}

\input{sections/01_introduction}
\input{sections/02_model}
\input{sections/03_method}
\input{sections/04_theory_to_aeb}
\input{sections/05_experiments}
\FloatBarrier
\input{sections/06_conclusion}




\ifdefined\TSPPUBLICATION
\bibliographystyle{IEEEtranN}
\else
\bibliographystyle{unsrtnat}
\fi
\begingroup
\small
\raggedright
\bibliography{references}
\endgroup

%% file: sections/00_abstract.tex
\begin{abstract}
Sparse pursuit after dictionary learning can yield a precise atom support even
when its physical interpretation is not justified by the calibration data.
This problem is acute for highly coherent dictionaries, where alternative
calibration-compatible dictionaries may associate the same selected support
with different physical elements.

We develop resolution-aware physical-support inference that accounts jointly
for uncertainty in the learned dictionary and in the representation of a
deployment signal.  An exact cross-dictionary confidence correspondence
retains calibration-compatible dictionaries and deployment-compatible sparse
representations, then projects the surviving explanations onto
physical-support space. 
For local classes of coherent atom configurations with separation scale $s$,
once the deployment data resolve the coherent-block explanation and the atom
support within that block, we show that the minimax physical resolution from
$N$ calibration signals satisfies
\[
    \delta_{\mathrm{opt}}(N,s)
    \asymp
    \min\left\{
        s,\frac{1}{\sqrt{N}s^2}
    \right\}.
\]
Equivalently, resolution relative to the coherent-block scale is governed by
the orientation-information scale $Ns^6$.  Deployment replication can improve
physical localization only when orientation changes cannot be absorbed by
adjusting the active coefficients.

For computation, we introduce active endpoint bracketing (AEB), an adaptive
finite-bank procedure that evaluates only candidates that can still affect the
physical report.  It certifies a fine conclusion only when justified by all
unresolved outcomes; otherwise, it coarsens or abstains.
Finite-bank experiments, including a four-region synthetic application, show that the point-valued plug-in selector can report a finer physical interpretation than is supported by exhaustive uncertainty-aware evaluation of the same bank, while AEB avoids unsupported refinement with fewer candidate evaluations.
\end{abstract}

%% file: sections/01_introduction.tex
\section{Introduction}
\label{sec:introduction}

A common sparse-representation pipeline first learns a dictionary from
calibration data and then applies sparse pursuit to a deployment signal.
For calibration matrix $Y$, a canonical formulation is
\begin{align}
(\widehat D,\widehat C)
&\in
\arg\min_{D,C}\|Y-DC\|_F
\quad
\text{s.t.}\quad
\|c_i\|_0\le k,\ \|d_j\|_2=1,
\label{eq:intro-dictionary-learning}\\
\widehat x
&\in
\arg\min_x\|Z-\widehat D x\|_2
\quad
\text{s.t.}\quad
\|x\|_0\le k .
\label{eq:intro-sparse-pursuit}
\end{align}
The selected atoms are then interpreted as the physical elements contributing
to the deployment signal.  
Variants of this pipeline arise in array source localization, where atoms
represent candidate directions or locations; hyperspectral unmixing and Raman
spectroscopy, where atoms represent materials or chemical constituents; and
M/EEG source imaging, where atoms correspond to candidate cortical locations~\citep{MalioutovCetinWillsky2005,IordacheBioucasDiasPlaza2011,SunXin2014,GramfortKowalskiHamalainen2012}.
The coordinate support obtained under one fitted dictionary, however, need not
identify the physical support: highly coherent, calibration-compatible
dictionaries may attach different physical meanings to the same selected atom
index or support.

We address this gap through resolution-aware physical-support inference.
Figure~\ref{fig:intro_pipeline:a} summarizes the retain--project--coarsen
principle: retain dictionary--support--coefficient local explanations
compatible with calibration and deployment, project them through the physical
mapping, and report the finest physical conclusion shared by all survivors.
The result may identify individual elements, identify only a coherent group,
or retain support ambiguity. We formalize this construction as an exact
cross-dictionary confidence correspondence.

The general principle permits several coherent blocks and simultaneously
active components. The continuous theory isolates one supplied coherent block
and treats outside components as resolved and removable from the deployment
signal. This oracle-favorable reduction isolates ambiguity that persists even
when all other components are known.

Figure~\ref{fig:intro_pipeline:b} separates three progressively finer
distinctions: local explanation, atom support, and physical mapping. The first
two are deployment-side questions; the third is limited by dictionary
calibration uncertainty. Unresolved stages yield local-explanation ambiguity,
support ambiguity, or group-level rather than element-level physical support.

Once the deployment-side distinctions are resolved, the optimal physical
resolution for a coherent block with separation scale $s$ and $N$ calibration
signals is
\begin{equation}
\delta_{\mathrm{opt}}(N,s)
\asymp
s\wedge\frac{1}{\sqrt{N}s^2}.
\label{eq:intro_resolution_rate}
\end{equation}
Equivalently,
\begin{equation}
\frac{\delta_{\mathrm{opt}}(N,s)}{s}
\asymp
\min\left\{1,\frac{1}{\sqrt{Ns^6}}\right\},
\label{eq:intro_relative_resolution}
\end{equation}
so $Ns^6$ governs whether calibration resolves physical structure within the
coherent block. The sixth-order scale comes from cubic orientation sensitivity
of the calibration law. Deployment replication adds orientation information
only through the coefficient-profiled task secant; equal coefficients can be
exactly deployment-invariant.

\begin{manuscriptwidefigure}
\centering
\subfloat[Retain--project--coarsen pipeline. Calibration and deployment data constrain joint dictionary--support--coefficient explanations, which are projected to physical-support space and coarsened to the finest conclusion shared by all survivors.
\label{fig:intro_pipeline:a}]{%
  \includegraphics[width=.92\textwidth]
  {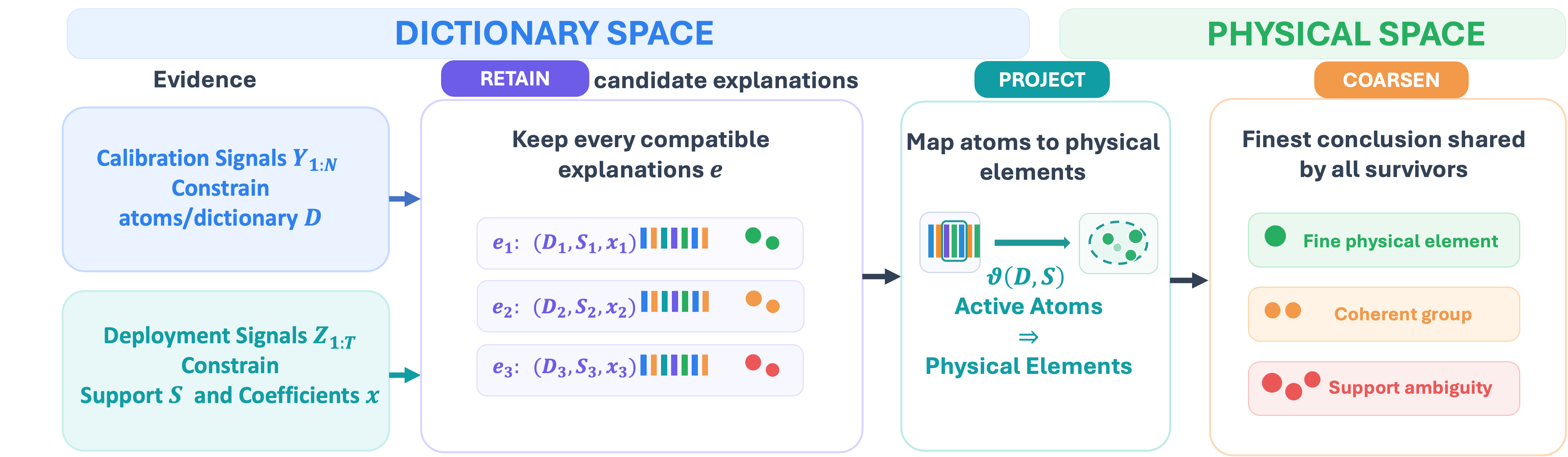}%
}

\medskip
\subfloat[Local resolution gates and reporting consequences.  Unresolved deployment information retains competing local explanations or atom supports; unresolved calibration mapping yields only a group-level conclusion; resolving all three gates supports element-level identification within the supplied coherent block.\label{fig:intro_pipeline:b}]{%
  \includegraphics[width=.92\textwidth]
  {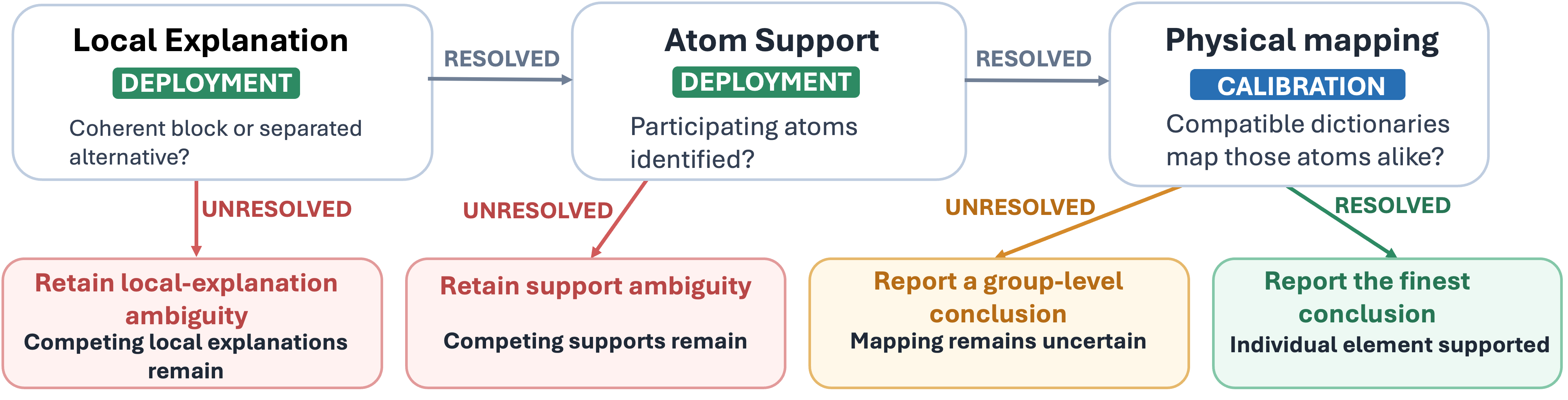}%
}

\caption{Resolution-aware physical-support inference after dictionary
learning. (a) illustrates the retain--project--coarsen principle for
joint dictionary--support--coefficient explanations. (b) specializes
the framework to one supplied coherent block and shows how unresolved
deployment and calibration uncertainty limit the attainable physical
resolution.}
\label{fig:intro_pipeline}
\end{manuscriptwidefigure}

The continuous correspondence supplies the local statistical benchmark. For
computation, active endpoint bracketing (AEB) operates on a specified finite
bank that may contain several simultaneously active components, adaptively
queries certificate-relevant candidates, and certifies a fine, coarser, or
ambiguous same-bank report, otherwise returning \textsc{abstain}. Its
statistical validity is on-bank; the finite bank is not assumed to approximate
or outer-cover the continuous parameter space.

The main contributions are:
\begin{enumerate}
\item We formulate physical-support uncertainty after dictionary learning and
construct an exact cross-dictionary confidence correspondence with conditional
coverage in the stated fixed-dimensional model.

\item We derive the minimax law in
Equation~\eqref{eq:intro_resolution_rate}, identify $Ns^6$ as the calibration
orientation-information scale, and characterize deployment information through
the coefficient-profiled task secant.

\item We develop AEB, an adaptive finite-bank method with deterministic
same-bank certification and on-bank statistical validity that safely coarsens
or abstains when a finer report is not certified.

\item Finite-bank studies show that point-valued plug-in selection can be
physically overprecise, whereas AEB recovers justified conclusions with
decision-dependent query savings.
\end{enumerate}

Related work falls into three categories:
\begin{itemize}
\item Dictionary-learning theory studies identifiability, error, sample
complexity, and minimax recovery of the dictionary
\citep{GribonvalSchnass2010,WuYu2018,GribonvalJenattonBach2015,JungEldarGoertz2016};
recent work also treats uncertainty for learned sparse estimators
\citep{HoppeEtAl2023,HoppeEtAl2024}. We instead propagate dictionary
uncertainty to the physical support of a separate deployment signal.

\item Group and hierarchical inference retains coarser conclusions under
collinearity \citep{Meinshausen2015,MandozziBuhlmann2016}, while sparse and
model confidence sets retain compatible supports
\citep{NicklVandeGeer2013,LiEtAl2019,LewisBattey2025}. These methods generally
treat the design as known rather than propagating calibration uncertainty.

\item Orbit estimation, singular models, weak identification, and set
identification provide related geometric and inferential perspectives
\citep{BandeiraEtAl2023,HoNguyen2019,Kaji2021,ChernozhukovHongTamer2007}.
Here the weakly identified geometry determines physical-support resolution for
a deployment signal.
\end{itemize}

%% file: sections/02_model.tex
\section{Model and Cross-Dictionary Physical-Support Inference}
\label{sec:problem}

We consider a fixed-dimensional train--test problem in which calibration data
constrain a dictionary and independent deployment data constrain a sparse
representation; both uncertainties are retained before projection to physical
support.

\subsection{Local calibration and deployment model}

We analyze one supplied coherent block of $q$ atoms and a well-separated
competing atom. For fixed $q\ge4$ and $n=q+1$, write
$
    D=(d_1,\ldots,d_q,a)\in\mathbb R^{q\times n},
$
with unit-norm columns, coherent block $d_1,\ldots,d_q$, and separated atom
$a$. Matching block and ambient dimensions gives a minimal local coordinate
representation and does not restrict the full dictionary size.

We use the star to denote the true parameter value.
The calibration sample consists of $N$ independent latent sparse mixtures perturbed by Gaussian noise:
\begin{equation}
    Y_i=D_\star c_i+\xi_i,
    \qquad
    \xi_i\sim\mathcal N_q(0,\nu_\star I_q),
    \qquad
    i=1,\ldots, N,
    \label{eq:training-model}
\end{equation}
where neither $c_i$ nor its support is observed and
$\nu_\star\in[\nu_-,\nu_+]\Subset(0,\infty)$. The calibration coefficients
follow the Bernoulli--Gaussian model
\begin{equation}
    c_{ij}=I_{ij}G_{ij},
    \qquad
    I_{ij}\stackrel{\mathrm{iid}}{\sim}
    \operatorname{Bernoulli}(p_\star),
    \qquad
    G_{ij}\stackrel{\mathrm{iid}}{\sim}\mathcal N(0,1),
    \label{eq:bg-code}
\end{equation}
with $p_\star\in[p_-,p_+]\Subset(0,1/2)$.

Components outside the supplied local subproblem are treated as resolved.
Independently, $T$ deployment measurements share the same sparse mean:
\begin{equation}
\begin{gathered}
    Z_\ell\stackrel{\mathrm{iid}}{\sim}
    \mathcal N_q(\mu_\star,\sigma_\star^2I_q),
    \qquad
    \ell=1,\ldots,T,\\
    \sigma_\star\in[\sigma_-,\sigma_+]\Subset(0,\infty).    
\end{gathered}
    \label{eq:test-model}
\end{equation}
Write $[q]=\{1,\ldots,q\}$ and fix a support size $2\le r\le q-1$.
Under the coherent-block explanation, the deployment mean is
\begin{equation}
\begin{gathered}
    \mu_\star
    =
    \sum_{j\in S_\star}x_{\star j}d_{\star j},
    \qquad
    S_\star\in\binom{[q]}r,\\
    x_{\star j}\in[\beta_-,\beta_+],
    \quad
    \beta_->0.
\end{gathered}
    \label{eq:fine-mean}
\end{equation}
The competing separated explanation is
\begin{equation}
    \mu_\star=\gamma_\star a_\star,
    \qquad
    \gamma_\star\in
    \Gamma_A=[\gamma_-,\gamma_+]\Subset(0,\infty),
    \label{eq:anchor-mean}
\end{equation}
denoted by $S_\star=\partial$. Deployment profiling first distinguishes this
separated alternative from the coherent-block explanation and then identifies
the atom support within the block; the remaining uncertainty is its physical
mapping.

\subsection{Coherence scale and physical-support target}

Fix a unit reference direction $u\in\mathbb R^q$, let $U=u^\perp$, and let
$v_1,\ldots,v_q\in U$ be a fixed centered regular-simplex template. Its exact
normalization identities are given in Supplementary
Section~\ref{sec:supp-local-coordinate-details}.

For $z\in U$ with $\|z\|<1$, define the unit-norm atom near $u$ by
\begin{equation}
    d(z)=\sqrt{1-\|z\|^2}\,u+z.
    \label{eq:atom-chart}
\end{equation}
An affine transformation of this template parameterizes the coherent block:
\begin{equation}
    z_j=b+Lv_j,
    \qquad
    d_j=d(z_j),
    \qquad
    j=1,\ldots,q,
    \label{eq:affine-simplex}
\end{equation}
where $b\in U$ is the center and invertible $L:U\to U$ controls scale,
orientation, and anisotropy. For supplied separation scale
$s\in(0,s_0]$, with $C_0s_0<1$, assume
\begin{equation}
\begin{gathered}
    \max_j\|b+Lv_j\|\le C_0s,
    \\
    \kappa_-s
    \le
    \sigma_{\min}(L)
    \le
    \sigma_{\max}(L)
    \le
    \kappa_+s.
\end{gathered}
    \label{eq:shell}
\end{equation}
These conditions keep the block in an $O(s)$ neighborhood of $u$ without directional
collapse.

The simplex geometry, shell bounds, and local chart imply
\begin{equation}
    \|d_i-d_j\|_2\asymp s,
    \qquad
    1-d_i^\top d_j
    =
    \frac12\|d_i-d_j\|_2^2
    \asymp s^2,
    \qquad i\ne j,
    \label{eq:s-coherence}
\end{equation}
so smaller $s$ means greater coherence.

For a reference $a_0$ separated from $u$ by a fixed positive angle, let
$P_a=aa^\top$ and $P_{a_0}=a_0a_0^\top$ and require
\begin{equation}
    \|P_a-P_{a_0}\|_\mathrm{F}\le C_as,
    \qquad
    0<\nu_-\le\nu\le\nu_+<\infty.
    \label{eq:anchor-noise-shell}
\end{equation}

Atom sign is physically irrelevant, so a unit atom represents the projective
element $[d]=\operatorname{span}\{d\}\in\mathbb{RP}^{q-1}$. Coherent-atom
ordering is also irrelevant, so $[D]$ denotes the orbit under simultaneous
permutation. The sign convention and quotient metric are given in
Supplementary Section~\ref{sec:supp-local-coordinate-details};
$\mathcal Q^D_{q,s}$ denotes the resulting dictionary shell.

The coordinate support $S$ indexes atoms in a particular dictionary,
whereas our inferential target is the \emph{physical support}: the set of physical elements represented by the active atoms.
For a coherent-block support $S$, the corresponding physical-support target is
\begin{equation}
    \vartheta(D,S)
    =
    \left(1,\{[d_j]:j\in S\}\right),
    \qquad
    \vartheta(D,\partial)
    =
    \left(0,\{[a]\}\right).
    \label{eq:physical-target}
\end{equation}
Write $\vartheta_\star=\vartheta(D_\star,S_\star)$. The binary component
marks the local explanation and the set component is its physical support.
The target takes values in
\begin{equation}
    \mathfrak V_q
    =
    \{0,1\}_{\mathrm{disc}}
    \times
    \mathcal K(\mathbb{RP}^{q-1}),
    \label{eq:target-space-main}
\end{equation}
where $\mathcal K(E)$ denotes the nonempty compact subsets of $E$; the full
marked metric is given in Supplementary
Section~\ref{sec:supp-local-coordinate-details}.

For unit atoms $d,d'$, use the sign-invariant projective distance
\begin{equation}
    d_{\mathrm{pr}}([d],[d'])
    =
    \sqrt{1-(d^\top d')^2},
    \label{eq:projective-distance}
\end{equation}
and let $d_H^{\mathrm{pr}}$ be its induced Hausdorff distance. For a retained
set $C$ of targets $(m,A)$, define the physical diameter
\begin{equation}
    \operatorname{diam}_{\mathrm{pr}}(C)
    =
    \sup_{(m,A),(m',A')\in C}d_H^{\mathrm{pr}}(A,A').
    \label{eq:physical-diameter}
\end{equation}
The full marked target-space metric is recorded in Supplementary
Section~\ref{sec:supp-local-coordinate-details}.

\begin{assumption}[Fixed-dimensional coherent train--test model]
\label{ass:frozen-shell}
The calibration and deployment models satisfy
Equations~\eqref{eq:training-model}--\eqref{eq:anchor-mean}; the coherent block, separated atom, and calibration-noise variance satisfy Equations~\eqref{eq:shell}
and~\eqref{eq:anchor-noise-shell}.
The calibration coefficient law is the Bernoulli--Gaussian model in
Equation~\eqref{eq:bg-code}.  Deployment coefficients satisfy
$x_j\in[\beta_-,\beta_+]$ with $\beta_->0$, and
$\sigma\in[\sigma_-,\sigma_+]\Subset(0,\infty)$.
The ambient dimension $q$, support size $r$, compact
nuisance ranges, and shell constants are fixed.
The scale $s\in(0,s_0]$ is supplied and is not estimated from the data.
\end{assumption}

Let $\mathfrak P^p_{q,r}(s)$ denote the resulting joint parameter class, including both local deployment explanations.
This oracle-favorable local formulation isolates physical-resolution loss due
to uncertainty in the coherent dictionary geometry.

\subsection{Cross-dictionary confidence correspondence}
Fix calibration and deployment error levels $\alpha_D,\alpha_T\in(0,1)$.
For calibration parameter $\theta=(p,\nu,[D])$, use the estimable second
moment and fourth cumulant
\begin{equation}
    M_2(\theta)=\mathbb E_\theta(YY^\top),
    \qquad
    K_4(\theta)=\operatorname{cum}_{4,\theta}(Y).
    \label{eq:training-moments}
\end{equation}
which constrain the local dictionary geometry under Bernoulli--Gaussian
coding.

\begin{lemma}[Population moment identifiability]
\label{lem:moment-reduction}
Under Bernoulli--Gaussian coding with $p<1/2$, the population pair
$(M_2,K_4)$ identifies $p$, $\nu$, and the aggregate second- and fourth-order
dictionary coordinates used below.
\end{lemma}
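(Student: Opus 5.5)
The plan is to compute $(M_2,K_4)$ explicitly under the Bernoulli--Gaussian model~\eqref{eq:bg-code} and then invert, using only two facts: the columns of $D$ have unit norm and $n=q+1$ is fixed. The \emph{aggregate second- and fourth-order dictionary coordinates} will be
\[
G_2(D)=\sum_{j=1}^n d_jd_j^\top=DD^\top,
\qquad
T_4(D)=\sum_{j=1}^n d_j^{\otimes4}.
\]
Both are invariant under simultaneous permutation and under sign flips of atoms, so they are functions of $[D]$ and are well defined on the parameter $\theta=(p,\nu,[D])$.

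\textbf{Step 1: the second moment.} Independence of $c$ and $\xi$, together with $\mathbb E(cc^\top)=pI_n$ (since $\mathbb E c_j^2=p$ and the coordinates are independent with mean zero), gives
\[
M_2(\theta)=p\,G_2(D)+\nu I_q .
\]

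\textbf{Step 2: the fourth cumulant.} The calibration noise is Gaussian, so its fourth cumulant vanishes. Cumulants are additive over independent summands and multilinear, so $K_4$ reduces to the contribution of $Dc$. Because the coordinates $c_j$ are independent, all mixed cumulants vanish, and
\[
K_4(\theta)=\sum_j \kappa_4(c_j)\,d_j^{\otimes4}.
\]
Here $\kappa_4(c_j)=\mathbb E c_j^4-3(\mathbb E c_j^2)^2=3p-3p^2$, so
\[
K_4(\theta)=3p(1-p)\,T_4(D).
\]

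\textbf{Step 3: inversion.} I will contract $K_4$ over its last two indices. Since $\|d_j\|=1$, this gives
\[
\sum_c (K_4)_{abcc}=3p(1-p)\sum_j \|d_j\|^2 d_jd_j^\top=3p(1-p)\,G_2 .
\]
Taking a further trace gives $3p(1-p)\,n$, with $n=q+1$ known, so $t:=p(1-p)$ is identified. The map $p\mapsto p(1-p)$ is strictly increasing on $(0,1/2)$, so $p$ is recovered uniquely as
\[
p=\tfrac12\bigl(1-\sqrt{1-4t}\bigr).
\]
This is exactly where the hypothesis $p<1/2$ enters; without it, $p$ and $1-p$ would be confounded. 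With $p$ in hand:
\begin{itemize}
\item $T_4=K_4/(3t)$ and $G_2=\bigl(\sum_c (K_4)_{\cdot\cdot cc}\bigr)/(3t)$ are identified;
\item $\nu$ follows from $\nu I_q=M_2-pG_2$, for instance $\nu=(\operatorname{tr}M_2-pn)/q$, using $\operatorname{tr}G_2=n$.
\end{itemize}
Since $t>0$ whenever $p\in(0,1/2)$, the divisions are legitimate.

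\textbf{Main obstacle.} The only delicate point is Step 2: one must justify that the fourth cumulant of $Y$ has no cross terms. This follows from independence of the coordinates of $c$ and from the Gaussian noise contributing zero fourth cumulant. I will state it via multilinearity and additivity of joint cumulants. Everything else is algebra relying on unit-norm atoms and known $n$.
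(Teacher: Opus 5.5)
Your proposal is correct and is essentially the paper's argument. Your $G_2(D)$ and $T_4(D)$ coincide with the paper's $\Sigma_2(D)=\sum_j P_j$ and $\Sigma_4(D)=\sum_j P_j^{\odot2}=\sum_j d_j^{\otimes4}$; rename the first, since the paper reserves $G_2$ for $LL^\top$. From there you follow the same route: $M_2=\nu I_q+p\Sigma_2$, $K_4=3p(1-p)\Sigma_4$, $\operatorname{Tr}_2(K_4)/(3n)=p(1-p)$, the $p<1/2$ branch, and explicit inversion for $\nu$, $\Sigma_2$, $\Sigma_4$.

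The differences are minor. You derive $K_4$ from independence and multilinearity of cumulants, whereas the paper conditions on the active set and uses $K_4=3\{\mathbb E(C_I^{\odot2})-(\mathbb E C_I)^{\odot2}\}$, so that the same computation also covers its dependent exchangeable coefficient law. You also read $\Sigma_2$ off a partial contraction of $K_4$ rather than from $(M_2-\nu I_q)/p$.
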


The notation, inversion formulas, and proof are given in Supplementary
Sections~\ref{sec:supp-moment-identification}
and~\ref{sec:supp-proof-training-geometry}. Let robust estimators
$\widehat M_2,\widehat K_4$ and radii $\epsilon_{2,N},\epsilon_{K,N}$ satisfy
\begin{equation}
\begin{aligned}
&
\inf_{p,\nu,D}
\mathbb P_{p,\nu,D}^{N}
\left\{
    \|\widehat M_2-M_2\|_F\le\epsilon_{2,N},
    \;
    \|\widehat K_4-K_4\|_F\le\epsilon_{K,N}
\right\}
\\[-1mm]
&\hspace{18mm}\ge1-\alpha_D.
\end{aligned}
\label{eq:training-rectangle}
\end{equation}
Supplementary Section~\ref{sec:supp-mom-region} gives an explicit
coordinatewise median-of-means construction with $O(N^{-1/2})$ radii above a
fixed-dimensional threshold and retains the full compact shell below it.
The calibration-compatible region is
\begin{equation}
\widehat{\mathcal K}_{q,s}^{p}
=
\left\{
\begin{aligned}
    &(p,\nu,[D]):\\
    p\in[p_-,p_+],\;
    \nu\in[\nu_-,\nu_+],\;
    [D]&\in\mathcal Q^D_{q,s},\\
    \|\widehat M_2-M_2(p,\nu,D)\|_F
    &\le\epsilon_{2,N},\\
    \|\widehat K_4-K_4(p,\nu,D)\|_F
    &\le\epsilon_{K,N}
\end{aligned}
\right\}.
\label{eq:training-profile}
\end{equation}
On the event in Equation~\eqref{eq:training-rectangle}, it contains
$(p_\star,\nu_\star,[D_\star])$.

For deployment, define the sample mean and common confidence radius
\begin{equation}
    \bar Z
    =
    \frac{1}{T}\sum_{\ell=1}^{T}Z_\ell,
    \qquad
    \tau_{T,q}
    =
    \frac{\sigma_+}{\sqrt T}
    \sqrt{\chi^2_{q,1-\alpha_T}},
    \label{eq:test-radius}
\end{equation}
where $\chi^2_{q,1-\alpha_T}$ is the indicated $\chi_q^2$ quantile. Uniformly
over the allowed noise range, the event
\begin{equation}
    \|\bar Z-\mu_\star\|_2\le\tau_{T,q}
    \label{eq:test-truth-event}
\end{equation}
has probability at least $1-\alpha_T$.

For retained $D$, profile the deployment residual over unknown coefficients:
\begin{equation}
    \ell_{\mathrm{prof}}(\bar Z,D,S)
    =
    \min_{(x_j)_{j\in S}\in[\beta_-,\beta_+]^r}
    \left\|
        \bar Z-\sum_{j\in S}x_jd_j
    \right\|_2,
    \label{eq:fine-profile-loss}
\end{equation}
and for the separated alternative,
\begin{equation}
    \ell_{\mathrm{prof}}(\bar Z,D,\partial)
    =
    \min_{\gamma\in\Gamma_A}
    \|\bar Z-\gamma a\|_2.
    \label{eq:anchor-profile-loss}
\end{equation}
Interpreting $(D,S)$ modulo simultaneous coherent-atom permutation, combine
calibration and deployment compatibility in
\begin{equation}
\mathcal R_{q,r,s}^{p}
=
\left\{
(p,\nu,[D,S]):
\begin{array}{l}
    (p,\nu,[D])\in\widehat{\mathcal K}_{q,s}^{p},\\
    S\in\binom{[q]}r\cup\{\partial\},\\
    \ell_{\mathrm{prof}}(\bar Z,D,S)\le\tau_{T,q}
\end{array}
\right\}.
\label{eq:raw-profile}
\end{equation}
All candidates use the same confidence ball for the unknown deployment mean,
so this truth-retention guarantee needs no multiplicity correction over
dictionaries or supports.

Projecting the joint feasible explanations through $\vartheta$ gives
\begin{equation}
\widehat{\mathfrak C}_{q,r,s}^{p}
=
\left\{
    \vartheta(D,S):
    \exists\,p,\nu\text{ such that }
    (p,\nu,[D,S])\in\mathcal R_{q,r,s}^{p}
\right\},
\label{eq:reported-set}
\end{equation}
when $\mathcal R_{q,r,s}^{p}\ne\varnothing$; record
$F_{\mathrm{empty}}=\mathbf1\{\mathcal R_{q,r,s}^{p}=\varnothing\}$ otherwise.
Formal totalization and measurability are given in Supplementary
Section~\ref{sec:supp-measurability}.

\begin{proposition}[Truth retention of the cross-dictionary confidence correspondence]
\label{prop:profile-well-defined}
On the intersection of the calibration event in
Equation~\eqref{eq:training-rectangle} and the deployment event in~\eqref{eq:test-truth-event}, the true explanation satisfies $
    (p_\star,\nu_\star,[D_\star,S_\star])
    \in
    \mathcal R_{q,r,s}^{p}
$.
Consequently, we have $
    \vartheta_\star
    \in
    \widehat{\mathfrak C}_{q,r,s}^{p},
    F_{\mathrm{empty}}=0
$.
\end{proposition}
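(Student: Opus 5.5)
The argument unwinds the definitions in Equations~\eqref{eq:training-profile}, \eqref{eq:raw-profile} and~\eqref{eq:reported-set}. It checks each membership condition at the true parameter, on the intersection of the two events. No probabilistic argument is needed beyond the two stated events; the content is deterministic.

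\emph{Calibration membership.} Under Assumption~\ref{ass:frozen-shell}, the true parameter satisfies $p_\star\in[p_-,p_+]$, $\nu_\star\in[\nu_-,\nu_+]$ and $[D_\star]\in\mathcal Q^D_{q,s}$, the last because $D_\star$ obeys the shell conditions~\eqref{eq:shell} and~\eqref{eq:anchor-noise-shell}. On the event in Equation~\eqref{eq:training-rectangle}, evaluated at $(p_\star,\nu_\star,D_\star)$, we have $\|\widehat M_2-M_2(p_\star,\nu_\star,D_\star)\|_F\le\epsilon_{2,N}$ and likewise for $K_4$. Since $M_2$ and $K_4$ are invariant under simultaneous sign and permutation changes of coherent atoms, these constraints are well defined on the orbit $[D_\star]$. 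Hence $(p_\star,\nu_\star,[D_\star])\in\widehat{\mathcal K}^p_{q,s}$.

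\emph{Deployment membership.} The argument splits by explanation type.
\begin{itemize}
\item If $S_\star\in\binom{[q]}r$, the true coefficients $(x_{\star j})_{j\in S_\star}$ lie in $[\beta_-,\beta_+]^r$. They are therefore feasible in~\eqref{eq:fine-profile-loss}, so
\[
\ell_{\mathrm{prof}}(\bar Z,D_\star,S_\star)\le\|\bar Z-\mu_\star\|_2\le\tau_{T,q}
\]
on the event~\eqref{eq:test-truth-event}.
\item If $S_\star=\partial$, then $\gamma_\star\in\Gamma_A$ is feasible in~\eqref{eq:anchor-profile-loss}, and the same bound follows.
\end{itemize}
The minimum in the profile loss is attained because the coefficient sets are compact and the objective is continuous. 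The bound does not require attainment, however, since the value at a feasible point already dominates the minimum.

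\emph{Consequences.} Together these give $(p_\star,\nu_\star,[D_\star,S_\star])\in\mathcal R^p_{q,r,s}$, so $\mathcal R^p_{q,r,s}\neq\varnothing$ and $F_{\mathrm{empty}}=0$. Taking $p=p_\star$ and $\nu=\nu_\star$ as the witnesses in~\eqref{eq:reported-set} yields $\vartheta(D_\star,S_\star)=\vartheta_\star\in\widehat{\mathfrak C}^p_{q,r,s}$.

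The only delicate point is representation invariance. The profile loss must be independent of the orbit representative $(D,S)$, and $\vartheta$ must be well defined on orbits. Simultaneous permutation of atoms and support indices leaves the sum $\sum_{j\in S}x_jd_j$ unchanged up to relabeling the coefficients. A sign flip of $d_j$ would require $x_j\mapsto -x_j$, which could leave $[\beta_-,\beta_+]$. I would resolve this by invoking the fixed sign convention of Supplementary Section~\ref{sec:supp-local-coordinate-details}: evaluate at the sign-normalized representative, for which the true $D_\star$ is the representative used in the model. The target $\vartheta$ uses projective classes $[d_j]$ and is therefore sign-invariant automatically.
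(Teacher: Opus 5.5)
Your proposal is correct and follows the paper's argument: on the calibration event the true $(p_\star,\nu_\star,[D_\star])$ satisfies the moment constraints defining $\widehat{\mathcal K}^{p}_{q,s}$, and on the deployment event the true support with its coefficients (or the true amplitude $\gamma_\star$) is a feasible witness, so $\ell_{\mathrm{prof}}\le\|\bar Z-\mu_\star\|_2\le\tau_{T,q}$; nonemptiness and $\vartheta_\star\in\widehat{\mathfrak C}^{p}_{q,r,s}$ follow. Your remarks on permutation and sign invariance agree with the paper's fixed sign convention. The paper's additional appeal to the median-of-means construction and to measurability serves the surrounding coverage theorem and is not needed for this deterministic implication.
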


The proof and regularity details are in Supplementary
Sections~\ref{sec:supp-measurability}
and~\ref{sec:supp-proof-honest-region}. If both local explanations survive,
the local explanation is unresolved; if only the coherent-block explanation
survives with several supports, atom support is ambiguous. A single surviving
support can still have uncertain physical mapping across
calibration-compatible dictionaries.

We use this continuous correspondence as the statistical benchmark and
develop the finite-bank computational procedure separately in
Section~\ref{sec:finite-bank-controller}.

%% file: sections/03_method.tex
\section{Coherence, Information, and Minimax Physical Resolution}
\label{sec:results}

We now ask how accurately the physical support can be localized as the coherent atoms become increasingly similar.  
The analysis has three steps. First, we determine how the calibration information about coherent-block orientation scales with coherence.  
Then, we translate this information into the physical resolution of the cross-dictionary correspondence and establish a matching minimax lower bound.  
Finally, we determine when additional deployment measurements can improve this calibration-limited resolution.

\subsection{Calibration information under high coherence}

The main calibration bottleneck is that different orientations of a highly coherent block can induce nearly indistinguishable calibration distributions. To identify where orientation first appears, we use a third-order tensor representation.  
For a vector $v$, let $v^{\otimes3}=v\otimes v\otimes v$ denote the rank-one third-order tensor with entries $(v^{\otimes3})_{abc}=v_av_bv_c$.  We then define
\begin{equation}
    T_q=\sum_{j=1}^q v_j^{\otimes3},
    \qquad
    G_2=LL^\top,
    \qquad
    G_3=L^{\otimes3}T_q,
    \label{eq:result-invariants}
\end{equation}
where $L^{\otimes3}=L\otimes L\otimes L$ applies $L$ along each tensor mode.  
Here, $G_2$ describes the second-order shape of the local block, whereas $G_3$ is the first permutation-invariant coordinate that retains its residual orientation.
Because $L$ has scale $s$, $G_3$ has scale $s^3$.

To isolate this orientation effect, we fix a unit vector $e_1\in U$ and a constant $\lambda_\star\in(\kappa_-,\kappa_+)\cap(0, C_0)$, and consider the centered balanced submodel:
\begin{equation}
\begin{gathered}
    a_0=(u+e_1)/\sqrt2,
    \qquad
    b=0,
    \\
    L=\lambda_\star sR,
    \qquad
    R\in O(U),
    \qquad
    P_a=P_{a_0}.
\end{gathered}
    \label{eq:balanced-hard-core}
\end{equation}
where $O(U)$ is the orthogonal group on $U$.  In this submodel, only the orientation $R$ varies, while the remaining local geometry is left unaltered.

The cubic scaling of $G_3$ identifies where residual orientation enters the local geometry, but it does not yet determine how strongly that orientation is visible in the calibration data.  
To quantify statistical distinguishability, we therefore perturb the block orientation along a local path $R_t=R\exp(t\Omega)$ and examine the resulting first-order change in the calibration density. The next result shows that this change in density is itself of order $s^3$.  
Consequently, the corresponding Fisher information is of order $s^6$, which is the calibration information scale governing the physical resolution derived below.

Let $f_{s,R}$ denote the density of one calibration observation on this
submodel, with all remaining calibration parameters held fixed at compact
interior values, and let $f_0$ denote the collapsed $s=0$ calibration model.

\begin{theorem}[Calibration information for coherent-block orientation]
\label{thm:training-geometry}
Under Assumption~\ref{ass:frozen-shell}, there exist constants
$0<c<C<\infty$ such that the following statements hold.
For $R_t=R\exp(t\Omega)$ with $\Omega^\top=-\Omega$,
\begin{equation}
\left.
\frac{\partial}{\partial t}
 f_{s,R_t}
\right|_{t=0}
=
 s^3g_{R,\Omega}
+
O_{L^2(f_0^{-1})}
\left(s^4\|\Omega\|_\mathrm{F}\right),
\label{eq:cubic-density-score}
\end{equation}
where $g_{R,\Omega}\neq0$ for every nonzero orientation tangent modulo
coherent-atom permutation.

More generally, let $I_R^{\mathrm{eff}}(\Omega;\theta)$ denote the efficient
Fisher information for an orientation perturbation after profiling the regular
nuisance parameters at calibration parameter $\theta$. Uniformly over compact
interior submodels,
\begin{equation}
    cs^6\|\Omega\|_\mathrm{F}^2
    \le
    I_R^{\mathrm{eff}}(\Omega;\theta)
    \le
    Cs^6\|\Omega\|_\mathrm{F}^2.
    \label{eq:profiled-orientation-information}
\end{equation}
\end{theorem}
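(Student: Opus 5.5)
The plan is to view one calibration observation as a Gaussian scale mixture, $Y\mid I\sim\mathcal N_q(0,\nu I_q+D_ID_I^\top)$, where $I\in\{0,1\}^n$ is the Bernoulli activity pattern. Then $f_{s,R}=\sum_I \pi_p(I)\,\phi(\,\cdot\,;\Sigma_I(s,R))$ with $\Sigma_I=\nu I_q+\sum_{j\in I}d_jd_j^\top$. Along the path $R_t=R\exp(t\Omega)$ each $d_j=d(\lambda_\star sR_tv_j)$ moves. By the chart in Equation~\eqref{eq:atom-chart},
\[
d_jd_j^\top=uu^\top+u z_j^\top+z_ju^\top+\bigl(z_jz_j^\top-\|z_j\|^2uu^\top\bigr)+O(s^3),\qquad z_j=\lambda_\star sRv_j .
\]

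I would Taylor-expand the mixture density in the perturbation $\Delta_I=\Sigma_I-\Sigma_I^{(0)}$ around the collapsed covariances $\Sigma_I^{(0)}=\nu I_q+|I\cap[q]|\,uu^\top+\mathbf 1\{a\in I\}aa^\top$. This gives a series of Hermite-type terms $\sum_I\pi(I)\,\mathrm{He}_m(\Delta_I)$. Because the coefficients are i.i.d.\ and $\pi$ is exchangeable over the coherent atoms, the order-$m$ term in $s$ depends on $R$ only through symmetric power sums over $j$. These are the aggregate moment coordinates of Lemma~\ref{lem:moment-reduction}.

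At order $s$ the relevant sum is $\sum_j z_j=\lambda_\star sR\sum_jv_j=0$, since the template is centered. At order $s^2$ the sums are $\sum_j z_jz_j^\top=\lambda_\star^2s^2RV R^\top$ and the cross products $z_iz_j^\top$ summed symmetrically. For the regular simplex $\sum_jv_jv_j^\top$ is a multiple of $P_U$, so $RVR^\top$ is $R$-invariant. Also $\sum_{i\ne j}z_iz_j^\top=(\sum z)(\sum z)^\top-\sum z_jz_j^\top$. Hence every order-$s^2$ coefficient is constant in $R$, and its $t$-derivative vanishes. At order $s^3$ the cubic sums $\sum_j z_j^{\otimes3}=\lambda_\star^3s^3R^{\otimes3}T_q$ appear, i.e.\ $G_3$ from Equation~\eqref{eq:result-invariants}. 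They enter through the third-order Hermite term together with the $u$-direction cross terms.

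Differentiating in $t$ gives $s^3 g_{R,\Omega}$, where $g_{R,\Omega}$ is a fixed linear functional of $\frac{d}{dt}R_t^{\otimes3}T_q=R^{\otimes 3}(\Omega\otimes I\otimes I+\cdots)T_q$ applied to Gaussian Hermite polynomials under $f_0$. The remainder is $O(s^4\|\Omega\|_F)$ in $L^2(f_0^{-1})$. This follows because all $\Sigma_I$ stay in a compact set of positive definite matrices, so the Gaussian ratios are square-integrable against $f_0^{-1}$ uniformly.

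For $g_{R,\Omega}\ne0$ I would use two facts. The map $\Omega\mapsto \Omega\cdot T_q$ (the infinitesimal action on the simplex cubic) has kernel equal to the Lie algebra of the simplex symmetry group. That group is the symmetric group $S_q$, which is finite, so the kernel is trivial modulo permutations. Injectivity of the Hermite/cumulant map from cubic symmetric tensors to densities then follows because the third-order fourth-cumulant contribution (involving $u\otimes G_3$-type terms with weight depending on $p<1/2$) is nonzero. This is the inversion of Lemma~\ref{lem:moment-reduction}.

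For Equation~\eqref{eq:profiled-orientation-information}, the upper bound follows from $I\le \int (\partial_tf)^2/f\le C\int(\partial_t f)^2/f_0$, using $f\asymp f_0$ on compacts, together with Equation~\eqref{eq:cubic-density-score}. For the lower bound, the efficient information equals $s^6\|\Pi^\perp g_{R,\Omega}\|^2_{L^2(f_0^{-1})}+O(s^7)$, where $\Pi^\perp$ projects off the nuisance scores. The nuisances are $p$, $\nu$, $b$, the $G_2$ shape and $P_a$; their scores enter at orders $s^0$–$s^2$ through $M_2$ and the even-order parts of $K_4$.

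The \textbf{main obstacle} is showing that $g_{R,\Omega}$ is not in the closed span of those nuisance scores, even after rescaling nuisance directions by powers of $s$. A nuisance perturbation of size $s^{-k}$ could otherwise cancel it. I would handle this with a parity argument under $z\mapsto -z$ (reflection in $U$ fixing $u$). The orientation score is odd in $U$, because it is cubic in $z$. The nuisance scores for $p,\nu,G_2$ are even. The $b$-score is odd but lies in first-order tensor directions, which are orthogonal to the cubic harmonic component.

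With these orthogonalities, the angle between the orientation score and the nuisance span is bounded below by compactness over $(R,\Omega)$ on the unit sphere modulo $S_q$. This gives $\|\Pi^\perp g\|^2\ge c\|\Omega\|_F^2$ uniformly, and hence the lower bound $c s^6\|\Omega\|_F^2$.
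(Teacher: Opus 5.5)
Your derivation of Equation~\eqref{eq:cubic-density-score} is essentially the paper's: projector expansion about the collapsed covariances, exchangeable averaging down to the power sums, cancellation through degree two, first orientation dependence through $G_3$, and nonvanishing via the $p(1-p)$ cubic term of $K_4$ together with the finite stabilizer of $T_q$.

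The upper half of Equation~\eqref{eq:profiled-orientation-information} is also fine, once the pointwise claim $f\asymp f_0$ is replaced by a weighted Gaussian-derivative envelope and the mixture Cauchy--Schwarz inequality. The claim itself is false in the tails: along $u$ the collapsed all-active component is heavier-tailed, so $f_0/f$ is unbounded.

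The genuine gap is the lower bound on the profiled information, exactly at the step you flagged, because the parity argument fails for two reasons.

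\emph{The reflection is not a symmetry of the model.} The map $tu+x\mapsto tu-x$, $x\in U$, does not preserve the calibration law. The separated atom $a_0=(u+e_1)/\sqrt2$ has a nonzero $U$-component, so $f_0$ is neither reflection-invariant nor $O(U)$-invariant. Consequently odd and even functions are not orthogonal in $L^2(f_0^{-1})$, and the $\dot P_a$ score has no parity. The Schur-type orthogonality you use between the $b$-score and the harmonic cubic component is also lost. It fails anyway for anisotropic $L=sRG$, where $(sRG)^{\otimes3}(\Omega\cdot T_q)$ is not traceless.

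\emph{The nuisance scores are not even.} Even ignoring $a$, parity is by degree in $z$, and the nuisance scores are not even beyond leading order. A shape tangent $\dot z_j=RSv_j$ gives an even term of size $\asymp s\|S\|_F$ through $G_2$. It also gives an odd term of order $s^2\|S\|_F$ through its contribution $\mathcal S_{R,G,s}[S]$ to the derivative of $G_3$. The $p$ and $\nu$ scores likewise contain odd $O(s^3)$ parts from differentiating the cubic term. So a shape tangent with $\|S\|_F\sim s\|\Omega\|_F$ produces odd terms of exactly the orientation order $s^3\|\Omega\|_F$. Parity therefore cannot exclude the rescaled-nuisance cancellation you worried about. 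Compactness in $(R,\Omega)$ gives nothing uniform as $s\to0$, since the nuisance span itself depends on $s$.

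What is missing is a quantitative, scale-aware nondegeneracy of the whole linearized model. The paper obtains it without any orthogonality, in three steps:
\begin{enumerate}
\item The score--moment identity $D\mathbb E_\theta\Psi[h]=\mathbb E_\theta\{(\Psi-\mathbb E_\theta\Psi)\ell_{\theta,h}\}$, with $\Psi=(YY^\top,Y^{\otimes4})$, together with Cauchy--Schwarz gives $I_\theta(h)\ge c\|D\mathbb E_\theta\Psi[h]\|^2$.
\item Injectivity of the collapsed moment operator $\mathscr A_q$ of Lemma~\ref{lem:supp-collapsed-operator}, which includes the separated-atom tangent, plus an $O(s)$ remainder gives $\|D\mathbb E_\theta\Psi[h]\|\ge c\|DF_p(\theta)[h]\|$.
\item In the point-adapted chart $L_t=R(sG+tS)e^{t\Omega}$ one has the triangular mixed norm $\|DF_p(\theta)[h]\|^2\asymp|\dot p|^2+|\dot\nu|^2+\|\dot b\|^2+\|\dot P_a\|_F^2+s^2\|S\|_F^2+s^6\|\Omega\|_F^2$. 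This holds because $\|DG_2[h]\|_F\asymp s\|S\|_F$ by coercivity of $S\mapsto SG+GS$, while $\|\mathcal S_{R,G,s}[S]\|_F\le Cs^2\|S\|_F$ and $\|\Omega\cdot T_q\|_F^2=3\{q/(q-1)\}^3\|\Omega\|_F^2$. Any shape tangent large enough to cancel the cubic orientation term thus leaves a second-order residual of order $s^2\|\Omega\|_F$.
\end{enumerate}
Taking the infimum over nuisance tangents then yields $cs^6\|\Omega\|_F^2$ uniformly. Some argument of this triangular type is what your proposal needs in place of parity.
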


Thus one calibration observation carries only order-$s^6$ information about
the orientation of the coherent block. With $N$ independent calibration
signals, the relevant orientation-information scale is therefore
\begin{equation}
    I_D=Ns^6.
    \label{eq:calibration-information-scale}
\end{equation}
The all-pair statistical chord bounds and the inverse map from invariant
coordinates to physical dictionary geometry used in the proofs are given in
Supplementary Sections~\ref{sec:supp-invariant-coordinate-bounds}
and~\ref{sec:supp-simplex-quotient}.

Orientation uncertainty is only one source of physical imprecision.
Even if the dictionary has been localized to a small neighborhood, the deployment data may still be compatible with different atom supports or even with different local explanations.  
To determine when these ambiguities can be resolved, we need quantitative
lower bounds on how far apart competing deployment means remain after allowing
the dictionary to vary within quotient-dictionary radius $\rho$.

The next lemma provides these separation margins.  
The quantity $m_S(\rho)$ controls the separation between distinct coherent-block supports, whereas $m_G(\rho)$ controls the separation between a coherent-block explanation and the separated alternative.
These margins will be compared with the deployment uncertainty radius in the coverage and resolution bound below.
\begin{lemma}[Support and local-explanation separation across nearby dictionaries]
\label{lem:cross-dictionary-margins}
There exist fixed constants $c_D,g_G^0,C_S,C_G>0$ such that every dictionary
in the supplied local shell satisfies
\begin{equation}
    \left\|
        \sum_{j=1}^qh_jd_j
    \right\|_2
    \ge
    c_Ds\|h\|_2,
    \qquad
    h\in\mathbb R^q.
    \label{eq:lower-singular-margin}
\end{equation}
For $t\in\mathbb R$, write $[t]_+=\max\{t,0\}$.
If two candidate dictionaries have quotient-dictionary distance at most
$\rho$, then two distinct size-$r$ coherent-block supports are separated by
at least
\begin{equation}
    m_S(\rho)
    =
    \left[
        \sqrt{2}\,c_D\beta_-s
        -
        C_Sr\beta_+\rho
    \right]_+,
    \label{eq:support-margin}
\end{equation}
whereas a coherent-block mean and a separated-atom mean are separated by at
least
\begin{equation}
    m_G(\rho)
    =
    \left[
        g_G^0
        -
        C_G(r\beta_++\gamma_+)\rho
    \right]_+.
    \label{eq:parent-margin}
\end{equation}

The constants can be chosen so that $m_S(\rho)>0$ also guarantees a unique matching of the coherent atoms across the two dictionaries.
\end{lemma}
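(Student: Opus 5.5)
The plan has three parts: prove the singular-value bound for a single dictionary, derive both margins by triangle inequalities across two nearby dictionaries, and reduce the matching claim to the support margin.

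\textbf{Step 1: the lower singular bound.} Write $d_j=d(z_j)$ with $z_j=b+Lv_j$, and decompose $\sum_j h_jd_j$ into its $u$-component and its $U$-component:
\[
u^\top\sum_j h_jd_j=\sum_j h_j\sqrt{1-\|z_j\|^2},
\qquad
P_U\sum_j h_jd_j=\Big(\sum_j h_j\Big)b+L\sum_j h_jv_j .
\]
Split $h=\bar h\mathbf 1+h^\perp$ with $\sum_j h^\perp_j=0$. Since the simplex template spans $U$ and satisfies $\sum_j v_j=0$, the map $h^\perp\mapsto\sum_j h^\perp_jv_j$ is injective on $\mathbf 1^\perp$ (the dimension count is $q-1$). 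Using $\sigma_{\min}(L)\ge\kappa_-s$, this gives $\|L\sum_j h_j^\perp v_j\|\ge c\kappa_-s\|h^\perp\|$.

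For the mean direction, $\sqrt{1-\|z_j\|^2}=1-O(s^2)$, so the $u$-component equals $q\bar h+O(s^2\|h\|)$. The cross terms are of order $\|b\|\,|\bar h|$ together with $O(s^2\|h\|)$ corrections.

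I would combine the two pieces in cases. If $|\bar h|\ge\eta s\|h^\perp\|$, the $u$-component dominates and is $\gtrsim|\bar h|\gtrsim s\|h\|$. Otherwise the $U$-component is bounded below by $c\kappa_-s\|h^\perp\|-C_0s|\bar h|\cdot q$, which is $\gtrsim s\|h\|$ once $\eta$ is chosen small. With $s_0$ small this yields $c_D$.

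An alternative route is to note that the Gram matrix $D_B^\top D_B=\mathbf 1\mathbf 1^\top-\tfrac12(\|d_i-d_j\|^2)_{ij}$ has its smallest eigenvalue $\asymp s^2$, using \eqref{eq:s-coherence} and the simplex structure. The cancellation of the rank-one term against the small block is the delicate point, so I would rely on the coordinate argument above.

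\textbf{Step 2: support margin.} Take $S\ne S'$ with $|S|=|S'|=r$. Suppose the dictionaries satisfy $D,D'$ at quotient distance $\le\rho$, and fix the optimal permutation and signs aligning them. Then:
\begin{itemize}
\item $\|d_j-d'_{\pi(j)}\|\lesssim\rho$, because the projective distance and the chord distance are comparable for sign-aligned nearby unit vectors.
\item $\sum_{S}x_jd_j-\sum_{S'}x'_jd'_{\pi^{-1}(j)}=\sum_j h_jd_j+E$, where $h$ is supported on $S\cup S'$ and $\|E\|\le r\beta_+C\rho$.
\item Because $S\ne S'$, some index $i\in S\setminus S'$ has $|h_i|\ge\beta_-$, and some $i'\in S'\setminus S$ does as well. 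Hence $\|h\|\ge\sqrt2\beta_-$.
\end{itemize}
Step 1 then gives $\ge\sqrt2c_D\beta_-s-C_Sr\beta_+\rho$, which is \eqref{eq:support-margin}.

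\textbf{Step 3: local-explanation margin.} At $\rho=0$, a coherent mean $\sum x_jd_j$ has norm in $[c,r\beta_+]$ and lies within angle $O(s_0)$ of $u$. A separated mean $\gamma a$ has $a$ at a fixed angle from $u$, up to $C_as$. A compactness argument then gives a uniform gap $g_G^0>0$: both sets are compact, they are disjoint, and $0$ is excluded because $\beta_->0$ and $\gamma_->0$. Moving both dictionaries by $\rho$ perturbs the coherent mean by $r\beta_+\rho$ and the separated mean by $\gamma_+\rho$, which gives \eqref{eq:parent-margin}.

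\textbf{Matching claim.} I would take $C_S$ large enough that $m_S(\rho)>0$ forces $\rho<c's$ with $c'$ small. Since distinct atoms are $\asymp s$ apart, each $d_j$ then has a unique partner within $\rho$, so the matching is unique.

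\textbf{Main obstacle.} The main obstacle is Step 1, specifically the interaction between the mean direction $\bar h$ and the offset $b$. The shell only controls $\|b+Lv_j\|\le C_0s$, so $\|b\|$ may be of order $s$. The case split must therefore be done carefully, using the $u$-component, which is of order one, to absorb this interaction.
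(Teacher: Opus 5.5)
Your proposal is correct and follows essentially the same route as the paper: you split $h=(\mathbf 1^\top h/q)\mathbf 1+h_0$, use the tight-frame identity together with $\|b\|\le C_0s$ and $\sigma_{\min}(L)\ge\kappa_-s$ to obtain $c_D$, and then get $m_S$ from $\|h\|\ge\sqrt2\beta_-$ and $m_G$ from a uniform gap, each followed by cross-dictionary transport. The only variation is in $g_G^0$: the paper obtains it explicitly by testing against $h_a=(u-e_1)/\sqrt2\perp a_0$, which gives $g_G^0=\frac{r}{\sqrt2}\{\beta_-c_u-\beta_+C_0s_0\}-\gamma_+C_as_0>0$ after shrinking $s_0$, whereas you use a cone/compactness argument; in addition, your unique-matching argument spells out a step the paper only asserts.
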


\subsection{Coverage and physical-resolution upper bound}

We now combine calibration uncertainty with the deployment separation margins to bound the physical diameter of the reported correspondence.  
For a joint parameter $\xi\in\mathfrak P^p_{q,r}(s)$, we write $\vartheta(\xi)$ for the physical-support target induced by its dictionary and deployment support.

Recall that the calibration-compatible region is controlled by the estimation errors of the second moment and fourth cumulant, with radii $\epsilon_{2, N}$ and $\epsilon_{K, N}$, respectively. The dictionary-radius bound below depends on these two calibration errors through their joint magnitude, so we define
\begin{equation}
    \epsilon_N
    =
    \left(
        \epsilon_{2,N}^2+\epsilon_{K,N}^2
    \right)^{1/2}.
    \label{eq:combined-moment-radius}
\end{equation}

The next result establishes the two properties required for a useful physical-support confidence correspondence.  
First, it verifies that the reported set retains the true physical support with the prescribed calibration and deployment error probabilities.  
Second, it converts the calibration uncertainty summarized by $\epsilon_N$, together with the deployment separation margins from Lemma~\ref{lem:cross-dictionary-margins}, into an explicit bound on the physical diameter of the correspondence.
\begin{theorem}[Coverage and resolution of the cross-dictionary correspondence]
\label{thm:honest-region}
Under Assumption~\ref{ass:frozen-shell} and the calibration coverage condition
in Equation~\eqref{eq:training-rectangle}, the totalized correspondence
$\widehat{\mathfrak C}_{q,r,s}^{p}$ is measurable, nonempty, and
compact-valued. Uniformly over $\mathfrak P^p_{q,r}(s)$,
\begin{equation}
\mathbb P_{p_\star,\nu_\star,D_\star}^N
\left[
\mathbb P_{\mu_\star,\sigma_\star}^T
\left\{
    \vartheta_\star
    \in
    \widehat{\mathfrak C}_{q,r,s}^{p}
    \mid Y_{1:N}
\right\}
\ge1-\alpha_T
\right]
\ge1-\alpha_D.
\label{eq:two-level-coverage}
\end{equation}
Consequently, by independence of the calibration and deployment experiments,
with
\[
    \alpha
    =
    1-(1-\alpha_D)(1-\alpha_T),
\]
\begin{equation}
\inf_{\xi\in\mathfrak P^p_{q,r}(s)}
\mathbb P_\xi^{N,T}
\left\{
    \vartheta(\xi)
    \in
    \widehat{\mathfrak C}_{q,r,s}^{p}
\right\}
\ge1-\alpha.
\label{eq:marginal-coverage}
\end{equation}

For $N$ above the fixed block threshold of the robust calibration-moment
construction, define
\begin{equation}
\rho_N(s)
=
C\left[
    (s\wedge\epsilon_N)
    +
    \left(s\wedge\frac{\epsilon_N}{s}\right)
    +
    \left(s\wedge\frac{\epsilon_N}{s^2}\right)
\right].
\label{eq:dictionary-radius}
\end{equation}
Any two dictionaries retained by the same nonempty calibration region have
quotient-dictionary distance at most $\rho_N(s)$.  If
$\epsilon_N\le CN^{-1/2}$, then
\begin{equation}
    \rho_N(s)
    \le
    C\left(
        s\wedge\frac{1}{\sqrt N\,s^2}
    \right).
    \label{eq:dictionary-radius-rate}
\end{equation}
Below the block threshold, retaining the full compact shell yields the same
order after adjustment of fixed constants.

For every realization,
\begin{align}
\operatorname{diam}_{\mathrm{pr}}
\left(
    \widehat{\mathfrak C}_{q,r,s}^{p}
\right)
\le{}&
C\rho_N(s)
\notag\\
&+
Cs\,
\mathbf 1
\left\{
    2\tau_{T,q}\ge m_S(\rho_N(s))
\right\}
\notag\\
&+
C\,
\mathbf 1
\left\{
    2\tau_{T,q}\ge m_G(\rho_N(s))
\right\}.
\label{eq:three-scale-upper-bound}
\end{align}
\end{theorem}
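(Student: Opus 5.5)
The plan is to prove the three parts of Theorem~\ref{thm:honest-region} in order: coverage, the dictionary radius, and the diameter bound. The main obstacle is the second part.

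\textbf{Coverage.} Measurability, nonemptiness and compactness come from the totalization in Supplementary Section~\ref{sec:supp-measurability}. The set $\mathcal R^p_{q,r,s}$ is cut out of a compact shell by continuous inequalities: the moment maps are polynomial in $D$, and the profiled losses are continuous because they are minima of continuous functions over compact coefficient boxes. Its image under the continuous map $\vartheta$ is therefore compact.

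For the two-level bound~\eqref{eq:two-level-coverage}, condition on $Y_{1:N}$. On the calibration event~\eqref{eq:training-rectangle}, which has probability at least $1-\alpha_D$, the truth lies in $\widehat{\mathcal K}^p_{q,s}$. Since the deployment sample is independent of $Y_{1:N}$, the event~\eqref{eq:test-truth-event} has conditional probability at least $1-\alpha_T$. On this event, Proposition~\ref{prop:profile-well-defined} gives $\vartheta_\star\in\widehat{\mathfrak C}$. Integrating the conditional bound over the calibration event yields~\eqref{eq:marginal-coverage} with $\alpha=1-(1-\alpha_D)(1-\alpha_T)$.

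\textbf{Dictionary radius.} Take two retained dictionaries $D,D'$. By the triangle inequality, their moments differ by at most $2\epsilon_{2,N}$ and $2\epsilon_{K,N}$. I would then invoke Lemma~\ref{lem:moment-reduction} together with the invariant-coordinate bounds of Supplementary Section~\ref{sec:supp-invariant-coordinate-bounds}. The aim is a stratified comparison of invariant coordinates:
\begin{itemize}
\item the order-zero coordinates (the center $u$, the anchor, and $p,\nu$) differ by $O(\epsilon_N)$;
\item the second-order coordinates $b$ and $G_2$, which have scale $s^2$, differ by $O(\epsilon_N)$;
\item the cubic coordinate $G_3$, which has scale $s^3$, differs by $O(\epsilon_N)$.
\end{itemize}
Next, the inverse map of Supplementary Section~\ref{sec:supp-simplex-quotient} converts these into quotient-dictionary distances. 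Dividing by the respective scales, a perturbation of $G_2$ of size $\epsilon_N$ moves the shape $L$ by $O(\epsilon_N/s)$. A perturbation of $G_3$ of size $\epsilon_N$ moves the orientation $R$ by $O(\epsilon_N/s^3)$, hence moves the atoms by $O(s\cdot\epsilon_N/s^3)=O(\epsilon_N/s^2)$. Every contribution is also trivially bounded by the shell diameter $O(s)$, which gives the minima with $s$ in~\eqref{eq:dictionary-radius}.

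This inverse-map step is the main obstacle. One needs local Lipschitz invertibility of $G_3$ with respect to orientation, modulo permutation, uniformly over the shell. I would obtain it from compactness of $O(U)$ modulo permutations plus the nondegeneracy $g_{R,\Omega}\neq0$ from Theorem~\ref{thm:training-geometry}, which gives an injective differential, combined with a global injectivity argument for the simplex cubic tensor. Substituting $\epsilon_N\le CN^{-1/2}$ then gives~\eqref{eq:dictionary-radius-rate}: the $\epsilon_N/s^2$ term dominates whenever $s\le1$. Below the block threshold, $\epsilon_N$ is effectively infinite and every term equals $Cs$.

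\textbf{Diameter bound.} Take two surviving targets, coming from $(D,S)$ and $(D',S')$. Both profiled means lie within $\tau_{T,q}$ of $\bar Z$, so they are within $2\tau_{T,q}$ of each other, and $D,D'$ are within $\rho_N(s)$ of each other.
\begin{itemize}
\item If one local explanation is coherent-block and the other separated, Lemma~\ref{lem:cross-dictionary-margins} forces $2\tau_{T,q}\ge m_G(\rho_N)$. Projective distances are bounded by $1$, which yields the last term of~\eqref{eq:three-scale-upper-bound}.
\item If both are coherent-block with $S\neq S'$ under the unique matching, then $2\tau_{T,q}\ge m_S(\rho_N)$. All coherent atoms lie within $O(s)$ of $u$, so the Hausdorff distance is $O(s)$, which yields the middle term.
\item Otherwise the two supports match. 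The Hausdorff distance is then bounded by the matched atom distances, which are at most $C\rho_N(s)$; two separated atoms are likewise within $C\rho_N(s)$.
\end{itemize}
Taking the supremum over all pairs gives the bound.
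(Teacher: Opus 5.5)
Your proposal is correct and follows essentially the same route as the paper. The shared steps are truth retention on the intersection of the calibration and deployment events; $O(\epsilon_N)$ closeness of the invariant coordinates $(b,P_a,G_2,G_3,\nu)$, converted through the physical quotient inverse with its $s^{-1}$ and $s^{-2}$ factors and capped at the shell scale $s$; and the same margin-based case analysis over retained pairs.

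The only differences are minor. The uniform-in-$s$ moment-to-invariant bound comes from the collapsed-moment-operator chord, not from Lemma~\ref{lem:moment-reduction} alone. The paper gets the orientation inverse directly from the orbit-differential identity and $\operatorname{Stab}_{O(U)}(T_q)=\mathfrak S_q$ after polar normalization, rather than through $g_{R,\Omega}\neq0$.
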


The three terms in $\rho_N(s)$ arise from uncertainty in the local center, second-order shape, and cubic orientation coordinate, respectively.  
In the high-coherence regime, the orientation term produces the characteristic calibration-limited rate $s\wedge(\sqrt N\,s^2)^{-1}$.

The diameter bound also separates the three sources of uncertainty introduced in Section~\ref{sec:problem}.  
The two deployment-side scales follow from a common Gaussian mean-separation principle.  
For $T$ independent observations with noise variance $\sigma^2$, two candidate means separated by distance $\Delta$ have KL divergence of order $T\Delta^2/\sigma^2$.  
By Lemma~\ref{lem:cross-dictionary-margins}, the coherent-block versus separated alternative has baseline mean separation of order $r\beta_-$, whereas two distinct coherent-block supports have separation of order $\beta_-s$.
Using the worst-case deployment noise level $\sigma_+$ therefore gives the first two information scales below.  
The third follows from Theorem~\ref{thm:training-geometry}, which gives order-$s^6$ orientation information per calibration signal:
\begin{equation}
    I_G^{(r)}
    =
    \frac{Tr^2\beta_-^2}{\sigma_+^2},
    \qquad
    I_S
    =
    \frac{T\beta_-^2s^2}{\sigma_+^2},
    \qquad
    I_D
    =
    Ns^6.
    \label{eq:theorem-three-gates}
\end{equation}
Here $I_G^{(r)}$ measures deployment information for distinguishing the coherent-block explanation from the separated alternative, $I_S$ measures deployment information for distinguishing supports within the coherent block, and $I_D$ measures calibration information about its physical orientation.

If the deployment data cannot distinguish the two local explanations, the
physical diameter can remain on the order of one. Once the coherent-block
explanation is identified, but its atom support remains unresolved, the
relevant scale is order $s$. After both deployment-side ambiguities are
resolved, the remaining physical uncertainty is governed by
\begin{equation}
    s\wedge\frac{1}{\sqrt N\,s^2}.
    \label{eq:resolved-upper-rate}
\end{equation}
These are distinct information requirements; the general analysis does not
collapse them into a single $NT$ information scale.

\subsection{Minimax lower bound and optimal physical resolution}

The upper bound above shows the physical resolution achieved by the proposed cross-dictionary correspondence, but it does not determine whether a different valid confidence procedure could do better.  
To distinguish a limitation of the procedure from an intrinsic statistical limitation, we now derive a minimax lower bound over all uniformly valid physical-support confidence correspondences.

To compare such procedures at the same coverage level, we assume
$(1-\alpha_D)(1-\alpha_T)>1/2$ and set
\[
    \alpha
    =
    1-(1-\alpha_D)(1-\alpha_T)
    <\frac12 .
\]
We define the class of uniformly valid correspondences by
\begin{equation}
\begin{aligned}
\mathfrak H_\alpha
\left\{
    \mathfrak P^p_{q,r}(s)
\right\}
&=
\left\{
\begin{aligned}
    \widehat C:
    \inf_{\xi\in\mathfrak P^p_{q,r}(s)}
    \\
    \mathbb P_\xi^{N,T}
    \left\{
        \vartheta(\xi)\in\widehat C
    \right\}
    \ge1-\alpha
\end{aligned}
\right\},
\end{aligned}
\label{eq:honest-class}
\end{equation}
where $\widehat C$ ranges over measurable nonempty compact-valued correspondences.  
Among all such valid procedures, we measure the best achievable worst-case physical resolution by the minimax expected diameter
\begin{equation}
\mathcal R_{N,T}^{(q,r)}(s)
=
\inf_{\widehat C\in\mathfrak H_\alpha}
\sup_{\xi\in\mathfrak P^p_{q,r}(s)}
\mathbb E_\xi
\operatorname{diam}_{\mathrm{pr}}(\widehat C).
\label{eq:minimax-diameter-risk}
\end{equation}

The lower bound uses three two-point comparisons corresponding to the three sources of physical uncertainty identified above: the local explanation, the atom support within the coherent block, and the physical orientation of the dictionary.
A mild interior compatibility condition needed for the coherent-block versus separated comparison, together with the generic two-point diameter argument, is given in Supplementary Section~\ref{sec:supp-minimax-setup}.

The next theorem shows that the three resolution scales appearing in the upper bound are unavoidable.  
In particular, once the two deployment-side ambiguities are resolved, the proposed correspondence attains the optimal calibration-limited physical resolution.

\begin{theorem}[Minimax physical-resolution lower bound and optimality]
\label{thm:fixed-shell-minimax}
Under Assumption~\ref{ass:frozen-shell} and the interior compatibility
condition in Supplementary Section~\ref{sec:supp-minimax-setup}, there exist
fixed positive constants and transition thresholds such that
\begin{equation}
\begin{aligned}
\mathcal R_{N,T}^{(q,r)}(s)
&\ge
c\max\left[
    \mathbf 1\{I_G^{(r)}\le c_G\},
    \;
    s\,\mathbf 1\{I_S\le c_S\},
    \right.\\[-1mm]
&\hspace{30mm}\left.
    s\wedge\frac{1}{\sqrt N\,s^2}
\right].
\end{aligned}
\label{eq:three-gate-lower-bound}
\end{equation}
When $I_G^{(r)}$ and $I_S$ exceed their fixed resolution thresholds, the cross-dictionary correspondence of Theorem~\ref{thm:honest-region}, with $\epsilon_N\le CN^{-1/2}$, satisfies 
\[
\sup_{\xi\in\mathfrak P^p_{q,r}(s)}
\mathbb E_\xi
\operatorname{diam}_{\mathrm{pr}}
\left(
    \widehat{\mathfrak C}_{q,r,s}^{p}
\right)
\le
C\left(
    s\wedge\frac{1}{\sqrt N\,s^2}
\right).
\]
Together with the lower bound, this yields 
\begin{equation}
    \mathcal R_{N,T}^{(q,r)}(s)
    \asymp
    s\wedge\frac{1}{\sqrt N\,s^2}.
    \label{eq:resolved-minimax-rate}
\end{equation}
\end{theorem}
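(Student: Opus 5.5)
The proof has two parts: a lower bound by three separate two-point reductions, and an upper bound obtained by taking expectations in Theorem~\ref{thm:honest-region}. Since a maximum of three quantities is at most three times their sum, it is enough to prove each of the three lower bounds in \eqref{eq:three-gate-lower-bound} separately.

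\emph{The generic two-point argument.} Take $\xi_0,\xi_1\in\mathfrak P^p_{q,r}(s)$ whose physical targets satisfy $d_H^{\mathrm{pr}}(\vartheta(\xi_0),\vartheta(\xi_1))\ge\Delta$, and whose joint laws satisfy $\mathrm{TV}(\mathbb P_{\xi_0}^{N,T},\mathbb P_{\xi_1}^{N,T})\le\eta$. Let $\widehat C\in\mathfrak H_\alpha$. On the event $\{\vartheta(\xi_0),\vartheta(\xi_1)\in\widehat C\}$ we have $\operatorname{diam}_{\mathrm{pr}}(\widehat C)\ge\Delta$. Under $\mathbb P_{\xi_0}$ this event has probability at least
\[
1-\alpha-(\alpha+\eta)=1-2\alpha-\eta .
\]
Because $\alpha<1/2$, choosing $\eta$ small makes this a fixed positive constant. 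Hence $\sup_\xi\mathbb E_\xi\operatorname{diam}_{\mathrm{pr}}(\widehat C)\ge c\Delta$. Each gate then reduces to exhibiting such a pair, and I will bound total variation through KL or Hellinger distance.

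\emph{Gate (i): local explanation.} Under the interior compatibility condition, choose a single dictionary, so the calibration laws coincide exactly. Take a coherent-block mean $\sum_{j\in S}x_jd_j$ and a separated mean $\gamma a$ whose difference is of order $r\beta_-$. The deployment KL divergence is $T\|\mu_0-\mu_1\|^2/(2\sigma_+^2)\lesssim I_G^{(r)}$, which is small when $I_G^{(r)}\le c_G$. The two targets carry different marks, and their supports lie at projective distance of constant order (near $u$ versus near $a_0$). This gives $\Delta\asymp1$.

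\emph{Gate (ii): support within the block.} Again fix the dictionary. Take two supports $S\neq S'$ that differ in one atom, with equal coefficients $\beta_-$. The mean difference is $\beta_-(d_i-d_j)$, of size $\asymp\beta_-s$ by \eqref{eq:s-coherence}, so the KL divergence is $\lesssim I_S$. The Hausdorff distance between the two physical supports is $\asymp s$.

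\emph{Gate (iii): orientation.} This is the main step. Work in the balanced submodel \eqref{eq:balanced-hard-core} and compare $R$ with $R\exp(t\Omega)$, with $\|\Omega\|_F=1$. The deployment law must be kept identical. To do this, choose the deployment means in both hypotheses to coincide: pick the support and coefficients so that the two means are equal, which is possible up to $O(ts)$ adjustments when the coefficients stay interior to $[\beta_-,\beta_+]$. If exact matching fails, use the equal-coefficient symmetric configuration, which is invariant under the relevant rotation, as the introduction indicates.

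The calibration cost comes from Theorem~\ref{thm:training-geometry}. The upper bound in \eqref{eq:profiled-orientation-information}, combined with the $L^2(f_0^{-1})$ control of the score remainder, gives a $\chi^2$ (hence Hellinger) bound of order $t^2s^6$ per observation, uniformly along the path. Integrating along the path, the $N$-fold Hellinger distance is $\lesssim Nt^2s^6$. Set $t\asymp\min\{1,(\sqrt N s^3)^{-1}\}$ and fix the remaining constants so that this is small.

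The physical separation is obtained by moving the atoms by $\|tL\Omega v_j\|\asymp ts$. The minimum over permutations stays of order $ts$ because $g_{R,\Omega}\neq0$ modulo permutation; for small $t$, permutations fix the matching. This gives
\[
\Delta\asymp ts\asymp s\wedge(\sqrt N s^2)^{-1}.
\]
The delicate points in this step are the uniformity of the remainder in the calibration bound, the requirement that the deployment laws coincide while the supports move by $ts$, and showing that the Hausdorff distance projected onto the chosen $r$-subset is still $\gtrsim ts$. For the last point, choose $\Omega$ so that it moves an atom in $S$ generically.

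\emph{Upper bound.} Apply \eqref{eq:three-scale-upper-bound} together with \eqref{eq:dictionary-radius-rate}. For thresholds large enough, $\rho_N(s)\le C s\wedge(\sqrt Ns^2)^{-1}\le Cs$, and this keeps $m_S(\rho_N)\gtrsim s$ and $m_G(\rho_N)\gtrsim1$ after adjusting constants. Here the regime $\rho_N\asymp s$ is handled by enlarging $C$ in the definition of the good event, or by bounding the diameter by $Cs$ directly whenever $\rho_N\gtrsim s$. The indicator terms in \eqref{eq:three-scale-upper-bound} fire only when $2\tau_{T,q}\ge m_S$ or $2\tau_{T,q}\ge m_G$, that is, when $I_S$ or $I_G^{(r)}$ falls below its threshold. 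Since $\tau_{T,q}\asymp\sigma_+/\sqrt T$ is deterministic, this bound holds for every realization, so $\mathbb E_\xi\operatorname{diam}_{\mathrm{pr}}(\widehat{\mathfrak C}_{q,r,s}^{p})$ is bounded by the same quantity. Combining the upper bound with gate (iii) gives \eqref{eq:resolved-minimax-rate}.
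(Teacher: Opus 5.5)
Your proposal is correct and follows the paper's proof essentially step for step: the same two-point diameter lemma, and the same three pairs. These are identical calibration laws for the group-versus-separated pair, a fixed dictionary with common coefficient $\beta_-$ for the wrong-support pair, and an exactly deployment-invariant orientation path with calibration KL of order $Ns^6h^2$ and $|h|\asymp\min\{h_0,(\sqrt N s^3)^{-1}\}$. The upper bound uses the same case split on whether $s\wedge(\sqrt N s^2)^{-1}$ is a fixed fraction of $s$.

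The one step to tighten is the orientation pair, where you should drop the coefficient-refitting option. Refitting removes only the in-span component of the mean change, so it leaves a generically nonzero residual of order $ts$ and a deployment KL of order $Tt^2s^2$, which would break a bound that must hold for every $T$. Instead, commit, as the paper does, to equal coefficients with a skew $\Omega$ satisfying $\Omega\sum_{j\in S}v_j=0$ and $\Omega v_j\neq0$ for some $j\in S$; a generic $\Omega$ does not leave the equal-coefficient mean invariant.
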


Thus, once the deployment data resolve the local explanation and atom support, the calibration-limited physical resolution is minimax optimal up to fixed constants:
\begin{equation}
    \delta_{\mathrm{opt}}(N,s)
    \asymp
    s\wedge\frac{1}{\sqrt N\,s^2}.
    \label{eq:optimal-physical-resolution}
\end{equation}
Equivalently,
\begin{equation}
    \frac{\delta_{\mathrm{opt}}(N,s)}{s}
    \asymp
    \min\left\{
        1,\frac{1}{\sqrt{Ns^6}}
    \right\}.
    \label{eq:relative-physical-resolution}
\end{equation}
This is a constant-factor result for the supplied fixed-dimensional local class, and the coherence scale $s$ is treated as a given parameter. 

The minimax theorem characterizes the optimal physical-resolution rate, but two consequences are especially useful for interpreting what this limitation means for sparse inference.  
First, we compare the learned-dictionary problem with the idealized case in which the dictionary is known.  
This isolates the loss of physical resolution caused specifically by a finite
calibration sample.

\begin{corollary}[Known- versus learned-dictionary resolution]
\label{cor:known-learned-gap}
Consider a sequence of problems for which
\[
    I_G^{(r)}\rightarrow\infty,
    \qquad
    I_S\rightarrow\infty,
\]
while $I_D=Ns^6$ remains bounded. If the dictionary were known, the
deployment experiment would eventually separate the competing atom supports.
When the dictionary is learned from finite calibration data, however, every
uniformly valid physical-support confidence correspondence has worst-case
expected diameter at least $cs$. Hence a support decision that is precise
conditional on one fitted dictionary can be strictly more precise than the
physical conclusion justified after dictionary uncertainty is taken into
account.
\end{corollary}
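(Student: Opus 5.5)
The corollary follows by reading Theorem~\ref{thm:fixed-shell-minimax} in the stated asymptotic regime, together with a short known-dictionary comparison. There are three steps.

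\emph{Step 1: the lower bound.} Since $I_D=Ns^6\le B$ for some fixed $B$, we have $\sqrt N s^2=\sqrt{Ns^6}/s\le \sqrt B/s$. Hence
\[
\frac{1}{\sqrt N s^2}\ge \frac{s}{\sqrt B},
\qquad\text{so}\qquad
s\wedge\frac{1}{\sqrt N s^2}\ge \min\{1,B^{-1/2}\}\,s .
\]
The third term inside the maximum in Equation~\eqref{eq:three-gate-lower-bound} does not depend on $I_G^{(r)}$ or $I_S$, and those quantities are assumed to diverge. So for every $\widehat C\in\mathfrak H_\alpha$,
\[
\sup_{\xi}\mathbb E_\xi\operatorname{diam}_{\mathrm{pr}}(\widehat C)\ge \mathcal R_{N,T}^{(q,r)}(s)\ge c\min\{1,B^{-1/2}\}s .
\]
I would also note the source of this term. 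It comes from the orientation two-point comparison: two block orientations $R,R'$ at physical separation of order $s$ whose calibration laws have bounded KL divergence, which by Theorem~\ref{thm:training-geometry} is of order $Ns^6$. The deployment laws are made identical by choosing equal coefficients. Since $T$ never enters this comparison, taking $T\to\infty$ cannot remove it.

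\emph{Step 2: the known-dictionary contrast.} If $D_\star$ is known, the correspondence of Equation~\eqref{eq:reported-set} with the calibration region replaced by the singleton $\{D_\star\}$ has $\rho=0$. Lemma~\ref{lem:cross-dictionary-margins} then gives margins
\[
m_S(0)=\sqrt2 c_D\beta_- s,
\qquad
m_G(0)=g_G^0 .
\]
Because $\tau_{T,q}\asymp\sigma_+/\sqrt T$, the condition $I_S\to\infty$ is equivalent to $\beta_- s\sqrt T/\sigma_+\to\infty$. Therefore $2\tau_{T,q}<m_S(0)$ eventually, and likewise $2\tau_{T,q}<m_G(0)$ once $I_G^{(r)}$ is large. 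Applying Equation~\eqref{eq:three-scale-upper-bound} with $\rho_N=0$, the retained set is the singleton $\{\vartheta_\star\}$ on the deployment event. This means the competing supports are separated, with diameter zero, and at level $1-\alpha_T$.

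\emph{Step 3: the overprecision conclusion.} A plug-in support decision computed under a single fitted dictionary $\widehat D$ behaves exactly like this known-dictionary procedure. It reports a singleton physical support, of diameter zero, whenever $I_S$ is large. Step~1 shows that every valid correspondence must have worst-case expected diameter at least $cs$. So a singleton report cannot be uniformly valid once dictionary uncertainty is accounted for, and the plug-in support is strictly more precise than what is justified.

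The main obstacle is making Step~1 rigorous when $s$ and $N$ vary along the sequence. The constants $c$ and the thresholds in Theorem~\ref{thm:fixed-shell-minimax} must be uniform in $s\in(0,s_0]$, which Theorem~\ref{thm:fixed-shell-minimax} asserts. It must also be checked that the orientation two-point pair lies in $\mathfrak P^p_{q,r}(s)$ with identical deployment means. For this I would use the balanced submodel~\eqref{eq:balanced-hard-core} with equal coefficients $x_j$, choosing $r$ and the template so that $\sum_{j\in S}d_j$ is invariant under the rotation $R\mapsto R'$, while the projective Hausdorff distance between the two supports stays at least of order $s$ because Lemma~\ref{lem:cross-dictionary-margins} guarantees a unique matching.
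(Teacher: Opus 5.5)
Your proposal is correct and matches the paper's argument. For the known dictionary you set the dictionary uncertainty to zero ($\rho=0$), so $I_S,I_G^{(r)}\to\infty$ eventually forces $2\tau_{T,q}<m_S(0)$ and $2\tau_{T,q}<m_G(0)$ and the diameter vanishes; for the learned dictionary the order-$s$ bound is the compensated-orientation term $s\wedge(\sqrt N s^2)^{-1}$ of Theorem~\ref{thm:fixed-shell-minimax}, which stays at least a fixed multiple of $s$ when $Ns^6$ is bounded, and your Step~3 observation that any zero-diameter report therefore cannot lie in $\mathfrak H_\alpha$ makes the corollary's final ``hence'' explicit.
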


The preceding comparison could still leave open whether this gap is caused by uncertainty in the deployment support, coefficients, noise levels, or other nuisance quantities rather than by dictionary orientation itself.  
To isolate the calibration bottleneck, we therefore consider an oracle-favorable experiment in which all such nonorientation quantities are revealed.

\begin{corollary}[Calibration bottleneck persists with oracle side information]
\label{cor:oracle-persistence}
On the balanced orientation submodel in Equation~\eqref{eq:balanced-hard-core},
suppose an oracle reveals the coherent-block identity, the scale $s$, the active
support, equal positive coefficients, the separated atom, the coefficient
law, the noise levels, and all nonorientation nuisance parameters. Over a
fixed injective local orientation path, every uniformly valid confidence
correspondence still satisfies
\begin{equation}
    \inf_{\widehat C}
    \sup_{|h|\le h_0}
    \mathbb E_h
    \operatorname{diam}_{\mathrm{pr}}(\widehat C)
    \ge
    c\left(
        s\wedge\frac{1}{\sqrt N\,s^2}
    \right),
    \label{eq:oracle-dictionary-lower-bound}
\end{equation}
where the infimum ranges over correspondences with marginal coverage at least
$1-\alpha$, for fixed $\alpha<1/2$.
\end{corollary}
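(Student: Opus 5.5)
The bound will come from a two-point (Le Cam) argument along a single orientation path in the balanced submodel of Equation~\eqref{eq:balanced-hard-core}. The oracle reveals everything except the orientation, so the problem becomes a one-parameter family indexed by $h$. Revealing more information can only shrink the set of competing parameters, so a lower bound derived here is a valid lower bound in the oracle experiment.

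\textbf{Setting up the path.} Fix $R_0\in O(U)$ and a skew tangent $\Omega\neq0$ that is not absorbed by coherent-atom permutation; Theorem~\ref{thm:training-geometry} says such a tangent has $g_{R,\Omega}\neq0$. Set $R_h=R_0\exp(h\Omega)$ for $|h|\le h_0$, and choose $h_0$ small enough that the path stays injective modulo permutation.

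\textbf{Deployment carries no information about $h$.} I will put the oracle support $S$ with equal positive coefficients $\beta$ into the deployment model. The mean is then $\mu_h=\beta\sum_{j\in S}d(\lambda_\star sR_hv_j)$. Since $h$ is the only unknown, I want this mean to be independent of $h$. I will choose $\Omega$ to act within the span of $\{v_j\}_{j\in S}$ and its complement so that it preserves $\sum_{j\in S}v_j$. Because $b=0$, each $\|z_j\|$ is preserved, and hence so is $\sum_{j\in S}d(z_j)$. With $q\ge4$ and $r\le q-1$ there is room for such an $\Omega$ that still satisfies $g_{R,\Omega}\neq0$; this is what the text means by equal coefficients being exactly deployment-invariant. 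The deployment laws then coincide along the path, and all separation comes from the calibration sample.

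\textbf{Calibration distance between the endpoints.} Equation~\eqref{eq:cubic-density-score} together with the upper bound in Equation~\eqref{eq:profiled-orientation-information} controls the per-observation squared Hellinger distance (equivalently, the $\chi^2$ distance in $L^2(f_0^{-1})$) between $f_{s,R_0}$ and $f_{s,R_h}$ by $Cs^6h^2$. Tensorizing over the $N$ calibration signals gives
\[
H^2\bigl(P_0^{N},P_h^{N}\bigr)\le CNs^6h^2.
\]
Set $h_\star=\min\{h_0,\;c_1/\sqrt{Ns^6}\}$ with $c_1$ small, so that the total variation distance between $P_0^{N,T}$ and $P_{h_\star}^{N,T}$ is at most some $\eta<1-2\alpha$.

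\textbf{Physical separation of the endpoints.} The physical supports at $h=0$ and $h=h_\star$ differ in Hausdorff projective distance by at least $c\,s\,h_\star$. This holds because each atom moves by $\lambda_\star s\|R_hv_j-R_0v_j\|\asymp s|h|$, and injectivity modulo permutation rules out a relabeling that would cancel the motion. Hence
\[
s\,h_\star\asymp s\wedge\frac{1}{\sqrt N\,s^2}.
\]

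\textbf{Two-point diameter argument.} Any $\widehat C$ with coverage at least $1-\alpha>1/2$ at both endpoints satisfies
\[
P_0\{\vartheta_0,\vartheta_{h_\star}\in\widehat C\}\ge 1-2\alpha-\mathrm{TV}\ge 1-2\alpha-\eta>0.
\]
On that event $\operatorname{diam}_{\mathrm{pr}}(\widehat C)\ge d_H^{\mathrm{pr}}(\vartheta_0,\vartheta_{h_\star})$. Taking expectations gives the bound in Equation~\eqref{eq:oracle-dictionary-lower-bound}.

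\textbf{Main obstacle.} The hard step is the Hellinger bound. I need it to hold uniformly along the path, and not only as a derivative at $t=0$. I would integrate the score expansion in Equation~\eqref{eq:cubic-density-score} along $[0,h]$, using that the $O(s^4)$ remainder is uniform over $R$ in a compact set. I would also use that the Bernoulli--Gaussian mixture densities are smooth with $\chi^2$ remainders bounded by $f_0$. A secondary check is that an $\Omega$ preserving $\sum_{j\in S}v_j$ can be chosen with $g_{R,\Omega}\neq0$. If that fails, I would use unequal coefficients and absorb the change in deployment mean by coefficient profiling, at the cost of adding $Th^2s^2$-type deployment information, which is not allowed here since the coefficients are revealed. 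So I expect to need the invariance construction.
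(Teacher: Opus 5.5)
Your architecture is the paper's: a compensated path $R_0\exp(h\Omega)$ with $\Omega w_S=0$, $w_S=\sum_{j\in S}v_j$. Along it the deployment laws are identical, the calibration divergence is at most $CNs^6h^2$, and the two-point diameter lemma is applied at $|h|\asymp\min\{h_0,(\sqrt N s^3)^{-1}\}$.

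\textbf{The gap: physical separation.} The step that fails as written is the separation of the physical targets. The target $\vartheta(D,S)$ sees only the \emph{active} atoms, so you need $\Omega v_j\neq0$ for some $j\in S$. Every criterion you impose on $\Omega$ is a property of the whole dictionary: nonzero, not absorbed by permutations, $g_{R,\Omega}\neq0$, and injectivity of the dictionary path modulo $\mathfrak S_q$. None of these implies that an active atom moves.

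\emph{Counterexample.} Take $q\ge5$ and $r=2$, and let $\Omega$ rotate a $2$-plane inside $U\cap\operatorname{span}\{v_1,v_2\}^\perp$. This $\Omega$ fixes $v_1$ and $v_2$ and moves only inactive atoms. The deployment law is invariant and the dictionary path is injective, yet $\vartheta(D_h,S)$ is constant, so the two-point bound gives nothing.

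\emph{Same failure under the block-diagonal reading.} Suppose ``act within the span of $\{v_j\}_{j\in S}$ and its complement'' means preserving both subspaces. For $r=2$ the span block must then vanish, because a skew map on the plane $\operatorname{span}\{v_1,v_2\}$ that fixes $v_1+v_2$ is zero. The motion has to rotate $v_1-v_2$ out of that plane, as in the two-atom construction $\Omega e=0$, $\Omega f=g$ with $g\perp\operatorname{span}\{e,f\}$.

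\emph{Fix.} The paper chooses $\Omega$ that fixes $w_S$ \emph{and} moves at least one active vertex, normalized by $\max_{j\in S}\|\Omega v_j\|=1$. This is always possible, since each active $v_j$ has a nonzero component orthogonal to $w_S$ when $r\ge2$. A moved atom $[d_j(h)]$ is then at projective distance $\asymp s|h|$ from $[d_j(0)]$. Once $h_0$ is small, it is at distance $\gtrsim s$ from every other $[d_i(0)]$ with $i\in S$. This gives $d_H^{\mathrm{pr}}\ge cs|h|$. Only one active atom needs to move, not ``each atom.''

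\textbf{Two smaller points.} First, $g_{R,\Omega}\neq0$ plays no role in a lower bound. A lower bound needs only an \emph{upper} bound on calibration divergence and a \emph{lower} bound on target separation. Your fallback to unequal coefficients is therefore unnecessary, since the bound would only improve if calibration could not see $\Omega$. In any case, every nonzero $\Omega$ has $\Omega\cdot T_q\neq0$ by \eqref{eq:sm-tech-s5-4}.

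Second, the upper bound in \eqref{eq:profiled-orientation-information} concerns the \emph{profiled} information. Profiled information is at most the nuisance-fixed information, so this bound cannot control the Fisher information along your path with nuisances held fixed. You need the nuisance-fixed bound, namely \eqref{eq:sm-tech-s10-11} with nuisance tangents set to zero. Your score expansion \eqref{eq:cubic-density-score} does supply it, once it is made uniform in $R$ and transferred to $L^2(f_{s,R_t}^{-1})$ by the Gaussian derivative envelope. Integrating it along $[0,h]$ then yields $H^2\le Cs^6h^2$.

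The paper avoids your ``main obstacle'' altogether. The all-pair chord \eqref{eq:unknown-p-chord} with $p$ fixed gives $\mathrm{KL}\le C\|\lambda_\star^3s^3(R_h^{\otimes3}-R_0^{\otimes3})T_q\|_F^2\le Cs^6h^2$ at once. This works because $b$, $G_2=\lambda_\star^2s^2I_U$, $P_a$, and $\nu$ are all constant along the path.
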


Thus the calibration bottleneck is intrinsic to uncertainty in coherent-block
orientation: it persists even when the deployment support and all
nonorientation quantities are supplied. The proof is given in
Supplementary Section~\ref{sec:supp-proof-minimax}.

\begin{remark}[Embedding in a larger dictionary]
\label{rem:larger-dictionary-embedding}
The local hard core above can be embedded isometrically into a higher-dimensional signal space and included as a submodel of a larger dictionary.
All additional atoms and orthogonal coordinates may be held fixed while the coherent block follows the least-favorable orientation path.  
Consequently, the minimax lower bound applies to any larger dictionary model containing this local subexperiment.
\end{remark}

\subsection{When deployment replication adds orientation information}

The preceding results identify a calibration-limited physical-resolution barrier.  
A natural question is whether this barrier can be reduced simply by collecting more deployment measurements after the local explanation and atom support have been resolved.
The answer depends on whether a change in dictionary orientation leaves a component in the deployment mean that cannot be reproduced by changing the active coefficients.

To quantify this distinction, consider a centered coherent-block support $S$ and an interior coefficient vector  $x\in(\beta_-,\beta_+)^r$.  
An infinitesimal orientation perturbation changes the deployment mean through $\dot D_S[\Omega]x$, but part of this change may lie inside $\operatorname{span}(D_S)$. It can therefore be absorbed by an infinitesimal coefficient adjustment $D_S\dot x$.  
We measure only the component that remains after this optimal adjustment by the coefficient-profiled orientation secant
\begin{equation}
\begin{aligned}
\chi_{\mathrm{tan}}(D,S,x;\Omega)
&=
\frac{1}{s}
\inf_{\dot x\in\mathbb R^r}
\left\|
    \dot D_S[\Omega]x+D_S\dot x
\right\|_2
\\
&=
\frac{1}{s}
\left\|
    P_{\operatorname{span}(D_S)}^\perp
    \dot D_S[\Omega]x
\right\|_2,
\end{aligned}
\label{eq:tangent-secant}
\end{equation}
where $\dot D_S[\Omega]$ denotes the derivative of the active-atom matrix under the orientation tangent $\Omega$.  
Thus $\chi_{\mathrm{tan}}=0$ means that the local orientation change is invisible to the deployment experiment after
profiling the coefficients, whereas $\chi_{\mathrm{tan}}>0$ means that deployment data contain additional orientation information.

The next result converts this geometric quantity into statistical information. It answers two questions: how much orientation information is contributed by $T$ deployment replicates, and how that information combines with the calibration information $Ns^6$ when both sources are informative.

\begin{theorem}[Deployment information for physical orientation]
\label{thm:task-symmetry}
For every fixed $2\le r\le q-1$ and every interior coefficient vector $x$,
the efficient information contributed by $T$ deployment measurements for an
orientation tangent $\Omega$ is
\begin{equation}
    I_{\Omega\mid x}^{\mathrm{test}}
    =
    \frac{Ts^2}{\sigma^2}
    \chi_{\mathrm{tan}}^2(D,S,x;\Omega).
    \label{eq:test-efficient-information}
\end{equation}

Moreover, consider a compact orientation orbit and compact interior
coefficient set $\mathcal X_{\mathrm{orb}}$ for which physical target
separation is of order $s|\phi-\phi'|$ and the coefficient-profiled deployment
mean separation is of order $s\chi_s|\phi-\phi'|$.  Under the precise
regularity conditions stated in Supplementary
Section~\ref{sec:supp-task-orbit}, the minimax physical diameter on this
restricted orbit satisfies
\begin{equation}
\mathcal R_{N,T}^{\mathrm{task}}
(s,\mathcal X_{\mathrm{orb}},\sigma)
\asymp
s
\wedge
\frac{s}{
\sqrt{
    Ns^6
    +
    T\chi_s^2s^2/\sigma^2
}}.
\label{eq:restricted-task-rate}
\end{equation}
\end{theorem}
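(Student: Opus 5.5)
The proof has two parts: a local Gaussian computation for the efficient information, and a two-point/Le Cam argument combined with an additive-information upper bound for the restricted orbit rate.

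\textbf{Efficient deployment information.} With $T$ replicates $Z_\ell\sim\mathcal N_q(\mu,\sigma^2 I_q)$, the sample mean $\bar Z$ is sufficient. The model is locally a Gaussian location family with mean $\mu(\Omega,x)=D_S(R_t)x$, where $t$ parametrizes orientation along $\Omega$ and $x$ is an interior nuisance. The Fisher information matrix for $(t,x)$ is therefore $(T/\sigma^2)J^\top J$, with Jacobian $J=[\dot D_S[\Omega]x,\;D_S]$.

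The efficient information for $t$ is the Schur complement,
\[
\frac{T}{\sigma^2}\,\bigl\|P^\perp_{\operatorname{span}(D_S)}\dot D_S[\Omega]x\bigr\|_2^2 .
\]
This equals $(T s^2/\sigma^2)\chi_{\mathrm{tan}}^2$ by Equation~\eqref{eq:tangent-secant}. Two facts are needed along the way:
\begin{itemize}
\item $D_S$ has full column rank, which is the lower singular margin of Lemma~\ref{lem:cross-dictionary-margins}, so the projection is well defined and the nuisance is regular.
\item Interior $x$ means the box constraint is inactive locally, so ordinary Gaussian projection applies.
\end{itemize}
This part is routine.

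\textbf{Restricted orbit rate: upper bound.} I would use the joint correspondence restricted to the orbit, keeping orientations $\phi$ compatible with both the calibration and the deployment data.

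\emph{Calibration side.} By Theorem~\ref{thm:honest-region} restricted to the orbit, retained $\phi$ lie within $|\phi-\phi_\star|\lesssim 1\wedge (\sqrt N s^3)^{-1}$. This is the rescaled version of $\rho_N(s)/s$.

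\emph{Deployment side.} Profiling $x$ over the compact interior set, the assumed mean separation $s\chi_s|\phi-\phi'|$ together with radius $\tau_{T,q}\asymp\sigma/\sqrt T$ restricts $|\phi-\phi_\star|\lesssim \sigma/(\sqrt T s\chi_s)$.

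\emph{Combining.} The intersection has width at most the minimum of the two widths. Since $\min(a^{-1/2},b^{-1/2})\le \sqrt2\,(a+b)^{-1/2}$, the width is at most of order $1\wedge (Ns^6+T\chi_s^2s^2/\sigma^2)^{-1/2}$. Multiplying by the physical separation scale $s$ gives the upper bound.

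Working with the intersection of two confidence sets means the coverage levels multiply, exactly as in Theorem~\ref{thm:honest-region}.

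\textbf{Restricted orbit rate: lower bound.} I would use the two-point argument of Supplementary Section~\ref{sec:supp-minimax-setup} along the orbit.

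\emph{Calibration divergence.} Take $\phi,\phi'$ with $|\phi-\phi'|=h$. By Theorem~\ref{thm:training-geometry} (cubic score, with a uniform remainder), the calibration KL divergence is at most $C N s^6 h^2$.

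\emph{Deployment divergence.} The task KL divergence is at most $T(s\chi_s h)^2/(2\sigma^2)$. To get this I choose the coefficient vectors of the two hypotheses to be the profiling minimizers, so the mean difference attains the profiled separation. This is where the compactness and interior assumptions on $\mathcal X_{\mathrm{orb}}$ are used.

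\emph{Choice of $h$ and conclusion.} Independence makes the total KL additive. Choose
\[
h=c\min\{1,(Ns^6+T\chi_s^2s^2/\sigma^2)^{-1/2}\},
\]
which keeps the total KL below a small constant. Then any correspondence with coverage $1-\alpha>1/2$ must contain both targets with positive probability under one of the hypotheses. Hence the expected diameter is at least of order $c\,s h$.

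\textbf{Main obstacle.} The hardest step is the uniform bound for the calibration KL divergence. It requires controlling the $O(s^4)$ remainder in $L^2(f_0^{-1})$, so that $\chi^2$, and hence KL, is at most $C s^6 h^2$ for finite $h$ and not only infinitesimally. I would first try a second-order Taylor expansion of $f_{s,R_\phi}$ in $\phi$, with the derivative bounds of Theorem~\ref{thm:training-geometry} holding uniformly along the compact orbit.

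A second delicate point in the upper bound is that the profiled deployment loss must not allow $x$ to drift toward the boundary. I would handle this by restricting the correspondence to $\mathcal X_{\mathrm{orb}}$.
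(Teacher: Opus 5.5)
Your proposal is correct and follows essentially the same route as the paper: the Gaussian projection (your Schur complement) gives the efficient deployment information. The upper rate comes from intersecting the calibration constraint $s^3|h|\lesssim N^{-1/2}$ with the profiled deployment constraint $s\chi_s|h|\lesssim\sigma T^{-1/2}$ through the target secant, and the lower rate from a two-point bound with additive calibration and deployment KL at $|h|\asymp\min\{h_0,(Ns^6+T\chi_s^2s^2/\sigma^2)^{-1/2}\}$. The step you flag as the main obstacle, the finite-$h$ calibration KL bound $CNs^6h^2$, is already supplied by the all-pair orientation chord in Equation~\eqref{eq:orientation-chord} (equivalently Equation~\eqref{eq:unknown-p-chord} with $p$ fixed), so you can cite it instead of running a new Taylor argument; also cap $h$ at $h_0$ rather than $1$ so both hypotheses stay on the orbit.
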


The first part of the theorem gives the general local message: deployment replication improves orientation resolution only through the component measured by $\chi_{\mathrm{tan}}$.  
When this component vanishes, increasing $T$ cannot overcome the calibration bottleneck. When it is nonzero, the deployment contribution adds to the calibration information.  
The second part makes this combination explicit on the declared restricted orientation orbit.

The secant formulation is general but somewhat abstract. To make its operational meaning transparent, we next specialize to two active coherent atoms.
In this case, the informative part of the coefficient profile is the contrast: equal coefficients can completely hide the orientation change, whereas unequal coefficients expose it.

\begin{corollary}[Two-atom coefficient contrast]
\label{cor:two-atom-contrast}
Let $r=2$ and write
\[
    x_1=\bar\beta+\frac d2,
    \qquad
    x_2=\bar\beta-\frac d2 .
\]
Fix $d_0\ne0$ and a sign-preserving compact coefficient set satisfying
$|d-d_0|\le\epsilon_d\le|d_0|/2$. For the orientation orbit that rotates
$v_1-v_2$ while fixing $v_1+v_2$,
\begin{equation}
\begin{aligned}
&\inf_{\bar\beta',d'\in\mathbb R}
\left\|
    \mu_{\phi,\bar\beta,d}
    -
    \mu_{\phi',\bar\beta',d'}
\right\|_2
\\
&\quad=
\frac{
    \lambda_\star s|d|
    \|v_1-v_2\|_2
}{2}
|\sin(\phi-\phi')|.
\end{aligned}
\label{eq:two-atom-profile}
\end{equation}
Hence
\[
    \chi_s
    \asymp
    \frac{
        \lambda_\star
        \|v_1-v_2\|_2
        |d_0|
    }{2},
\]
and calibration and deployment information become comparable at
\begin{equation}
    |d_0|
    \asymp
    \sigma\sqrt{\frac NT}\,s^2.
    \label{eq:amplitude-crossover}
\end{equation}
When $d=0$, the deployment distribution is exactly invariant along this
orientation path, so additional deployment replicates provide no orientation
information and the resolution remains calibration-limited.
\end{corollary}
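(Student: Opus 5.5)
The plan is to compute the two-atom deployment mean on the balanced submodel of Equation~\eqref{eq:balanced-hard-core} explicitly, and then reduce the profiled distance to the distance from a vector to a two-dimensional subspace.

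\textbf{Setup.} With $b=0$ and $L=\lambda_\star sR_\phi$, the active atoms are $d_j=\sqrt{1-\lambda_\star^2s^2\|v_j\|^2}\,u+\lambda_\star sR_\phi v_j$ for $j=1,2$. By the regular-simplex template, $\|v_1\|=\|v_2\|$, so the $u$-components are equal; call the common coefficient $\eta$. Writing $w_\pm=v_1\pm v_2$, the mean is
\[
\mu_{\phi,\bar\beta,d}=x_1d_1+x_2d_2=2\bar\beta\,\eta\,u+\lambda_\star s\Bigl(\bar\beta R_\phi w_++\tfrac d2 R_\phi w_-\Bigr).
\]
The orbit fixes $w_+$, so $R_\phi w_+=w_+$. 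Only the vector $R_\phi w_-$ rotates, inside the plane $\Pi=\operatorname{span}\{w_-,Jw_-\}$ orthogonal to $w_+$ and $u$; this uses $v_1+v_2\perp v_1-v_2$, which holds by equal norms. The vector $R_\phi w_-$ has constant norm $\|w_-\|$.

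\textbf{Profiling.} As $(\bar\beta',d')$ ranges over $\mathbb R^2$, $\mu_{\phi',\bar\beta',d'}$ ranges over the two-dimensional subspace
\[
V_{\phi'}=\operatorname{span}\{\,2\eta u+\lambda_\star s\,w_+,\;R_{\phi'}w_-\,\}.
\]
The first spanning vector lies in $\operatorname{span}\{u,w_+\}$, which is orthogonal to $\Pi$. The second lies in $\Pi$. Hence the projection onto $V_{\phi'}$ splits orthogonally.

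\begin{itemize}
\item The component $2\bar\beta\eta u+\lambda_\star s\bar\beta w_+$ of $\mu_{\phi,\bar\beta,d}$ is exactly absorbed by the choice $\bar\beta'=\bar\beta$.
\item The residual is the distance from $\tfrac{\lambda_\star s d}{2}R_\phi w_-$ to the line $\mathbb R R_{\phi'}w_-$ inside $\Pi$. This equals $\tfrac{\lambda_\star s|d|}{2}\|w_-\|\,|\sin(\phi-\phi')|$.
\end{itemize}

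This gives Equation~\eqref{eq:two-atom-profile}. Unconstrained profiling over $\mathbb R$ is what makes the identity exact. The identity holds for all angles, not just infinitesimal ones, so no Taylor remainder is needed.

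\textbf{Consequences.} On the compact set $|d-d_0|\le\epsilon_d\le|d_0|/2$, we have $|d|\in[|d_0|/2,3|d_0|/2]$. On a compact orbit with $|\phi-\phi'|$ small, $|\sin(\phi-\phi')|\asymp|\phi-\phi'|$. The profiled separation is therefore $s\chi_s|\phi-\phi'|$ with $\chi_s\asymp\lambda_\star\|v_1-v_2\|_2|d_0|/2$. This matches the definition of $\chi_s$ in the regularity conditions of Theorem~\ref{thm:task-symmetry}. The physical target separation is of order $s|\phi-\phi'|$, since atoms move by $\lambda_\star s\|w_-\|/2$ times the angle. I would then invoke Equation~\eqref{eq:restricted-task-rate}. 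Calibration and deployment information are comparable when $Ns^6\asymp T\chi_s^2s^2/\sigma^2$, that is, when $|d_0|\asymp\sigma\sqrt{N/T}\,s^2$, which is Equation~\eqref{eq:amplitude-crossover}.

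For $d=0$, the mean is $2\bar\beta\eta u+\lambda_\star s\bar\beta w_+$, which does not depend on $\phi$. So the deployment law $\mathcal N_q(\mu,\sigma^2I_q)^{\otimes T}$ is identical along the path, and its Fisher information vanishes. Corollary~\ref{cor:oracle-persistence} then applies with the deployment data carrying no information, so the calibration-limited rate persists.

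\textbf{Main obstacle.} The main care point is verifying that the Supplementary Section~\ref{sec:supp-task-orbit} regularity conditions hold, so that Theorem~\ref{thm:task-symmetry} can be invoked. These include:
\begin{itemize}
\item sign preservation and interiority of the coefficients $x_1,x_2\in(\beta_-,\beta_+)$;
\item injectivity of the orbit modulo the atom permutation, which swaps $w_-\mapsto-w_-$ and so identifies $\phi$ with $\phi+\pi$.
\end{itemize}
To handle the second point, I would restrict to a small angle window where $|\sin|\asymp|\cdot|$ and no permutation identification occurs.
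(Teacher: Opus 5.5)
Your proposal is correct and follows essentially the same route as the paper. Like the paper, you split the mean into the fixed $v_1+v_2$ component, which $\bar\beta'=\bar\beta$ absorbs exactly, and the rotating $\tfrac d2(v_1-v_2)$ component, which gives the exact $|\sin(\phi-\phi')|$ identity; you then use $|d|\asymp|d_0|$ to verify the deployment secant of Theorem~\ref{thm:task-symmetry}, read off the crossover from the restricted rate, and note exact invariance at $d=0$. The one point you (and the paper) leave implicit is that the supplementary secant condition profiles over the compact set $\mathcal X_{\mathrm{orb}}$ rather than $\mathbb R^2$: your identity gives the lower half directly, and the upper half follows by taking $x'=x$.
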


The corollary therefore gives a concrete interpretation of the general secant condition.
The number of deployment replicates alone does not determine whether deployment improves physical resolution; the active coefficient profile must also reveal the orientation change. In the equal-coefficient case, it does not, whereas a sufficiently large coefficient contrast allows deployment information to compete with the $Ns^6$ calibration information.

\paragraph{Extension beyond Bernoulli--Gaussian coding.}
The $s^3$ orientation sensitivity underlying the calibration bottleneck was derived above under the Bernoulli--Gaussian calibration model.  
To check that this mechanism is not a special consequence of that particular coding law, we also analyze a fixed known exchangeable sparse Gaussian coefficient law.  
The same local geometry yields $s^3$ orientation sensitivity and $Ns^6$ information scaling under a corresponding nondegeneracy condition.
The model, moment formulas, and proofs are given in Supplementary Sections~\ref{sec:supp-known-coefficient-law}, \ref{sec:supp-moment-identification}, and \ref{sec:supp-proof-training-geometry}.

\subsection{Summary of the resolution regimes}
\label{sec:operational-meaning}

Figure~\ref{fig:intro_pipeline:b} summarizes the inferential hierarchy qualitatively: the data must first resolve the local explanation, then the participating atom support, and finally the physical mapping of those atoms.
The results above provide the corresponding quantitative information scales:
\[
\begin{array}{@{}cl@{}}
I_G^{(r)}
&:\quad
\text{Which local explanation}\\
&\quad\text{is supported?}\\
I_S
&:\quad
\text{Which atoms participate?}\\
I_D=Ns^6
&:\quad
\text{How precisely can those atoms}\\
&\quad\text{be mapped physically?}
\end{array}
\]
In the worst case, failure to resolve an earlier distinction prevents the
later, finer conclusion from being certified.
Thus unresolved local explanations can leave order-one physical uncertainty,
unresolved atom support can leave uncertainty at the coherent-block scale
$s$, and after both deployment-side ambiguities are resolved, the remaining
physical resolution is governed by
$s\wedge\frac{1}{\sqrt N\,s^2}$.

The coefficient-profiled task secant does not define an additional stage in Figure~\ref{fig:intro_pipeline:b}. 
It determines whether deployment
replication can help with the final physical-mapping stage.  
When $\chi_{\mathrm{tan}}=0$, orientation changes can be absorbed by coefficient adjustment and the physical mapping remains calibration-limited.  
When $\chi_{\mathrm{tan}}>0$, deployment observations contribute additional orientation information of order
\[
    \frac{Ts^2}{\sigma^2}\chi_{\mathrm{tan}}^2.
\]

The finite-bank procedure developed next follows the same qualitative
hierarchy, but it does not evaluate $I_G^{(r)}$, $I_S$, $I_D$, or
$\chi_{\mathrm{tan}}$ as online query scores.  Instead, AEB determines which
level can be reported from universal conditions over the remaining possible
explanations and, for ambiguity statements, from opposing admissible
witnesses.

%% file: sections/04_theory_to_aeb.tex
\section{Active Endpoint Bracketing for Finite-Bank Physical-Support Inference}
\label{sec:finite-bank-controller}

The continuous correspondence in Sections~\ref{sec:problem}
and~\ref{sec:results} supplies the statistical benchmark. Here candidate
explanations form a finite bank $\mathcal B=\{e_1,\ldots,e_M\}$ fixed before
held-out evaluation, and AEB asks whether unevaluated candidates can still
change the requested physical report. Each $e=(D,S,\eta)$ maps to an
application-specific target $\varphi_{\mathcal B}(e)\in\mathfrak V_{\rm app}$;
the bank may encode several simultaneously active components and need not
discretize the continuous model.

\subsection{Physical assertions on a finite explanation bank}

A truth-level assertion must hold for the represented data-generating
candidate, whereas an ambiguity assertion records coexisting admissible
explanations. Represent a truth-level assertion $g$ by
\begin{equation}
    q_g:\mathcal B\longrightarrow\{0,1\},
    \label{eq:pointwise-truth-predicate}
\end{equation}
where $q_g(e)=1$ means that $e$ satisfies it. For region $R$, active elements
$\Theta_R(e)$, fine cell $F$, and sector $C$, examples are
\begin{equation}
\begin{aligned}
    q_{\mathrm{fine}(R,F)}(e)
    &=
    \mathbf 1\{\Theta_R(e)\ne\varnothing,\ \Theta_R(e)\subseteq F\},\\
    q_{\mathrm{sector}(R,C)}(e)
    &=
    \mathbf 1\{\Theta_R(e)\ne\varnothing,\ \Theta_R(e)\subseteq C\}.
\end{aligned}
\end{equation}
Thus $F\subseteq C$ permits safe coarsening from fine to sector resolution.
Represented-scale absence is another truth-level predicate; its full form is
given in Supplementary Section~\ref{sec:supp-aeb-protocol}.

\subsection{Candidate evaluation and on-bank truth retention}

For held-out sample $W=(W_1,\ldots,W_m)$, the evaluator classifies a queried
candidate as
\[
    \psi_W(e)
    \in
    \{
        \textsc{admissible},
        \textsc{rejected},
        \textsc{indeterminate}
    \},
\]
with indeterminate candidates retained as possible explanations.

Fix a candidate-wise error level $\alpha_{\mathcal B}\in(0,1)$.
For a fully specified on-bank candidate $e$, suppose the held-out observations
are independent with density $f_e$ under $e$, and a proposal density
$f_{\widehat e}\ll f_e$ is constructed using data independent of $W$.
At a predeclared set of checkpoints $\mathcal T$, we define the likelihood-ratio process
\begin{equation}
    E_t(e)
    =
    \prod_{i=1}^{t}
    \frac{f_{\widehat e}(W_i)}{f_e(W_i)},
    \qquad
    t\in\mathcal T,
    \qquad
    E_0(e)=1 .
    \label{eq:conditional-likelihood-ratio-process}
\end{equation}
Conditional on proposal construction, $E_t(e)$ is a nonnegative mean-one
martingale under $e$, which is rejected only after
\begin{equation}
    \max_{t\in\mathcal T}E_t(e)>\frac{1}{\alpha_{\mathcal B}} .
    \label{eq:candidate-rejection-threshold}
\end{equation}

This yields the candidate-wise retention guarantee:

\begin{corollary}[On-bank candidate retention]
\label{cor:eprocess-retention}
Suppose the candidate laws, data split, proposal construction, and evaluation
checkpoints are fixed before held-out evaluation.
If a candidate is rejected only after
Equation~\eqref{eq:candidate-rejection-threshold}, then for every data-generating
candidate $e_\star\in\mathcal B$,
\begin{equation}
    \mathbb P_{e_\star}
    \left\{
        e_\star
        \text{ is ever rejected}
    \right\}
    \le
    \alpha_{\mathcal B}.
    \label{eq:on-bank-truth-retention}
\end{equation}
\end{corollary}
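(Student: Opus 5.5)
The plan is to apply Ville's maximal inequality to the likelihood-ratio process, conditionally on everything fixed before held-out evaluation. First I condition on the proposal-construction data, which are independent of $W$ by assumption. Under this conditioning the proposal density $f_{\widehat e}$ is a fixed density, the checkpoint set $\mathcal T$ is fixed, and $W_1,\ldots,W_m$ are independent with density $f_{e_\star}$ under the data-generating candidate $e_\star\in\mathcal B$.

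Next I verify the martingale property directly. Let $\mathcal F_t=\sigma(W_1,\ldots,W_t)$. Then
\[
\mathbb E_{e_\star}\!\left[\frac{f_{\widehat e}(W_{t+1})}{f_{e_\star}(W_{t+1})}\,\middle|\,\mathcal F_t\right]
=\int_{\{f_{e_\star}>0\}} f_{\widehat e}\,d\lambda\le 1 .
\]
If $f_{\widehat e}\ll f_{e_\star}$ the integral equals one; in any case the process $E_t(e_\star)$ is a nonnegative supermartingale with $E_0(e_\star)=1$. Restricting to $t\in\mathcal T$ keeps it a nonnegative supermartingale, since subsampling at deterministic times preserves the property by the tower rule.

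Then I apply Ville's inequality, which here reduces to Doob's maximal inequality because $\mathcal T\subseteq\{0,\ldots,m\}$ is finite:
\[
\mathbb P_{e_\star}\Bigl\{\max_{t\in\mathcal T}E_t(e_\star)>1/\alpha_{\mathcal B}\Bigr\}\le \alpha_{\mathcal B}\,\mathbb E_{e_\star}[E_0(e_\star)]=\alpha_{\mathcal B}.
\]
Rejection of $e_\star$ requires exactly this event by Equation~\eqref{eq:candidate-rejection-threshold}, so the conditional rejection probability is at most $\alpha_{\mathcal B}$. Integrating over the proposal-construction data gives Equation~\eqref{eq:on-bank-truth-retention}.

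The main obstacle is adaptivity, not the inequality itself. AEB decides adaptively which candidates to query, and an evaluator might stop early or report \textsc{indeterminate}. I would argue that for the fixed candidate $e_\star$ the process $E_t(e_\star)$ is defined for all $t\in\mathcal T$ whether or not $e_\star$ is queried. The event that $e_\star$ is ever rejected is contained in the event that the full process crosses $1/\alpha_{\mathcal B}$ at some checkpoint, so query order and early stopping can only shrink the rejection event. This requires that the proposal $\widehat e$ used for $e_\star$ does not depend on $W$. I would state this requirement explicitly, reading ``fixed before held-out evaluation'' as covering proposal construction.
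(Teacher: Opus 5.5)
Your proposal is correct and follows the paper's proof: condition on the proposal-stage information $\mathscr H_{\mathrm{prop}}$, apply Ville's inequality to the likelihood-ratio process at the predeclared checkpoints, and then integrate over the proposal data. Your two additions are harmless refinements of steps the paper leaves implicit: the supermartingale remark covers the case without $f_{\widehat e}\ll f_{e_\star}$, and the containment argument shows that adaptive querying and early stopping can only shrink the rejection event.
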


This protects the represented true candidate, not all candidates
simultaneously; implementation details are in Supplementary
Section~\ref{sec:supp-aeb-protocol}.

\subsection{Witnessed and possible explanation sets}

AEB maintains witnessed and possible sets
\begin{equation}
\begin{aligned}
    L_k
    &=\{e\in\mathcal B:e\text{ is queried and admissible}\},\\
    U_k
    &=\{e\in\mathcal B:e\text{ has not been rejected}\}.
\end{aligned}
\label{eq:aeb-lower-upper}
\end{equation}
Thus unqueried and indeterminate candidates remain in $U_k$. If
\[
    \mathcal A_{\mathcal B}(W)
    =\{e\in\mathcal B:\psi_W(e)=\textsc{admissible}\}
\]
is the exhaustive profile, every valid query prefix satisfies
\begin{equation}
    L_k
    \subseteq
    \mathcal A_{\mathcal B}(W)
    \subseteq
    U_k .
    \label{eq:aeb-sandwich}
\end{equation}

A truth-level assertion requires
\begin{equation}
    q_g(e)=1
    \qquad
    \text{for every }e\in U_k,
    \label{eq:universal-truth-predicate}
\end{equation}
whereas ambiguity requires opposing admissible witnesses in $L_k$. With
$P_R(e)=\mathbf1\{\Theta_R(e)\ne\varnothing\}$, support ambiguity requires
\begin{equation}
    \exists\,e^+,e^-\in L_k:
    \qquad
    P_R(e^+)=1,
    \qquad
    P_R(e^-)=0 .
    \label{eq:profile-ambiguity-predicate}
\end{equation}

The central deterministic question is whether partial evaluation can certify a
conclusion without knowing the exhaustive profile. The sandwich relation makes
universal assertions over $U_k$ and witnesses in $L_k$ valid for that profile.

\begin{theorem}[Finite-bank certification]
\label{thm:finite-bank-bridge}
Assume the sandwich relation
in Equation~\eqref{eq:aeb-sandwich}.

\begin{enumerate}[label=(\roman*),leftmargin=*]

\item
If a truth-level assertion $g$ satisfies
Equation~\eqref{eq:universal-truth-predicate}, then $g$ holds for every explanation in
the exhaustive finite-bank profile
$\mathcal A_{\mathcal B}(W)$.

\item
If opposing ambiguity witnesses belong to $L_k$, then both belong to
$\mathcal A_{\mathcal B}(W)$, so the exhaustive finite-bank profile contains
the corresponding ambiguity.

\item
If $F\subseteq C$, certification of a fine-cell assertion for $F$ also
certifies the corresponding sector assertion for $C$.

\end{enumerate}

These statements hold at every reached query prefix, including a
data-dependent stopping prefix.
\end{theorem}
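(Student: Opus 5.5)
The proof is a direct consequence of the sandwich relation in Equation~\eqref{eq:aeb-sandwich}. We treat the three parts in order and then address the data-dependent stopping claim.

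For part (i), suppose $g$ satisfies Equation~\eqref{eq:universal-truth-predicate}, so $q_g(e)=1$ for every $e\in U_k$. Take any $e\in\mathcal A_{\mathcal B}(W)$. The right inclusion $\mathcal A_{\mathcal B}(W)\subseteq U_k$ gives $e\in U_k$, hence $q_g(e)=1$. So $g$ holds on the exhaustive profile. The one point to make explicit is that $q_g$ is a fixed function on $\mathcal B$. It does not depend on the query history, so evaluating it on $U_k$ and on $\mathcal A_{\mathcal B}(W)$ refers to the same predicate.

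For part (ii), let $e^+,e^-\in L_k$ satisfy Equation~\eqref{eq:profile-ambiguity-predicate}. The left inclusion $L_k\subseteq\mathcal A_{\mathcal B}(W)$ places both in $\mathcal A_{\mathcal B}(W)$. Their values $P_R(e^\pm)$ are intrinsic to the candidates, so the ambiguity predicate, evaluated on the exhaustive profile, is witnessed by the same pair.

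For part (iii), if $F\subseteq C$, then $\Theta_R(e)\subseteq F$ implies $\Theta_R(e)\subseteq C$, so $q_{\mathrm{fine}(R,F)}\le q_{\mathrm{sector}(R,C)}$ pointwise on $\mathcal B$. Universal validity of the fine predicate over $U_k$ therefore transfers to the sector predicate over $U_k$. Part (i) then lifts this to $\mathcal A_{\mathcal B}(W)$.

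The step needing the most care is the final claim about data-dependent stopping. The arguments above are deterministic and use only the sandwich at the prefix $k$ where the report is issued. I would first verify that the sandwich holds at \emph{every} valid prefix, for any query order. By definition, $L_k$ contains only queried candidates with $\psi_W(e)=\textsc{admissible}$, so it lies in $\mathcal A_{\mathcal B}(W)$. A candidate leaves $U_k$ only when it is rejected, and a rejected candidate is not in $\mathcal A_{\mathcal B}(W)$, so $U_k$ contains the profile. Here we must assume that the evaluator's classification $\psi_W(e)$ is a fixed function of $(W,e)$, not of the query path. This is what makes the sandwich hold pathwise, for every realization and every prefix.

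Given that, a stopping index $\hat k$ chosen from the data is just one particular prefix in the realization, and the deterministic implications apply at $k=\hat k$. No optional-stopping argument is needed, because no probability statement is being made.
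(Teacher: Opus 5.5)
Your proposal is correct and follows the paper's proof essentially step for step. The paper likewise uses the right inclusion $\mathcal A_{\mathcal B}(W)\subseteq U_k$ for (i), the left inclusion $L_k\subseteq\mathcal A_{\mathcal B}(W)$ for (ii), and the pointwise inequality $q_{\mathrm{fine}(R,F)}\le q_{\mathrm{sector}(R,C)}$ for (iii), and it notes that these are pathwise statements and therefore hold at a data-dependent stopping prefix. Your additional check that the sandwich holds at every prefix (given a path-independent $\psi_W$) is not needed under the theorem's hypothesis, but it is sound and matches the paper's remark that every valid query prefix satisfies the sandwich.
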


Theorem~\ref{thm:finite-bank-bridge} is deterministic. Combining it with
candidate-wise retention yields a statistical false-report bound: a false
truth-level report can occur only if the true on-bank candidate is first
rejected.

\begin{theorem}[On-bank validity of truth-level finite-bank reports]
\label{thm:on-bank-truth-validity}
Suppose the reported truth-level assertion $\widehat g$ is emitted only when
its predicate satisfies Equation~\eqref{eq:universal-truth-predicate}, and
suppose the data-generating explanation $e_\star$ belongs to the bank.
Under Equation~\eqref{eq:on-bank-truth-retention},
\begin{equation}
    \sup_{e_\star\in\mathcal B}
    \mathbb P_{e_\star}
    \left\{
        \widehat g
        \text{ is reported and }
        q_{\widehat g}(e_\star)=0
    \right\}
    \le
    \alpha_{\mathcal B}.
    \label{eq:on-bank-false-assertion-bound}
\end{equation}
\end{theorem}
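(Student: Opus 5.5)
The plan is to reduce the false-report event to the event that the true candidate is rejected, and then invoke Corollary~\ref{cor:eprocess-retention}. Fix any data-generating candidate $e_\star\in\mathcal B$. Let $E_{\mathrm{rej}}=\{e_\star \text{ is ever rejected}\}$ denote the rejection event. I will show the deterministic inclusion
\[
\{\widehat g \text{ is reported and } q_{\widehat g}(e_\star)=0\}\subseteq E_{\mathrm{rej}} .
\]
Once this holds, monotonicity of probability together with Equation~\eqref{eq:on-bank-truth-retention} gives the bound $\alpha_{\mathcal B}$ for each $e_\star$. Taking the supremum over $e_\star\in\mathcal B$ then yields Equation~\eqref{eq:on-bank-false-assertion-bound}.

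To establish the inclusion, I argue by contraposition on a fixed realization. Suppose $e_\star$ is never rejected along the run. By the definition of $U_k$ in Equation~\eqref{eq:aeb-lower-upper}, $e_\star\in U_k$ then holds at every reached prefix $k$, and in particular at the data-dependent prefix $\widehat k$ at which $\widehat g$ is emitted. By hypothesis, $\widehat g$ is emitted only when Equation~\eqref{eq:universal-truth-predicate} holds at that prefix, that is, $q_{\widehat g}(e)=1$ for every $e\in U_{\widehat k}$. Applying this to $e=e_\star$ gives $q_{\widehat g}(e_\star)=1$, so the false-report event does not occur.

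Two points need care. First, $\widehat g$ and the stopping prefix are random and may depend on $W$. This is harmless because the inclusion holds pathwise: it uses only the universal predicate at whatever prefix was reached, exactly as in the final clause of Theorem~\ref{thm:finite-bank-bridge}. No union bound over assertions or prefixes is needed. Second, "ever rejected" in Corollary~\ref{cor:eprocess-retention} must cover rejection at any checkpoint up to the stopping prefix, including adaptive query orders. Since the corollary controls $\max_{t\in\mathcal T}E_t(e_\star)$ via Ville's inequality, which is uniform over checkpoints, it covers any adaptively chosen evaluation time.

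The main obstacle is ensuring that the retention event used here matches the one in Corollary~\ref{cor:eprocess-retention} even when AEB queries $e_\star$ adaptively, or never queries it at all. If $e_\star$ is never queried, it is never rejected and remains in $U_k$, so that case is trivial. If $e_\star$ is queried at a data-dependent time, the conditional martingale argument still applies, because the proposal is constructed independently of $W$. Indeterminate outcomes keep $e_\star$ in $U_k$ and therefore only help.
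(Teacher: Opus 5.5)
Your proposal is correct and matches the paper's proof: a false truth-level report forces $e_\star\notin U_{\widehat k}$, because emitted fields hold universally over the possible set, and candidates leave $U_k$ only by rejection; Equation~\eqref{eq:on-bank-truth-retention} and a supremum over $e_\star$ then finish the argument. Your extra discussion of adaptive query order is harmless but not needed. The theorem takes the retention bound as a hypothesis, and $\psi_W(e_\star)$ is a fixed function of $W$ regardless of when, or whether, $e_\star$ is queried.
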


Truth-level validity combines universal certification over $U_k$ with
retention of the on-bank truth; ambiguity instead uses witnesses in $L_k$.
Proofs are in Supplementary Section~\ref{sec:supp-aeb-guarantees}.

\subsection{Active endpoint bracketing}

The preceding results say when a partial bank certifies a report; AEB
determines which candidates to evaluate to reach such a certificate.

Fix a requested report family $\mathcal G$ before held-out evaluation, and let
$\mathcal O_{\mathcal G}$ denote its allowed reports.  Define
\[
    \mathsf{Cert}_{\mathcal G}(L,U)
    \in
    \mathcal O_{\mathcal G}\cup\{\bot\}
\]
as a deterministic rule that returns a report only when $U\ne\varnothing$,
truth-level fields are universal
over $U$, and ambiguity fields have their prescribed witnesses in $L$;
otherwise it returns $\bot$.
A predeclared priority rule $\pi_{\mathcal G}(e; L, U)$ orders the unqueried
candidates that can still change an unresolved certificate.  Both rules and
their deterministic tie-breaking are fixed before held-out evaluation.

Initialize $L=\varnothing$ and $U=\mathcal B$. Until certification or the query
limit, evaluate the unqueried candidate of highest predeclared priority, add an
admissible candidate to $L$, remove a rejected candidate from $U$, and retain
an indeterminate candidate in $U$. Recompute the certificate after each query
and return it when available. An empty profile is returned iff
$U=\varnothing$; any other unresolved or budget-limited state returns
\textsc{abstain}. Full pseudocode is given in Supplementary
Algorithm~\ref{alg:supp-aeb}.

Adaptive ordering and stopping preserve Theorem~\ref{thm:finite-bank-bridge}
because every emitted field satisfies the same certificate. Evaluating $Q$
candidates costs $O(QC_{\rm eval})$; no worst-case sublinear-query guarantee is
claimed.

\subsection{Scope of the finite-bank guarantee}
\label{sec:finite-bank-scope}

Deterministic certification is relative to exhaustive evaluation of the same
bank, and statistical validity is on-bank. No claim is made that the bank
approximates or outer-covers the continuous parameter space or gives off-bank
coverage. Application-specific banks, evaluators, certificates, and priorities
are described in Section~\ref{sec:experimental-evaluation} and the supplement.

%% file: sections/05_experiments.tex
\section{Numerical Evaluation and Discussion}
\label{sec:experimental-evaluation}

We present three synthetic studies of the information mechanism, the physical
consequences of dictionary uncertainty, and AEB computation.

\subsection{Theory-guided check of the information mechanism}

\paragraph{Design and rationale.}
This controlled calculation tests the predicted $s^6$ calibration scaling and
coefficient-dependent deployment information in the balanced $q=4$, $r=2$
Bernoulli--Gaussian submodel. Its exact 32-component calibration mixture
isolates orientation without support or optimization effects; stored
orientation perturbations are evaluated by balanced midpoint Monte Carlo.
Equal coefficients provide the deployment-invariance control, whereas unequal
coefficients expose the orientation change predicted by
Theorem~\ref{thm:task-symmetry}. This is a fixed-grid mechanism calculation,
not a collection of independent coverage trials; the complete collapse
diagnostic is Figure~\ref{fig:supp-collapse-diagnostic} in the supplement.

\paragraph{Results and interpretation.}
The Jeffreys-divergence log--log slope is $5.935$, with $R^2=0.99999$ and a
2,000-replicate paired-batch Monte Carlo stability range
$[5.925,5.947]$. The maximum equal-coefficient residual is
$6.8\times10^{-16}$, while the unequal-coefficient analytical identity has
maximum relative error $7.0\times10^{-15}$. Thus the calculation reproduces
the sixth-order calibration mechanism and the coefficient-dependent deployment
distinction. The secondary collapse spread is $0.0167$, narrowly above the
prespecified tolerance $0.015$, so this study is used only as a mechanism
illustration rather than numerical coverage evidence.

\begin{manuscriptwidefigure}
  \centering
  \subfloat[Calibration orientation scaling.  Exact-mixture
  Jeffreys-divergence estimates follow the $s^6$ reference; the fitted slope,
  paired-batch Monte Carlo stability interval, and $R^2$ are computed on the
  stored scale grid.\label{fig:theorem-native-mechanism:a}]{%
    \includegraphics[width=.41\textwidth]
    {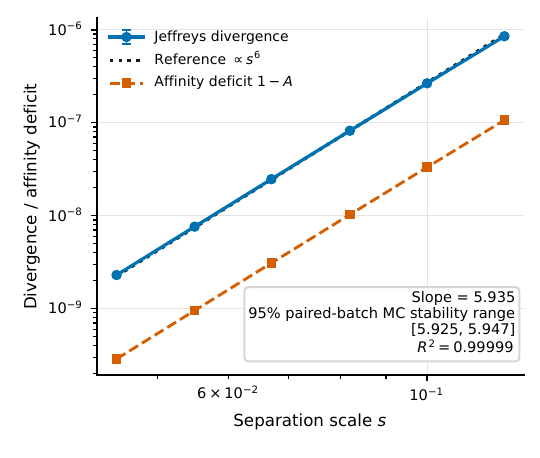}%
  }
  \hfill
  \subfloat[Task-dependent deployment information.  At $s=0.1$, the
  coefficient-profiled residual stays at machine precision for equal
  coefficients and increases with orientation perturbation for unequal
  coefficients. Across the full stored grid, the maximum equal-coefficient
  residual is $6.82\times10^{-16}$ and the maximum analytical-identity
  relative error is $7.01\times10^{-15}$.\label{fig:theorem-native-mechanism:b}]{%
    \includegraphics[width=.41\textwidth]
    {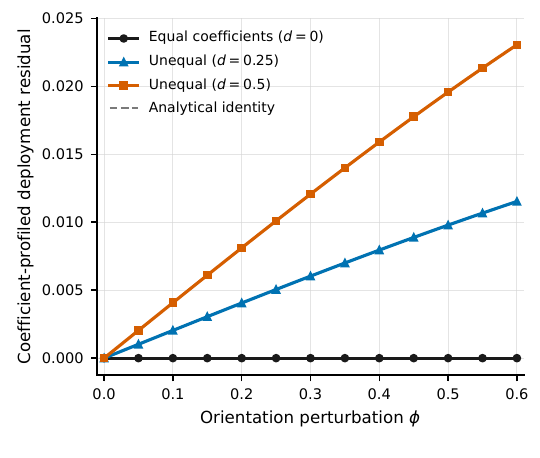}%
  }
  \caption{Theory-guided check of the information mechanism on the stored
  controlled calculation grid.  This is a mechanism illustration, not a
  numerical coverage study for the continuous confidence correspondence.}
  \label{fig:theorem-native-mechanism}
\end{manuscriptwidefigure}

\subsection{Four-region application: physical consequences of dictionary uncertainty}

\paragraph{Design and rationale.}
This study asks whether selecting one dictionary--support explanation can
produce a physical conclusion finer than the complete finite-bank analysis.
Region A is an isolated persistent fine-localization control; B is a coherent
persistent group requiring group-level resolution; C is weak and optional,
creating presence--absence ambiguity; and D is an optional interferer for
represented-scale absence and D-present controls. AEB is compared with
exhaustive evaluation of the same 216-explanation bank and with a point-valued
plug-in selector that chooses one explanation before physical mapping. The
full bank, data split, source scales, comparator, and query policy are in
Supplementary Section~\ref{sec:supp-four-region-protocol}.

\paragraph{Results and interpretation.}
At the primary cap $162/216=0.75$, AEB matches the exhaustive same-bank result
in $10/11$ A-fine profiles, $5/5$ B group/sector profiles, $5/5$ C-ambiguity
profiles, and $10/10$ represented-scale D-absence profiles, with $0/3$ false
D-absence conclusions in the D-present controls. One weak-C dataset has an
empty exhaustive finite-bank profile; it is excluded from truth-relative
utility rates but its query cost is retained. In the five eligible completed
weak-C profiles, the plug-in selector gives unsupported fine B localization in
$5/5$ cases and unsupported definitive C-absence in $5/5$ cases; in the one
remaining A-fine profile, AEB abstains rather than localizing incorrectly.
Figure~\ref{fig:application-result} shows the representative and aggregate
discrepancies. These outcomes show that retaining dictionary uncertainty is
not generic conservatism: it preserves fine localization and absence when
supported, but prevents unsupported refinement when competing explanations
remain.

\begin{manuscriptwidefigure}
  \centering
  \subfloat[Representative weak-C physical report.  AEB and exhaustive
  same-bank evaluation agree on a group-level conclusion in B and support
  ambiguity in C, whereas the point-valued plug-in selector reports fine B
  localization and definite C absence.]{%
    \includegraphics[width=.41\textwidth]
    {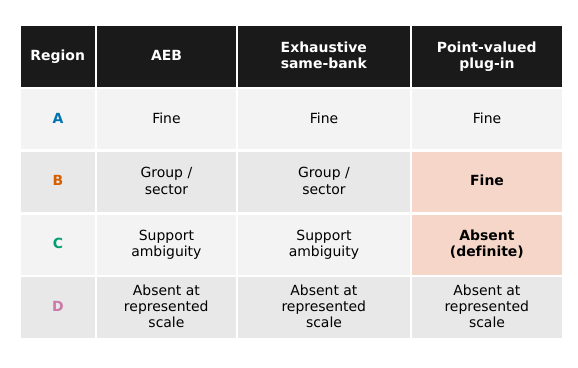}%
  }
  \hfill
  \subfloat[Point-valued plug-in selector discrepancies.  Unsupported finer B
  localization and unsupported definitive C-absence occur in all five eligible
  completed weak-C profiles.]{%
    \includegraphics[width=.41\textwidth]
    {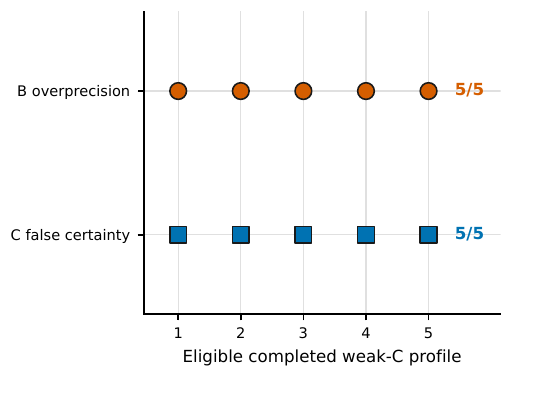}%
  }
  \caption{Four-region physical reports relative to the represented
  216-explanation bank. The representative weak-C report shows that AEB
  preserves the exhaustive same-bank group/ambiguity resolution while the
  point-valued plug-in gives unsupported fine B localization and definite C
  absence; the aggregate panel shows both discrepancies in all five eligible
  completed weak-C profiles. The study uses 15 fresh datasets, one of which
  has an empty exhaustive finite-bank profile.}
  \label{fig:application-result}
\end{manuscriptwidefigure}

\subsection{Global finite-bank study: certification fidelity and query efficiency}

\paragraph{Design and rationale.}
This study asks how much of a finite bank AEB must evaluate to reproduce or
safely coarsen the exhaustive same-bank report. Eighteen independent cases
span low-, intermediate-, and high-information regimes, and three predeclared
reporting profiles per case give 54 traces with different certificate
obligations. Proposal data determine candidate order and policy quantities;
independent held-out data determine candidate classification. Exhaustive
evaluation uses the same bank and classification rule. Outcome counts use all
54 traces, with 34 ambiguity-reference and seven fine-reference traces; full
settings are in Supplementary Section~\ref{sec:supp-global-protocol} and
Table~\ref{supp-tab:global-protocol}.

\paragraph{Results and interpretation.}
At budget $0.50$, AEB returns substantive reports in $41/54$ traces, recovers
$33/34$ ambiguity and $0/7$ fine conclusions, and gives $0/54$ unsafe
finer-than-reference reports. At budget $0.75$, the counts are $54/54$,
$34/34$, $5/7$, and $0/54$; the remaining two fine-reference cases are safely
coarsened. Figure~\ref{fig:global-validation} shows the six stored budgets,
where the four status categories sum to 54 and the unsafe category is always
zero.

\begin{figure}[!t]
  \centering
  \includegraphics[width=0.9\linewidth]
  {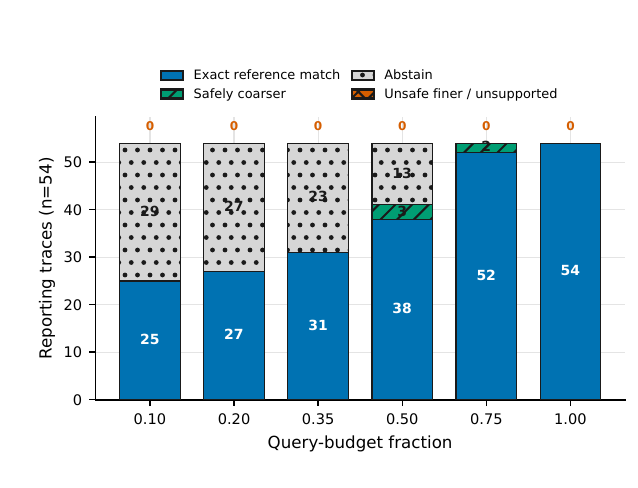}
  \caption{Global AEB status relative to exhaustive same-bank evaluation over
  18 independent cases and three predeclared profiles per case (54 reporting
  traces). At each of the six stored query budgets, exact-reference match,
  safe coarsening, abstention, and unsafe finer-than-reference counts sum to
  54; the unsafe count is zero throughout.}
  \label{fig:global-validation}
\end{figure}

Ambiguity can be certified from opposing witnesses, whereas fine localization
requires eliminating unresolved alternatives, so reduced computation appears
as abstention or safe coarsening rather than unsupported refinement. The median
queried fraction is $0.209$ at both displayed budgets because most traces stop
before either cap; this is decision-dependent query saving, not a worst-case
sublinear-query guarantee.

\subsection{Discussion, scope, and future directions}
\label{sec:discussion}

Together, the studies reproduce the sixth-order calibration mechanism and
coefficient-dependent deployment information, show that dictionary uncertainty
can replace point-valued overprecision by coherent-group or support-ambiguous
conclusions without suppressing supported fine or absence statements, and
show that AEB often certifies the exhaustive same-bank conclusion without
evaluating the full bank while otherwise coarsening or abstaining safely.

Controlled synthetic experiments are appropriate because the claims require a
known local orientation path, exact deployment invariance after coefficient
profiling, and an exhaustive compatible explanation set---quantities that
ordinary real-data benchmarks do not reveal. The continuous theory remains a
supplied local coherent-block benchmark, and the computational results remain
same-bank and on-bank rather than off-bank or worst-case sublinear guarantees.
Natural extensions include continuous-certificate computation, finite-bank
refinement with off-grid guarantees, application-specific bank construction,
broader coherent geometries, and sequential calibration/deployment
acquisition.

%% file: sections/06_conclusion.tex
\section{Conclusion}

Sparse pursuit after dictionary learning can produce a precise coordinate
support even when the corresponding physical interpretation remains
uncertain.  We addressed this gap by formulating physical-support inference
jointly over calibration-compatible dictionaries and deployment-compatible
sparse representations, and by constructing a cross-dictionary confidence
correspondence that propagates dictionary uncertainty into the reported
physical support.

For highly coherent atom configurations with separation scale $s$, we showed
that, once the coherent-block explanation and atom support are resolved, the
minimax physical resolution from $N$ calibration signals satisfies
\[
    \delta_{\mathrm{opt}}(N,s)
    \asymp
    \min\left\{
        s,\frac{1}{\sqrt{N}s^2}
    \right\}.
\]
The corresponding calibration information for physical orientation scales as
$Ns^6$.  Repeated deployment measurements can improve this resolution only
when an orientation change produces a signal component that cannot be absorbed
by adjusting the active coefficients; otherwise the physical resolution
remains calibration-limited.

For computation, we developed active endpoint bracketing (AEB), which applies
the same retain--project--coarsen principle to a specified finite bank of
candidate explanations.  AEB reports a fine physical conclusion only when it
is supported by every remaining possibility, while unresolved alternatives
lead to a coarser conclusion, certified ambiguity, or abstention.  In the
finite-bank studies, point-valued plug-in selection could return a finer
physical interpretation than exhaustive same-bank uncertainty analysis
supported, whereas AEB reproduced or safely coarsened the exhaustive
same-bank conclusion without necessarily evaluating the entire bank. These results provide a resolution-aware view of sparse inference:
the precision of the reported physical support should be determined not only
by the sparse representation selected under one fitted dictionary, but by the
physical distinctions jointly supported by the calibration and deployment
evidence.

%% file: supplement_body.tex
\input{appendices/A_proofs_and_technical_details}
\input{appendices/B_finite_bank_and_numerical_details}

%% file: appendices/A_proofs_and_technical_details.tex

\section{Proofs and Technical Details}
\label{sec:supp-proofs}

This supplement supplies the technical steps behind the main-text results.
Its organization follows the questions answered in Sections~\ref{sec:problem}
and~\ref{sec:results}.  We first make the local coordinate system and
population-moment inversion explicit.  We then show why coherent-block
orientation first appears at cubic order, convert that geometry into
statistical distance and Fisher information, and derive the physical
dictionary-radius bounds used by the confidence correspondence.  The remaining
sections establish finite-sample calibration coverage, measurability,
cross-dictionary separation, the minimax lower bound, and the
deployment-assisted orientation result.

Technical lemmas are placed close to the proof step that needs them so that
each calculation has a visible role in the argument.  A fixed known
exchangeable sparse Gaussian coefficient law is included only as a
supplementary robustness extension; it is not required for the main-text
Bernoulli--Gaussian model or its primary results.

\subsection{Model details deferred from the main text}
\label{sec:supp-model-details}

The main text keeps only the model ingredients needed to understand the
physical target and the retain--project construction.  The proofs require
three additional pieces of bookkeeping: a normalized simplex coordinate
system for comparing nearby dictionaries, explicit second- and fourth-order
population moments for identifying those dictionaries from calibration data,
and a measurable finite-sample calibration region.  We record these ingredients
here before using them in the later upper- and lower-bound arguments.

Section~\ref{sec:supp-local-coordinate-details} fixes the simplex
normalization and quotient metric, Section~\ref{sec:supp-moment-identification}
gives the tensor notation and moment inversion, and
Sections~\ref{sec:supp-mom-region}--\ref{sec:supp-measurability} later provide
one explicit finite-sample implementation.  The known exchangeable-law
extension is separated because its only purpose is to show that the cubic
orientation mechanism is not specific to Bernoulli--Gaussian coding.

\subsubsection{Supplementary fixed known exchangeable sparse Gaussian law}
\label{sec:supp-known-coefficient-law}

Let $J\subseteq[n]$ denote the active set and $K=|J|$.  For fixed known
probabilities $(\pi_0,\ldots,\pi_n)$ and conditional active variances
$(\lambda_0,\ldots,\lambda_n)$, define
\begin{equation}
    \mathbb P(K=k)=\pi_k,
    \qquad
    J\mid K=k\sim\operatorname{Unif}\binom{[n]}k .
    \label{eq:known-law-support}
\end{equation}
Conditional on $(J,K=k)$, the coefficients indexed by $J$ are independent
$\mathcal N(0,\lambda_k)$ variables and all remaining coefficients are zero.
For this supplementary extension, define
\begin{equation}
    a_1
    =
    \frac{\mathbb E(\lambda_KK)}{n},
    \qquad
    \Delta_{\mathcal L}
    =
    \frac{\mathbb E\{\lambda_K^2K(n-K)\}}{n(n-1)}.
    \label{eq:known-law-nondegeneracy}
\end{equation}
We further assume the nondegeneracy bounds
\begin{equation}
    a_1\ge a_{\min}>0,
    \qquad
    \Delta_{\mathcal L}\ge\Delta_{\min}>0.
    \label{eq:law-nondegeneracy}
\end{equation}
These conditions exclude a vanishing second-order signal coefficient and a
coefficient law whose fourth-order contrast disappears.  The corresponding population moment formulas are
stated in Section~\ref{sec:supp-moment-identification} and derived in
Section~\ref{sec:supp-proof-training-geometry}.

\subsubsection{Simplex normalization and quotient dictionary geometry}
\label{sec:supp-local-coordinate-details}

To compare nearby coherent dictionaries without being distracted by arbitrary
atom labels, we use a normalized simplex chart and measure dictionary
separation only after the best coherent-atom permutation.  This is the
coordinate system in which the later cubic invariant and dictionary-radius
bounds are stated.

Use the decomposition
\[
    \mathbb R^q=U\oplus\operatorname{span}\{u\},
    \qquad
    \dim U=q-1,
\]
and choose centered regular-simplex reference vectors
$v_1,\ldots,v_q\in U$ satisfying
\begin{equation}
\begin{gathered}
    \sum_{j=1}^qv_j=0,
    \qquad
    v_i^\top v_j=-\frac{1}{q-1}\quad(i\ne j),\\
    \sum_{j=1}^qv_jv_j^\top=\frac{q}{q-1}I_U.
\end{gathered}
    \label{eq:simplex-reference}
\end{equation}
Together with Equations~\eqref{eq:atom-chart}--\eqref{eq:affine-simplex},
this normalization is only a coordinate choice.  An arbitrary invertible
$L$ produces rotated and anisotropic coherent configurations, and every
affinely independent local $q$-atom configuration admits such an affine
simplex representation.

Let $[D]$ denote the orbit of $D$ under simultaneous permutation of the
coherent atoms.  For two local dictionary orbits, define
\begin{equation}
\begin{aligned}
    d_{\mathcal Q}^{D}([D],[D'])
    &=
    \min_{\pi\in\mathfrak S_q}
    \\
    &\quad
    \max\left\{
        \max_{1\le j\le q}\|z_j-z'_{\pi(j)}\|,
        \|P_a-P_a'\|_F
    \right\}.
\end{aligned}
    \label{eq:physical-dictionary-metric}
\end{equation}
The local sign convention fixes atom signs, so simultaneous permutation is
the remaining dictionary nonidentifiability.  The shell in
Equations~\eqref{eq:shell} and~\eqref{eq:anchor-noise-shell} is compact under
this quotient metric.

For completeness, atom sign is removed from the physical target by identifying
a unit atom $d$ with $[d]=\operatorname{span}\{d\}\in\mathbb{RP}^{q-1}$ and
using
\[
    d_{\mathrm{pr}}([d],[d'])
    =
    \sqrt{1-(d^\top d')^2}.
\]
Let $d_H^{\mathrm{pr}}$ be the induced Hausdorff distance and let
$\mathcal K(E)$ denote the nonempty compact subsets of $E$. The full marked
target space underlying Equation~\eqref{eq:physical-target} is
\begin{equation}
    \mathfrak V_q
    =
    \{0,1\}_{\mathrm{disc}}
    \times
    \mathcal K(\mathbb{RP}^{q-1}),
    \label{eq:target-space}
\end{equation}
with metric
\begin{equation}
    d_{\mathfrak V}\bigl((m,A),(m',A')\bigr)
    =
    |m-m'|+d_H^{\mathrm{pr}}(A,A').
    \label{eq:marked-target-metric}
\end{equation}
The physical-resolution loss used after the local explanation and atom support
are resolved is
\begin{equation}
    d_{\mathrm{phys}}\bigl((m,A),(m',A')\bigr)
    =
    d_H^{\mathrm{pr}}(A,A').
    \label{eq:physical-resolution-loss}
\end{equation}
Its induced set diameter is Equation~\eqref{eq:physical-diameter}.

\subsubsection{Fourth-order tensor notation and population moment inversion}
\label{sec:supp-moment-identification}

The calibration step needs observable quantities that separate aggregate
dictionary shape from the finer orientation information hidden by high
coherence.  The second moment captures the aggregate projector sum, while the
fourth cumulant retains the additional atomwise information needed for
identification.  We therefore make the corresponding tensor notation and
population inversion explicit.

For a unit-norm atom $d_j$, write $P_j=d_jd_j^\top$ and define
\begin{equation}
    \Sigma_2(D)=\sum_{j=1}^nP_j,
    \qquad
    \Sigma_4(D)=\sum_{j=1}^nP_j^{\odot2}.
    \label{eq:projector-powers}
\end{equation}
For a symmetric matrix $C$, its symmetrized tensor square is
\begin{equation}
    (C^{\odot2})_{ijkl}
    =
    \frac13
    \left(
        C_{ij}C_{kl}
        +C_{ik}C_{jl}
        +C_{il}C_{jk}
    \right).
    \label{eq:supp-symmetrized-square}
\end{equation}
With this normalization, a zero-mean Gaussian vector $X$ with covariance $C$
satisfies $\mathbb E(X^{\otimes4})=3C^{\odot2}$.  Hence the fourth cumulant
in Equation~\eqref{eq:training-moments} is explicitly
\[
    K_4
    =
    \mathbb E(Y^{\otimes4})-3M_2^{\odot2}.
\]

Under Bernoulli--Gaussian coding,
\begin{equation}
    M_2=\nu I_q+p\Sigma_2(D),
    \qquad
    K_4=3p(1-p)\Sigma_4(D).
    \label{eq:bg-moments}
\end{equation}
For a fourth-order tensor $T=(T_{ijkl})$, define the double contraction
\begin{equation}
    \operatorname{Tr}_2(T)
    =
    \sum_{i=1}^q\sum_{j=1}^q T_{iijj}.
    \label{eq:supp-double-contraction}
\end{equation}
Since $\operatorname{Tr}_2(P_j^{\odot2})=1$ for a unit-norm atom,
\begin{equation}
    \frac{\operatorname{Tr}_2(K_4)}{3n}
    =
    p(1-p).
    \label{eq:p-trace}
\end{equation}
The restriction $p<1/2$ selects the identifiable branch.

For the supplementary fixed known exchangeable law, define
\begin{equation}
    a_{2d}
    =
    \frac{\mathbb E(\lambda_K^2K)}{n},
    \qquad
    a_{2o}
    =
    \frac{\mathbb E\{\lambda_K^2K(K-1)\}}{n(n-1)}.
    \label{eq:exchangeable-coefficients}
\end{equation}
Then
\begin{equation}
    \Delta_{\mathcal L}=a_{2d}-a_{2o},
    \label{eq:delta-law}
\end{equation}
and
\begin{align}
    M_2
    &=
    \nu I_q+a_1\Sigma_2(D),
    \label{eq:knownlaw-m2}\\
    K_4
    &=
    3\left[
        \Delta_{\mathcal L}\Sigma_4(D)
        +(a_{2o}-a_1^2)\Sigma_2(D)^{\odot2}
    \right].
    \label{eq:knownlaw-k4}
\end{align}
Thus $\Delta_{\mathcal L}>0$ preserves fourth-order information beyond the
aggregate second-order dictionary shape.

Under Bernoulli--Gaussian coding, set
\[
    u=\frac{\operatorname{Tr}_2(K_4)}{3n}.
\]
The population inversion summarized in Lemma~\ref{lem:moment-reduction} is
\begin{equation}
\begin{aligned}
    p&=\frac{1-\sqrt{1-4u}}{2},
    &
    \nu&=\frac{\operatorname{tr}M_2-np}{q},
    \\
    \Sigma_2(D)
    &=
    \frac{M_2-\nu I_q}{p},
    &
    \Sigma_4(D)
    &=
    \frac{K_4}{3p(1-p)}.
\end{aligned}
\label{eq:population-p-nu-inverse}
\end{equation}
For sample moments, one convenient cumulant estimator is
\begin{equation}
    \widehat K_4
    =
    \widehat M_4-3\widehat M_2^{\odot2}.
    \label{eq:sample-cumulant}
\end{equation}
The explicit robust median-of-means construction and its uniform radii are
provided in Section~\ref{sec:supp-mom-region}.  The derivation of all
population formulas above and the proof of Lemma~\ref{lem:moment-reduction}
are included in Section~\ref{sec:supp-proof-training-geometry}.

\subsection{Calibration geometry and orientation information}

We now turn the local dictionary geometry into statistical information.  The
argument has four steps.  First, we identify invariant coordinates whose
distance is equivalent to statistical distance.  Second, we expand the
coherent projectors and show that orientation cancels through quadratic order.
Third, we control the Taylor remainder uniformly.  Finally, we invert the
cubic invariant and profile the regular nuisance directions to obtain the
$s^6$ orientation-information scale.

\subsubsection{Invariant-coordinate distance and all-pair bounds}
\label{sec:supp-invariant-coordinate-bounds}

The main text emphasizes the $s^3$ orientation sensitivity and $s^6$
information scaling.  The proofs also use all-pair bounds in the invariant
coordinates.  Let $f_{p,\theta}$ denote the density of one calibration observation,
$H^2$ the squared Hellinger distance, and $\mathrm{KL}$ the Kullback--Leibler
divergence.  For $\theta=(D,\nu)$, define
\begin{equation}
    F(\theta)=(b,P_a,G_2,G_3,\nu),
    \label{eq:F-coordinate}
\end{equation}
and let $\delta_F$ denote the Euclidean/Frobenius distance in these coordinates.
For the Bernoulli--Gaussian model, we write
\[
    F_p(p,\theta)=(p,F(\theta)),
    \qquad
    \delta_{F,p}^2
    =
    |p-p'|^2+\delta_F^2(\theta,\theta').
\]

Uniformly over $p,p'\in[p_-,p_+]$ and over pairs in the same supplied local
shell,
\begin{equation}
    c\delta_{F,p}^2
    \le
    H^2(f_{p,\theta},f_{p',\theta'})
    \le
    \mathrm{KL}(f_{p,\theta}\|f_{p',\theta'})
    \le
    C\delta_{F,p}^2.
    \label{eq:unknown-p-chord}
\end{equation}
After an optimal permutation $\pi$ of the coherent atoms,
\begin{align}
&\max_j\|z_j-z'_{\pi(j)}\|
+\|P_a-P_a'\|_F
\notag\\
&\qquad\le
C\left(
    \|b-b'\|
    +\|P_a-P_a'\|_F
\right.
\notag\\
&\hspace{28mm}\left.
    +s^{-1}\|G_2-G_2'\|_F
    +s^{-2}\|G_3-G_3'\|_F
\right).
\label{eq:physical-quotient-inverse}
\end{align}

On the centered balanced submodel, define the quotient orientation distance
\[
    d_{O(U)/\mathfrak S_q}(R,R')
    =
    \min_{\pi\in\mathfrak S_q}
    \|R-R'\Pi_\pi\|_F .
\]
Then
\begin{equation}
    H^2(f_{s,R},f_{s,R'})
    \asymp
    \mathrm{KL}(f_{s,R}\|f_{s,R'})
    \asymp
    s^6d_{O(U)/\mathfrak S_q}^2(R,R').
    \label{eq:orientation-chord}
\end{equation}
These bounds provide the technical bridge from moment uncertainty to physical
dictionary uncertainty used in the upper- and lower-bound proofs below.

\subsubsection{Projector chart and exchangeable subset aggregation}
\label{sec:supp-projector-aggregation}

The key question is why orientation is invisible at first and second order.
To answer it, we expand each rank-one projector in the local atom coordinate
and then average over exchangeable active subsets.  The cancellations in this
averaged expansion are what force the first residual orientation term to be
cubic.

For $z\in U$, write
\[
    d(z)=c(z)u+z,
    \qquad
    c(z)=\sqrt{1-\|z\|^2},
\]
and define
\[
    H(z)=P\{d(z)\}-P_u .
\]
Using
\[
    c(z)=1-\frac12\|z\|^2+O(\|z\|^4),
\]
the projector increment admits the total-degree expansion
\begin{equation}
    H(z)=A_1(z)+A_2(z)+A_3(z)+R_4(z),
    \label{eq:sm-tech-s1-1}
\end{equation}
with
\begin{align}
    A_1(z)
    &=
    uz^\top+zu^\top,
    \label{eq:sm-tech-s1-2}\\
    A_2(z)
    &=
    zz^\top-\|z\|^2P_u,
    \label{eq:sm-tech-s1-3}\\
    A_3(z)
    &=
    -\frac12\|z\|^2(uz^\top+zu^\top),
    \label{eq:sm-tech-s1-4}
\end{align}
and, uniformly on the enlarged local cap,
\begin{equation}
    \|R_4(z)\|_F\le C\|z\|^4,
    \qquad
    \|DR_4(z)\|_{\mathrm{op}}\le C\|z\|^3 .
    \label{eq:sm-tech-s1-5}
\end{equation}

Fix a stratum indexed by support size $k$ and separated-atom indicator $A$, let $h=k-A$, and define
\[
    C_0
    =
    \nu I_q+\lambda_k(hP_u+AP_a).
\]
Only strata with $0\le h\le q$ and positive weight are retained.
For a coherent-atom subset $C$, put
\[
    X_m(C)
    =
    \lambda_k\sum_{j\in C}A_m(z_j),
    \qquad m=1,2,3.
\]
Let $\mathcal D_m=D^m\gamma_{C_0}$ denote the $m$th covariance derivative
of the Gaussian density at $C_0$.  Retaining monomials by total degree in the
coherent-atom coordinates gives
\begin{align}
    \mathcal P_{0,C}
    &=
    \gamma_{C_0},\\
    \mathcal P_{1,C}
    &=
    \mathcal D_1[X_1(C)],\\
    \mathcal P_{2,C}
    &=
    \mathcal D_1[X_2(C)]
    +
    \frac12
    \mathcal D_2[X_1(C),X_1(C)],\\
    \mathcal P_{3,C}
    &=
    \mathcal D_1[X_3(C)]
    +
    \mathcal D_2[X_1(C),X_2(C)]
    \notag\\
    &\quad
    +
    \frac16
    \mathcal D_3[X_1(C),X_1(C),X_1(C)].
    \label{eq:sm-tech-s2-4}
\end{align}
Every omitted term has a total coherent-atom degree of at least 4.

The cancellations responsible for the cubic orientation effect appear only
after averaging over exchangeable subsets.  Let $H_1,\ldots, H_q$ belong to a
commutative symmetric tensor algebra, we define
\[
    Q_m=\sum_{j=1}^q H_j^{\odot m},
    \qquad
    X_C=\sum_{j\in C}H_j,
    \qquad
    \pi_m(h)=\frac{(h)_m}{(q)_m},
\]
and let $C$ be uniformly distributed over the $h$-subsets of $[q]$.
Partitioning ordered index tuples by equality pattern yields
\begin{align}
    \mathbb E_hX_C
    &=
    \pi_1Q_1,
    \label{eq:sm-proof-5-4}\\
    \mathbb E_hX_C^{\odot2}
    &=
    (\pi_1-\pi_2)Q_2
    +
    \pi_2Q_1^{\odot2},
    \label{eq:sm-proof-5-5}\\
    \mathbb E_hX_C^{\odot3}
    &=
    (\pi_1-3\pi_2+2\pi_3)Q_3
    \notag\\
    &\quad
    +
    3(\pi_2-\pi_3)Q_1\odot Q_2
    +
    \pi_3Q_1^{\odot3}.
    \label{eq:sm-proof-5-6}
\end{align}

For the projector expansion, every degree-at-most-three subset aggregate
reduces to the coherent-atom power sums
\begin{equation}
\begin{gathered}
    B_1=\sum_jz_j,
    \qquad
    B_2=\sum_jz_j^{\otimes2},\\
    B_3=\sum_jz_j^{\otimes3},
\end{gathered}
    \label{eq:sm-proof-5-9}
\end{equation}
and their products.  The exact affine-simplex identities are
\begin{align}
    B_1
    &=
    qb,
    \label{eq:sm-tech-s3-8}\\
    B_2
    &=
    qb^{\otimes2}
    +
    \frac q{q-1}G_2,
    \label{eq:sm-tech-s3-9}\\
    B_3
    &=
    qb^{\otimes3}
    +
    \frac q{q-1}\operatorname{Sym}_3(b\otimes G_2)
    +
    G_3.
    \label{eq:sm-tech-s3-10}
\end{align}
Hence, no residual orientation coordinate appears at degree one or two;
orientation first enters through the cubic term $G_3=L^{\otimes3}T_q$.

\subsubsection{Gaussian derivative envelope and composed remainder}
\label{sec:supp-gaussian-envelope}

The cubic cancellation is useful only if the omitted fourth- and higher-order
terms remain uniformly smaller.  The next envelope controls derivatives of
the Gaussian mixture components in the weighted $L^2$ norm used for Hellinger
and Fisher calculations, allowing the local polynomial expansion to be
composed with the mixture model without losing uniformity.

The following envelope controls all Taylor terms used in the calibration
density expansion.

\begin{lemma}[Uniform Gaussian derivative envelope]
\label{lem:supp-gaussian-envelope}
Suppose the eigenvalues of positive-definite $A$ lie in a fixed compact
subset of $(0,\infty)$ and
\[
    \|A^{-1/2}(B-A)A^{-1/2}\|_{\mathrm{op}}
    \le
    \rho
    <
    \frac12 .
\]
Then, for $0\le m\le5$,
\begin{equation}
    \int
    \frac{
        |D^m\gamma_A[H_1,\ldots,H_m]|^2
    }{\gamma_B}
    \le
    C_m\prod_{j=1}^m\|H_j\|_F^2 .
    \label{eq:sm-proof-5-16}
\end{equation}
\end{lemma}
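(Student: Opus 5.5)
The plan is to first reduce to a standard normal reference by an affine change of variables, and then split the $m$th derivative into a Gaussian factor times a Hermite-type polynomial. Write $x=A^{1/2}y$ and $H_j'=A^{-1/2}H_jA^{-1/2}$. Since the eigenvalues of $A$ lie in a fixed compact set, $\|H_j'\|_F\asymp\|H_j\|_F$. The Jacobian factors are bounded above and below, and $B'=A^{-1/2}BA^{-1/2}$ satisfies $\|B'-I\|_{\mathrm{op}}\le\rho<1/2$. So it suffices to treat $A=I$ with $B$ replaced by $B'$.

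For the derivative, differentiate the covariance-parameterized density $\gamma_\Sigma(x)=(2\pi)^{-q/2}\det(\Sigma)^{-1/2}\exp(-x^\top\Sigma^{-1}x/2)$ $m$ times at $\Sigma=I$ in the directions $H_1,\ldots,H_m$. By induction, $D^m\gamma_I[H_1,\ldots,H_m](x)=\gamma_I(x)\,P_m(x;H_1,\ldots,H_m)$. Here $P_m$ is a polynomial in $x$ of degree at most $2m$, multilinear in the $H_j$. Its coefficients are sums of products of terms of the form $x^\top H_{j_1}\cdots H_{j_l}x$ and $\operatorname{tr}(H_{j_1}\cdots H_{j_l})$, which come from derivatives of $\Sigma^{-1}$ and $\log\det\Sigma$. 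Each such product is bounded by $C_m(1+\|x\|)^{2m}\prod_j\|H_j\|_F$, using submultiplicativity, $\|\cdot\|_{\mathrm{op}}\le\|\cdot\|_F$, and $|\operatorname{tr}(M)|\le\sqrt q\|M\|_F$. Since $q$ is fixed and $m\le5$, all constants are finite.

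The integral then becomes
\[
\int\frac{\gamma_I(x)^2}{\gamma_{B}(x)}P_m(x)^2\,dx
\le C_m\prod_j\|H_j\|_F^2\int\frac{\gamma_I^2}{\gamma_B}(1+\|x\|)^{4m}dx .
\]
The main point, and the only step where $\rho<1/2$ matters, is that $\gamma_I^2/\gamma_B$ is an integrable Gaussian-type function. Up to the bounded factor $\det(B)^{1/2}/(2\pi)^{q/2}$, it equals $\exp\{-\tfrac12x^\top(2I-B^{-1})x\}$. The eigenvalues of $B$ lie in $[1-\rho,1+\rho]\subset(1/2,3/2)$, so those of $B^{-1}$ lie in $[1/(1+\rho),1/(1-\rho)]$. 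Hence $2I-B^{-1}\succeq(2-1/(1-\rho))I$, and $2-1/(1-\rho)>0$ exactly when $\rho<1/2$.

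This eigenvalue bound is uniform only if $\rho$ stays bounded away from $1/2$. The constant therefore degenerates as $\rho\uparrow1/2$, and I would state it for $\rho\le\rho_0<1/2$, or read $C_m$ as depending on $\rho$. This is the obstacle I would check carefully: whether uniformity in $\rho$ is needed downstream. In the paper's applications $\rho=O(s)$ is small, so any fixed $\rho_0$ suffices.

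With the quadratic form bounded below, the moment integral $\int(1+\|x\|)^{4m}\exp\{-c\|x\|^2/2\}\,dx$ is finite. This gives the claimed bound, including the trivial case $m=0$ with empty product equal to $1$.
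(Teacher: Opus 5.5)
Your proposal is correct and follows essentially the same route as the paper. Like the paper, you write $D^m\gamma_A[H_1,\ldots,H_m]$ as $\gamma_A$ times a degree-$2m$ polynomial that is multilinear in the $H_j$, and you use $\rho<1/2$ to get $2A^{-1}-B^{-1}\succeq\bigl(2-\frac{1}{1-\rho}\bigr)A^{-1}\succ0$, so the Gaussian quotient has finite polynomial moments. Your reduction to $A=I$ is only a cosmetic normalization; the Jacobian factors actually cancel exactly.

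Your caveat is also right. The constant $C_m$ blows up as $\rho\uparrow1/2$, since already $\int\gamma_A^2/\gamma_B$ diverges in that limit. So the paper's ``uniformly finite'' should be read with $\rho\le\rho_0<1/2$, or with $C_m=C_m(\rho)$. This is harmless downstream, where the perturbations are small.
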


\begin{proof}
Each covariance derivative is a polynomial of degree $2m$ in the observation
times $\gamma_A$.  For example,
\[
    D\gamma_A[H](y)
    =
    \frac{\gamma_A(y)}2
    \left\{
        y^\top A^{-1}HA^{-1}y
        -
        \operatorname{tr}(A^{-1}H)
    \right\}.
\]
Moreover,
\[
    2A^{-1}-B^{-1}
    \succeq
    \left(
        2-\frac1{1-\rho}
    \right)A^{-1}
    \succ0,
\]
so the required polynomial moments of the Gaussian quotient are uniformly
finite.
\end{proof}

Let $\mathcal G_{k,A}$ denote the already support-aggregated stratum density,
let $\mathcal P_{\le3,k,A}$ be its total-degree-three polynomial, and write
\[
    R_{\ge4,k,A}
    =
    \mathcal G_{k,A}
    -
    \mathcal P_{\le3,k,A}.
\]
Integral Taylor formulas and Lemma~\ref{lem:supp-gaussian-envelope} give
\begin{align}
    \left\|
        \frac{
            D_zR_{\ge4,k,A}[\dot z]
        }{\sqrt{q_{k,A,\theta'}}}
    \right\|_2
    &\le
    Cs^3\max_j\|\dot z_j\|,
    \label{eq:sm-tech-s4-2}\\
    \left\|
        \frac{
            D_{(P_a,\nu)}R_{\ge4,k,A}
            [\dot P_a,\dot\nu]
        }{\sqrt{q_{k,A,\theta'}}}
    \right\|_2
    &\le
    Cs^4
    \left(
        \|\dot P_a\|_F+|\dot\nu|
    \right).
    \label{eq:sm-tech-s4-3}
\end{align}

Mixture aggregation introduces no inverse minimum-weight factor because, for
nonnegative weights $w_\ell$,
\begin{equation}
    \frac{
        \left(\sum_\ell w_\ell r_\ell\right)^2
    }{
        \sum_\ell w_\ell q_\ell
    }
    \le
    \sum_\ell
    w_\ell\frac{r_\ell^2}{q_\ell}.
    \label{eq:sm-tech-s4-4}
\end{equation}
After optimal dictionary alignment,
\begin{equation}
\begin{aligned}
    s^3\max_j\|\Delta z_j\|
    &\le
    C\left\{
        s^3\|\Delta b\|
        +
        s^2\|\Delta G_2\|
        +
        s\|\Delta G_3\|
    \right\}
    \\
    &\le
    Cs\delta_F.
\end{aligned}
    \label{eq:sm-tech-s4-5}
\end{equation}
Integrating the derivative bounds therefore yields
\begin{equation}
    \left\|
        \frac{
            R_{\ge4}(\theta)-R_{\ge4}(\theta')
        }{\sqrt{f_{\theta'}}}
    \right\|_2
    \le
    Cs\delta_F(\theta,\theta').
    \label{eq:sm-tech-s4-6}
\end{equation}

\subsubsection{Simplex cubic and quotient inverse}
\label{sec:supp-simplex-quotient}

For unit $x\in U$, set $y_j=v_j^\top x$.  Then
\begin{equation}
    \sum_jy_j=0,
    \qquad
    \sum_jy_j^2=\frac q{q-1},
    \qquad
    T_q[x,x,x]=\sum_jy_j^3.
    \label{eq:sm-tech-s5-1}
\end{equation}
The tight-frame map
\[
    x\mapsto(v_1^\top x,\ldots,v_q^\top x)
\]
is a linear isomorphism from $U$ onto the zero-sum hyperplane.
Optimizing $\sum_jy_j^3$ under the constraints in
Equation~\eqref{eq:sm-tech-s5-1} therefore reduces to an exact finite-dimensional
constrained problem.
At a stationary point, the coordinates take at most two values; if $k$
coordinates take the positive value, the objective is
\begin{equation}
    \frac{
        q(q-2k)
    }{
        (q-1)^{3/2}\sqrt{k(q-k)}
    }.
    \label{eq:sm-tech-s5-2}
\end{equation}
This is maximized at $k=1$, corresponding exactly to a simplex vertex.
Hence
\begin{equation}
    \operatorname{Stab}_{O(U)}(T_q)
    =
    \mathfrak S_q.
    \label{eq:sm-tech-s5-3}
\end{equation}

The orbit differential satisfies
\begin{equation}
    \|\Omega\cdot T_q\|_F^2
    =
    3\left(\frac q{q-1}\right)^3
    \|\Omega\|_F^2.
    \label{eq:sm-tech-s5-4}
\end{equation}
Consequently, with
\[
    d_{\mathrm{orb}}([R],[R'])
    =
    \min_{\pi\in\mathfrak S_q}
    \|R-R'\Pi_\pi\|_F,
\]
the constant-rank theorem locally and compactness globally give
\begin{equation}
    d_{\mathrm{orb}}([R],[R'])
    \le
    C
    \|
        R^{\otimes3}T_q
        -
        R'^{\otimes3}T_q
    \|_F .
    \label{eq:sm-tech-s5-5}
\end{equation}

For a general shell point, write the left polar decomposition
\[
    L=P_LO_L,
    \qquad
    P_L=(LL^\top)^{1/2},
\]
and normalize the cubic invariant by
\[
    W(L)=P_L^{-\otimes3}G_3.
\]
Square-root functional calculus gives
\[
    \|P_L-P_L'\|_F
    \le
    Cs^{-1}\|G_2-G_2'\|_F,
\]
while
\begin{equation}
\begin{aligned}
    \|W(L)-W(L')\|_F
    &\le
    C\left[
        s^{-3}\|G_3-G_3'\|_F
        \right.
        \\
    &\hspace{18mm}\left.
        +
        s^{-2}\|G_2-G_2'\|_F
    \right].
\end{aligned}
    \label{eq:sm-tech-s5-6}
\end{equation}
Together with Equation~\eqref{eq:sm-tech-s5-5}, these estimates yield the physical
quotient inverse in Equation~\eqref{eq:physical-quotient-inverse}.

\subsubsection{Collapsed moment operator and all-pair statistical chords}
\label{sec:supp-moment-chord}

We next need a uniform inverse statement: if two calibration laws have nearly
the same observable moments, then their invariant coordinates must also be
close.  The collapsed moment operator isolates the linearized contribution of
the coherent block, separated atom, and noise level, while the remainder is
small for the supplied local scale.  This yields the all-pair statistical
chords used later.

Use the separated-atom chart
\[
    a(w)=\frac{a_0+w}{\|a_0+w\|},
    \qquad
    w\in a_0^\perp,
\]
and write
\[
    \Xi=(B_1,B_2,B_3,w,\nu).
\]
For $X=tu+x$, $x\in U$, define
\begin{align}
\mathsf S_{\dot B}(t,x)
={}&
2t\dot B_1[x]
+\dot B_2[x,x]
-t^2\operatorname{tr}\dot B_2
-t\dot B_3[x,I_U],\\
\mathsf Q_{\dot B}(t,x)
={}&
4t^3\dot B_1[x]
+6t^2\dot B_2[x,x]
-2t^4\operatorname{tr}\dot B_2\\
&+
4t\dot B_3[x,x,x]
-6t^3\dot B_3[x,I_U].
\end{align}
Let
\[
    H(\dot w)
    =
    a_0\dot w^\top+\dot w a_0^\top,
    \qquad
    h_0(t,x)
    =
    \frac12(t+e_1^\top x)^2,
\]
and define
\begin{equation}
\begin{aligned}
    \mathscr A_q(\dot B,\dot w,\dot\nu)
    &=
    \left(
        \dot\nu,\,
        \mathsf S_{\dot B}+H(\dot w)[X,X],\,
        \right.\\
    &\hspace{13mm}\left.
        \mathsf Q_{\dot B}+2h_0H(\dot w)[X,X]
    \right).
\end{aligned}
    \label{eq:sm-tech-s8-5}
\end{equation}

\begin{lemma}[Injectivity of the collapsed moment operator]
\label{lem:supp-collapsed-operator}
For every fixed $q\ge3$, $\mathscr A_q$ is injective.
Consequently there exists $\sigma_q>0$ such that
\begin{equation}
    \|\mathscr A_q h\|
    \ge
    \sigma_q\|h\|.
    \label{eq:sm-tech-s8-10}
\end{equation}
\end{lemma}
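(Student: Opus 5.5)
The plan is to verify injectivity directly by matching polynomial coefficients in the variables $X=tu+x$, $t\in\mathbb R$, $x\in U$. Finite-dimensionality then gives the quantitative bound. Since $\mathscr A_q$ is a linear map between finite-dimensional spaces, injectivity is equivalent to $\sigma_q:=\min_{\|h\|=1}\|\mathscr A_q h\|>0$, because the minimum of a continuous function on the compact unit sphere is attained. So it suffices to show that $\mathscr A_q(\dot B,\dot w,\dot\nu)=0$ forces $(\dot B,\dot w,\dot\nu)=0$.

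Assume $\mathscr A_q(\dot B,\dot w,\dot\nu)=0$. The first component gives $\dot\nu=0$ immediately. For the separated-atom direction, decompose $\dot w\in a_0^\perp$ as $\dot w=\alpha(u-e_1)/\sqrt2+w'$ with $w'\perp\{u,e_1\}$. Then
\[
H(\dot w)[X,X]=2(a_0^\top X)(\dot w^\top X),
\qquad
a_0^\top X=\frac{t+e_1^\top x}{\sqrt2},
\qquad
\dot w^\top X=\frac{\alpha(t-e_1^\top x)}{\sqrt2}+w'^\top x .
\]

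The key observation is that every term of $\mathsf Q_{\dot B}$ carries a positive power of $t$, so the $t^0$ part of the fourth-order component comes from $2h_0H(\dot w)[X,X]$ alone. Setting $t=0$, this part equals
\[
(e_1^\top x)^2\,\sqrt2\,(e_1^\top x)\Bigl(-\frac{\alpha}{\sqrt2}\,e_1^\top x+w'^\top x\Bigr),
\]
a quartic polynomial on $U$. It vanishes identically only if $\alpha=0$ (look at the $(e_1^\top x)^4$ coefficient) and $w'=0$ (look at the $(e_1^\top x)^3(w'^\top x)$ terms, which are available when $\dim U\ge2$, i.e.\ $q\ge3$). Hence $\dot w=0$.

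With $\dot w=0$, the second component reduces to $\mathsf S_{\dot B}\equiv0$. Its $t^0$ coefficient gives $\dot B_2[x,x]\equiv0$, so $\dot B_2=0$ by symmetry; the $t^2$ coefficient $-\operatorname{tr}\dot B_2$ then vanishes consistently. The $t^1$ coefficient gives $2\dot B_1[x]=\dot B_3[x,I_U]$. In the fourth-order component $\mathsf Q_{\dot B}$, the $t^1$ coefficient is $4\dot B_3[x,x,x]$, because the other terms carry $t^2$ or higher powers. Thus $\dot B_3[x,x,x]\equiv0$, so the symmetric tensor $\dot B_3=0$ by polarization, and then $\dot B_1=0$. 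This proves injectivity and hence Equation~\eqref{eq:sm-tech-s8-10}.

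The main obstacle is bookkeeping. One has to read the contraction conventions correctly: $\dot B_3[x,I_U]$ is the linear form obtained by contracting two slots against $I_U$, and it is linear in $x$. One also has to check that no $\dot B$-term in $\mathsf Q$ survives at $t=0$. Once the $t$-degree decomposition is set up, the separated-atom direction decouples first, and the coherent-block coordinates are then peeled off degree by degree.
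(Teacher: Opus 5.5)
Your proof is correct, but it eliminates the unknowns in a different order from the paper. The paper uses the second output to write $H(\dot w)[X,X]=-\mathsf S_{\dot B}$ and substitutes this into the fourth-order output. That gives the single identity $\mathsf Q_{\dot B}=(t+e_1^\top x)^2\mathsf S_{\dot B}$, and the paper then matches the coefficients of $t^4,t^0,t^2,t^3,t^1$ in turn. Its $t^3$ comparison plays $4\dot B_1-6\dot B_3[\cdot,I_U]$ against the relation $2\dot B_1=\dot B_3[\cdot,I_U]$ obtained at $t^2$, and that is what forces $\dot B_1=0$. The paper recovers $\dot w=0$ only at the end, from $H(\dot w)=0$ and the injectivity of $\dot w\mapsto a_0\dot w^\top+\dot w a_0^\top$ on $a_0^\perp$.

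You instead observe that every monomial of $\mathsf Q_{\dot B}$ carries a positive power of $t$, so the $t=0$ slice of the fourth-order output isolates $\dot w$ directly. Once $\dot w=0$, the two outputs decouple into $\mathsf S_{\dot B}\equiv0$ and $\mathsf Q_{\dot B}\equiv0$. From then on you need only the lowest-order coefficients: $t^0$ and $t^1$ of $\mathsf S_{\dot B}$, and $t^1$ of $\mathsf Q_{\dot B}$.

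Your route is shorter and does not use the $t^2$, $t^3$, $t^4$ coefficients of $\mathsf Q_{\dot B}$. The cost is an explicit coordinate description of $a_0^\perp$; your decomposition $\dot w=\alpha(u-e_1)/\sqrt2+w'$ with $w'\perp\{u,e_1\}$ is correct. The paper's route never parametrizes $a_0^\perp$, but it relies on more of the specific numerical structure of $\mathsf Q_{\dot B}$. The final step, from injectivity to $\sigma_q>0$ by compactness of the unit sphere in finite dimensions, is the same in both.
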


\begin{proof}
If $\mathscr A_q(\dot B,\dot w,\dot\nu)=0$, then $\dot\nu=0$ and
\[
    H(\dot w)[X,X]
    =
    -\mathsf S_{\dot B}(t,x).
\]
Substitution into the fourth-order output gives
\[
    \mathsf Q_{\dot B}(t,x)
    =
    (t+e_1^\top x)^2
    \mathsf S_{\dot B}(t,x).
\]
Comparing the coefficients of $t^4,t^0,t^2,t^3,t^1$ successively yields
\[
    \operatorname{tr}\dot B_2=0,\qquad
    \dot B_2=0,\qquad
    2\dot B_1-\dot B_3[\cdot,I_U]=0,
\]
then $\dot B_1=0$, $\dot B_3=0$, and finally $H(\dot w)=0$.
Since the separated-atom tangent map is injective on $a_0^\perp$, $\dot w=0$.
\end{proof}

The nonlinear moment map
\[
    \mathcal R(\theta)
    =
    (\nu,\Sigma_2,\Sigma_4)
\]
satisfies, after quotient alignment,
\begin{equation}
    \mathcal R(\theta)-\mathcal R(\theta')
    =
    \mathscr A_q
    \{
        \Xi(\theta)-\Xi(\theta')
    \}
    +
    \operatorname{Rem}_q(\theta,\theta'),
    \label{eq:sm-tech-s8-12}
\end{equation}
with
\begin{equation}
    \|
        \operatorname{Rem}_q(\theta,\theta')
    \|
    \le
    C_qs
    \|
        \Xi(\theta)-\Xi(\theta')
    \|.
    \label{eq:sm-tech-s8-13}
\end{equation}
Choosing the supplied $s_0$ sufficiently small yields the uniform same-shell
moment chord
\begin{equation}
    c\delta_F(\theta,\theta')
    \le
    \|
        \Delta(\nu,\Sigma_2,\Sigma_4)
    \|
    \le
    C\delta_F(\theta,\theta').
    \label{eq:sm-tech-s8-16}
\end{equation}

For the supplementary fixed known exchangeable law, the same-shell statistical
chord is
\begin{equation}
    c\delta_F^2(\theta,\theta')
    \le
    H^2(f_\theta,f_{\theta'})
    \le
    \mathrm{KL}(f_\theta\|f_{\theta'})
    \le
    C\delta_F^2(\theta,\theta').
    \label{eq:fixed-law-chord}
\end{equation}
To verify this extension, write
\[
    f_\theta
    =
    \sum_{I:w_I>0}
    w_I
    \gamma_{C_I(\theta)}.
\]
The Gaussian envelope and remainder bounds imply, locally in the noise
coordinate and in both ordered directions,
\[
    \chi^2(f_\theta\|f_{\theta'})
    \le
    C\delta_F^2(\theta,\theta').
\]
For a fixed separation in $\nu$, compactness and log-sum absorb the finite
componentwise KL bound into $\delta_F^2$.
For the reverse Hellinger bound, use
\[
    \Psi(Y)
    =
    \{
        \operatorname{vech}(YY^\top),
        \operatorname{symvec}(Y^{\otimes4})
    \}.
\]
Uniform eighth moments imply
\[
    \|
        \mathbb E_\theta\Psi
        -
        \mathbb E_{\theta'}\Psi
    \|
    \le
    CH(f_\theta,f_{\theta'}),
\]
and the explicit moment inverse together with
Equation~\eqref{eq:sm-tech-s8-16} gives
\[
    \delta_F
    \le
    CH(f_\theta,f_{\theta'}).
\]
This proves the all-pair fixed-law chord in
Equation~\eqref{eq:fixed-law-chord}.

For Bernoulli coding,
\[
    w_I(p)
    =
    p^{|I|}(1-p)^{n-|I|}.
\]
The compact branch $p\in[p_-,p_+]$ gives
\[
    \sum_I
    \frac{
        \{w_I(p)-w_I(p')\}^2
    }{w_I(p')}
    \le
    C|p-p'|^2,
\]
while Equations~\eqref{eq:p-trace} and~\eqref{eq:population-p-nu-inverse}
give the reverse Lipschitz control.
Combining the weight and fixed-law changes proves
Equation~\eqref{eq:unknown-p-chord}.

\subsubsection{Point-adapted nuisance geometry and quadratic-mean differentiability}
\label{sec:supp-qmd}

The pairwise chord bounds identify the overall statistical geometry, but the
main theorem also requires the information carried by an infinitesimal
orientation perturbation after regular nuisance parameters are profiled out.
We therefore use a point-adapted tangent chart and verify quadratic-mean
differentiability, which turns the geometric scaling into an efficient Fisher
information bound.

At an interior shell point, we write
\[
    L=sRG,
    \qquad
    R\in O(U),
    \qquad
    G=G^\top>0,
\]
and use the point-adapted path
\begin{equation}
\begin{gathered}
    L_t
    =
    R(sG+tS)e^{t\Omega},
    \\
    \dot L
    =
    R(S+sG\Omega),
    \qquad
    S^\top=S,\quad
    \Omega^\top=-\Omega .
\end{gathered}
    \label{eq:sm-tech-s10-2}
\end{equation}
Together with local Euclidean paths in $(p,\nu,b)$ and the separated-atom
projector tangent, write
\[
    h=(\dot p,\dot\nu,\dot b,\dot P_a,S,\Omega),
\]
with $\dot p$ omitted for the supplementary fixed known law.

Direct differentiation gives
\begin{align}
    DG_2[h]
    &=
    sR(SG+GS)R^\top,
    \label{eq:sm-tech-s10-6}\\
    DG_3[h]
    &=
    \mathcal S_{R,G,s}[S]
    +(sRG)^{\otimes3}(\Omega\cdot T_q),
    \notag\\
    &\quad
    \|\mathcal S_{R,G,s}[S]\|_F
    \le
    Cs^2\|S\|_F .
    \label{eq:sm-tech-s10-8}
\end{align}
The Lyapunov map $S\mapsto SG+GS$ is uniformly coercive on the normalized
compact shape class, and Equation~\eqref{eq:sm-tech-s5-4} controls the orientation
component.  Therefore
\begin{equation}
\begin{aligned}
    \|DF_p(\theta)[h]\|^2
    &\asymp
    |\dot p|^2
    +|\dot\nu|^2
    +\|\dot b\|^2
    \\
    &\quad
    +\|\dot P_a\|_F^2
    +s^2\|S\|_F^2
    +s^6\|\Omega\|_F^2.
\end{aligned}
    \label{eq:sm-tech-s10-5}
\end{equation}

Let
\[
    \ell_{\theta,h}(y)
    =
    \frac{Df_\theta[h](y)}{f_\theta(y)},
    \qquad
    I_\theta(h)
    =
    \mathbb E_\theta
    \ell_{\theta,h}^2.
\]
The Gaussian derivative envelope gives the upper Fisher bound, while the
score--moment identity
\[
    D\mathbb E_\theta\Psi[h]
    =
    \mathbb E_\theta
    \{
        (\Psi-\mathbb E_\theta\Psi)\ell_{\theta,h}
    \}
\]
and the differentiated moment chord give the reverse bound.  Hence
\begin{equation}
    c\|h\|_{\mathrm{mix},\theta}^2
    \le
    I_\theta(h)
    \le
    C\|h\|_{\mathrm{mix},\theta}^2,
    \label{eq:sm-tech-s10-11}
\end{equation}
where the mixed norm is the right-hand side of
Equation~\eqref{eq:sm-tech-s10-5}.

On every compact interior subbranch, the finite Gaussian mixture is
quadratic-mean differentiable in this chart.
Indeed, the component weights and covariance matrices are twice continuously
differentiable, the covariance spectra are uniformly bounded above and away
from zero, and the Gaussian envelope controls first and second chart
derivatives in $L^2(f_\theta^{-1})$.
Taylor's theorem therefore gives
\[
    \left\|
        \sqrt{f_{\theta+u}}
        -
        \sqrt{f_\theta}
        -
        \frac12
        \frac{Df_\theta[u]}{\sqrt{f_\theta}}
    \right\|_2
    =
    o(\|u\|).
\]

Define the nuisance-profiled orientation information by
\begin{equation}
    I_{R\mid\eta,\theta}(\Omega)
    =
    \inf_{(\dot p,\dot\nu,\dot b,\dot P_a,S)}
    I_\theta
    (
        \dot p,\dot\nu,\dot b,\dot P_a,S,\Omega
    ).
    \label{eq:sm-tech-s10-16}
\end{equation}
Taking the infimum in the lower half of
Equation~\eqref{eq:sm-tech-s10-11}, and setting nuisance tangents to zero for the
upper half, gives
\begin{equation}
    cs^6\|\Omega\|_F^2
    \le
    I_{R\mid\eta,\theta}(\Omega)
    \le
    Cs^6\|\Omega\|_F^2,
    \label{eq:sm-tech-s10-17}
\end{equation}
which establishes Equation~\eqref{eq:profiled-orientation-information}.

\subsubsection{Proof of the calibration-geometry results}
\label{sec:supp-proof-training-geometry}

The preceding ingredients now assemble directly: the moment formulas give
identification, the subset expansion identifies the first orientation term,
the Gaussian envelope controls the remainder, and the tangent geometry gives
the profiled Fisher information.

\begin{proof}[Proof of Lemma~\ref{lem:moment-reduction} and
Theorem~\ref{thm:training-geometry}]

Conditional on the active set, one calibration observation is Gaussian with
covariance
\[
    C_I=\nu I_q+\lambda_K\sum_{j=1}^n I_jP_j.
\]
Exchangeability gives, for $i\ne j$,
\begin{equation}
    \mathbb E(\lambda_KI_j)=a_1,\qquad
    \mathbb E(\lambda_K^2I_j)=a_{2d},\qquad
    \mathbb E(\lambda_K^2I_iI_j)=a_{2o}.
    \label{eq:supp-exchangeable-coefficients}
\end{equation}
Therefore
\[
    M_2=\nu I_q+a_1\Sigma_2(D)
\]
and, by the conditional Gaussian fourth-moment identity,
\[
    K_4
    =
    3\left\{
        \mathbb E(C_I^{\odot2})
        -
        (\mathbb EC_I)^{\odot2}
    \right\}.
\]
Separating diagonal and off-diagonal atom pairs yields
\[
    \mathbb E
    \left(
        \lambda_K\sum_jI_jP_j
    \right)^{\odot2}
    =
    a_{2o}\Sigma_2^{\odot2}
    +(a_{2d}-a_{2o})\Sigma_4,
\]
so Equations~\eqref{eq:knownlaw-m2}--\eqref{eq:knownlaw-k4} hold and
\begin{equation}
    \Delta_{\mathcal L}
    =
    a_{2d}-a_{2o}
    =
    \frac{
        \mathbb E\{\lambda_K^2K(n-K)\}
    }{n(n-1)}.
    \label{eq:supp-delta-law}
\end{equation}
Under Equation~\eqref{eq:law-nondegeneracy}, the resulting population-moment map is
uniformly invertible on the compact known-law class.  For
Bernoulli--Gaussian coding,
$a_1=a_{2d}=p$ and $a_{2o}=p^2$, and
Equations~\eqref{eq:p-trace}--\eqref{eq:population-p-nu-inverse} give the stated
identification of $p$, $\nu$, $\Sigma_2$, and $\Sigma_4$.
This proves Lemma~\ref{lem:moment-reduction}.

The remaining claims follow from the technical results immediately above.
The projector expansion and exchangeable subset aggregation show that all
orientation dependence through degree two cancels after support averaging,
while the first residual orientation term is the cubic invariant
$G_3=L^{\otimes3}T_q$.  Under Bernoulli--Gaussian coding, the leading
orientation derivative on the centered balanced submodel is proportional to
$p(1-p)s^3(\Omega\cdot T_q)$.  Because
$p\in[p_-,p_+]\Subset(0,1/2)$, this coefficient is uniformly bounded away
from zero, and the simplex-orbit differential
Equation~\eqref{eq:sm-tech-s5-4} makes the derivative nonzero for every
nonzero quotient orientation tangent.  This gives
Equation~\eqref{eq:cubic-density-score}.  For the supplementary fixed known
law, the same calculation replaces $p(1-p)$ by
$\Delta_{\mathcal L}$, which is nondegenerate under
Equation~\eqref{eq:law-nondegeneracy}.

The Gaussian derivative envelope, the composed-remainder bound, and the
collapsed moment operator yield the Bernoulli--Gaussian chord in
Equation~\eqref{eq:unknown-p-chord} and, separately, the supplementary
fixed-law chord in Equation~\eqref{eq:fixed-law-chord}.  The simplex cubic
stabilizer and polar decomposition yield the physical quotient inverse
in Equation~\eqref{eq:physical-quotient-inverse}.  Finally, the point-adapted tangent
geometry and quadratic-mean differentiability give
\[
\begin{aligned}
    \|DF_p(\theta)[h]\|^2
    &\asymp
    |\dot p|^2+|\dot\nu|^2+\|\dot b\|^2
    \\
    &\quad+\|\dot P_a\|_F^2
    +s^2\|S\|_F^2+s^6\|\Omega\|_F^2,
\end{aligned}
\]
with the $p$ term omitted for the supplementary fixed known law.  Profiling over nuisance
tangents therefore gives
Equation~\eqref{eq:profiled-orientation-information}, and the centered balanced
submodel gives Equation~\eqref{eq:orientation-chord}.
\end{proof}

\subsection{Confidence correspondence and physical-resolution upper bound}

The main-text upper bound requires three distinct ingredients.  We first give
one explicit calibration region with uniform finite-sample coverage.  We then
verify that the resulting retain--project correspondence is measurable.
Finally, we quantify how deployment explanations separate when the retained
dictionary is known only up to a radius $\rho$.  Together these steps prove
truth retention and the physical-diameter bound.

\subsubsection{Explicit median-of-means calibration region}
\label{sec:supp-mom-region}

The main text deliberately treats calibration coverage as a modular
interface.  To show that this interface is nonempty and achieves the required
$N^{-1/2}$ scale in fixed dimension, we give one explicit robust
median-of-means construction satisfying
Equation~\eqref{eq:training-rectangle}.
Use fixed orthonormal bases of
$\operatorname{Sym}^2(\mathbb R^q)$ and
$\operatorname{Sym}^4(\mathbb R^q)$, of dimensions
\begin{equation}
    d_2=\binom{q+1}{2},
    \qquad
    d_4=\binom{q+3}{4}.
    \label{eq:sm-tech-s13-1}
\end{equation}
Let
\[
    X_2=\operatorname{symvec}_2(Y^{\otimes2}),
    \qquad
    X_4=\operatorname{symvec}_4(Y^{\otimes4}).
\]
Define
\[
    \Lambda
    =
    \nu_+
    +
    n\lambda_{\max},
\]
where $\lambda_{\max}=1$ for the Bernoulli--Gaussian model used in the main
text; for the supplementary fixed known-law extension, take
$\lambda_{\max}=\max_k\lambda_k$.
Every conditional covariance is bounded by $\Lambda I_q$, hence
\begin{align}
    v_2^2
    &:=
    \sup_\theta
    \mathbb E_\theta\|Y\|^4
    \le
    \Lambda^2q(q+2),
    \label{eq:sm-tech-s13-4}\\
    v_4^2
    &:=
    \sup_\theta
    \mathbb E_\theta\|Y\|^8
    \le
    \Lambda^4q(q+2)(q+4)(q+6).
    \label{eq:sm-tech-s13-5}
\end{align}

Choose $\alpha_2,\alpha_4>0$ with
$\alpha_2+\alpha_4\le\alpha_D$ and set, for $k\in\{2,4\}$,
\begin{equation}
    B_k
    =
    \left\lceil
        8\log\frac{2d_k}{\alpha_k}
    \right\rceil,
    \qquad
    m_k
    =
    \left\lfloor
        \frac N{B_k}
    \right\rfloor.
    \label{eq:sm-tech-s13-6}
\end{equation}
When $m_k\ge1$, split the first $m_kB_k$ observations into deterministic
blocks, compute every coordinate block mean, and take its lower median under a
fixed tie rule.
The scalar median-of-means inequality plus a coordinate union bound gives
\begin{align}
    \mathbb P_\theta^N
    \left\{
        \|\widehat M_2-M_2\|_F
        \le
        \epsilon_{2,N}
    \right\}
    &\ge
    1-\alpha_2,
    &
    \epsilon_{2,N}
    &=
    2v_2
    \sqrt{
        \frac{d_2}{m_2}
    },
    \label{eq:sm-tech-s13-7}\\
    \mathbb P_\theta^N
    \left\{
        \|\widehat M_4-M_4\|_F
        \le
        \epsilon_{4,N}^{\rm raw}
    \right\}
    &\ge
    1-\alpha_4,
    &
    \epsilon_{4,N}^{\rm raw}
    &=
    2v_4
    \sqrt{
        \frac{d_4}{m_4}
    }.
    \label{eq:sm-tech-s13-8}
\end{align}

Because normalized symmetrization is an orthogonal projection,
\[
    \|A\odot B\|_F
    \le
    \|A\|_F\|B\|_F.
\]
Also
\[
    \sup_\theta\|M_2(\theta)\|_F
    \le
    B_2^\star
    :=
    \sqrt q\,\Lambda.
\]
Therefore
\begin{equation}
    \|\widehat K_4-K_4\|_F
    \le
    \epsilon_{K,N}
    :=
    \epsilon_{4,N}^{\rm raw}
    +
    3\epsilon_{2,N}
    \left(
        2B_2^\star+\epsilon_{2,N}
    \right).
    \label{eq:sm-tech-s13-12}
\end{equation}
Thus Equation~\eqref{eq:training-rectangle} holds with probability at least
$1-\alpha_D$, uniformly over the fixed-dimensional class.
For fixed confidence level and fixed $q$, both certified radii are
$O(N^{-1/2})$ once
\[
    N
    \ge
    N_{\rm MoM}
    :=
    \max(B_2,B_4).
\]
Below this threshold, the full compact calibration class is retained.

This construction is only one valid realization of the modular calibration
coverage interface; any sharper estimator may replace it after its own
uniform coverage radii are established.

\subsubsection{Measurability of the confidence correspondence}
\label{sec:supp-measurability}

Coverage statements for a set-valued procedure require the reported
correspondence and its target-membership events to be measurable.  Because the
parameter space is quotiented by a finite permutation group and the deployment
fit involves compact minimization, this can be verified using standard
measurable-selection machinery.

Let $\widetilde{\mathcal D}_{q,s}$ be the closed, canonically signed labelled
dictionary shell, and define the compact quotient
\begin{equation}
    \mathcal M_{q,r,s}^p
    =
    \frac{
        [p_-,p_+]\times[\nu_-,\nu_+]
        \times\widetilde{\mathcal D}_{q,s}
        \times
        \left(
            \binom{[q]}r\sqcup\{\partial\}
        \right)
    }{\mathfrak S_q}.
    \label{eq:sm-tech-s11-5}
\end{equation}
The finite group acts simultaneously on coherent-atom labels and supports.
No freeness assumption is needed.

For $m=[p,\nu,D,S]$, define
\begin{equation}
L(z,m)=
\begin{cases}
\displaystyle
\min_{x_j\in[\beta_-,\beta_+],\,j\in S}
\left\|
    z-\sum_{j\in S}x_jd_j
\right\|_2,
& S\ne\partial,\\[3mm]
\displaystyle
\min_{\gamma\in\Gamma_A}
\|z-\gamma a\|_2,
& S=\partial.
\end{cases}
\label{eq:sm-tech-s11-7}
\end{equation}
Compact minimization shows that $L$ is jointly continuous on
$\mathbb R^q\times\mathcal M_{q,r,s}^p$.
The physical target map $\vartheta$ is likewise continuous.

For data $\omega=(y,z^T)$, define
\begin{equation}
\begin{aligned}
    \mathcal R(\omega)
    &=
    \left\{
        m\in\mathcal M_{q,r,s}^p:
        \right.\\
    &\hspace{8mm}\left.
        \pi_{\rm tr}(m)
        \in
        \widehat{\mathcal K}_{q,s}^p(y),
        \quad
        L(\bar z,m)\le\tau_{T,q}
    \right\}.
\end{aligned}
    \label{eq:sm-tech-s11-10}
\end{equation}
Its graph is Borel and its sections are compact.
The Arsenin--Kunugui projection theorem therefore makes all open-set hit
events Borel, including the nonempty-profile event.
Consequently
\[
    \widehat{\mathfrak C}
    =
    \vartheta\{\mathcal R(\omega)\}
\]
on the nonempty event, patched by the fixed admissible singleton on its
complement, defines a Borel measurable compact-valued correspondence.
The empty-profile indicator is also Borel measurable.

The target-membership relation is closed in
$\mathcal K(\mathfrak V_q)\times\mathfrak V_q$, so the conditional
deployment-coverage kernel is Borel in the calibration sample.
This justifies the conditional and marginal coverage statements in
Theorem~\ref{thm:honest-region}.

\subsubsection{Cross-dictionary separation constants}
\label{sec:supp-separation-constants}

The retained dictionary is not known exactly, so deployment alternatives must
be separated uniformly over nearby dictionaries rather than only at a fixed
dictionary.  The following calculations produce the two margins used in the
main text: $m_S(\rho)$ for competing coherent-block supports and
$m_G(\rho)$ for the coherent-block versus separated alternative.

Let
\[
    c_u=\sqrt{1-C_0^2s_0^2},
    \qquad
    \lambda_V
    =
    \kappa_-\sqrt{\frac q{q-1}}.
\]
For $h\in\mathbb R^q$, write
\[
    A=\mathbf1^\top h,
    \qquad
    h_0=h-\frac Aq\mathbf1 .
\]
Because the simplex frame is tight,
\[
    \|Vh_0\|
    =
    \sqrt{\frac q{q-1}}\|h_0\|.
\]
Using $\|b\|\le C_0s$ and
$\sigma_{\min}(L)\ge\kappa_-s$ gives a fixed constant $c_D>0$ such that
\begin{equation}
    \left\|
        \sum_{j=1}^qh_jd_j
    \right\|_2
    \ge
    c_Ds\|h\|_2.
    \label{eq:sm-tech-s12-3}
\end{equation}
Hence two distinct size-$r$ supports satisfy
\begin{equation}
    \inf_{x,x'}
    \|
        \mu(D,S,x)
        -
        \mu(D,S',x')
    \|_2
    \ge
    \sqrt2\,c_D\beta_-s .
    \label{eq:sm-tech-s12-10}
\end{equation}
Transport across dictionary distance $\rho$ yields
\begin{equation}
    m_S(\rho)
    =
    \left[
        \sqrt2c_D\beta_-s
        -
        C_Sr\beta_+\rho
    \right]_+,
    \label{eq:sm-tech-s12-13}
\end{equation}
which establishes Equation~\eqref{eq:support-margin}.

For coherent-group versus separated-atom separation, use
\[
    h_a=\frac{u-e_1}{\sqrt2}.
\]
The separated-atom neighborhood implies
\[
    |h_a^\top a|
    \le
    C_as_0,
\]
while every coherent-group mean satisfies
\[
    h_a^\top\mu(D,S,x)
    \ge
    \frac r{\sqrt2}
    \{
        \beta_-c_u
        -
        \beta_+C_0s_0
    \}.
\]
After shrinking $s_0$ if needed,
\begin{equation}
    g_G^0
    :=
    \frac r{\sqrt2}
    \{
        \beta_-c_u
        -
        \beta_+C_0s_0
    \}
    -
    \gamma_+C_as_0
    >
    0.
    \label{eq:sm-tech-s12-16}
\end{equation}
Cross-dictionary transport then gives
\begin{equation}
    m_G(\rho)
    =
    \left[
        g_G^0
        -
        C_G(r\beta_++\gamma_+)\rho
    \right]_+,
    \label{eq:sm-tech-s12-17}
\end{equation}
which establishes Equation~\eqref{eq:parent-margin}.

Finally, any two retained deployment candidates have fitted means within
$2\tau_{T,q}$.
Applying the two margins above to all candidate-pair types yields
\begin{equation}
\begin{aligned}
    \operatorname{diam}_{\mathrm{pr}}
    (
        \widehat{\mathfrak C}
    )
    &\le
    C\rho
    +
    Cs\mathbf1
    \{
        2\tau_{T,q}\ge m_S(\rho)
    \}
    \\
    &\quad
    +
    C\mathbf1
    \{
        2\tau_{T,q}\ge m_G(\rho)
    \},
\end{aligned}
    \label{eq:sm-tech-s12-18}
\end{equation}
which gives the deterministic diameter bound used in
Theorem~\ref{thm:honest-region}.

\subsubsection{Proof of truth retention, coverage, and the physical-resolution upper bound}
\label{sec:supp-proof-honest-region}

We now combine the three ingredients above.  Calibration coverage retains the
true dictionary with probability at least $1-\alpha_D$, deployment coverage
retains the true representation conditionally with probability at least
$1-\alpha_T$, and the separation margins determine how much physical
coexistence can remain among the retained explanations.

\begin{proof}[Proof of Proposition~\ref{prop:profile-well-defined} and
Theorem~\ref{thm:honest-region}]

The explicit median-of-means construction above establishes
Equation~\eqref{eq:training-rectangle}; below its fixed block threshold, the full compact
calibration class is retained.  The measurability results above show that the
joint feasible set has a Borel graph with compact sections and that
$\widehat{\mathfrak C}_{q,r,s}^{p}$ can be chosen as a measurable
compact-valued correspondence.  This proves the structural part of
Proposition~\ref{prop:profile-well-defined}.

Let
\[
    E_D
    =
    \left\{
        (p_\star,\nu_\star,[D_\star])
        \in
        \widehat{\mathcal K}_{q,s}^{p}
    \right\}.
\]
By Equation~\eqref{eq:training-rectangle},
$\mathbb P(E_D)\ge1-\alpha_D$.
Conditional on any calibration realization in $E_D$,
\[
    \|\bar Z-\mu_\star\|_2\le\tau_{T,q}
\]
has probability at least $1-\alpha_T$, uniformly over the allowed deployment
noise levels, and retains the true support and coefficient witness.
Hence
\[
    \vartheta_\star
    \in
    \widehat{\mathfrak C}_{q,r,s}^{p},
    \qquad
    F_{\mathrm{empty}}=0
\]
on the intersection of the two truth-retention events.
This proves Proposition~\ref{prop:profile-well-defined},
Equation~\eqref{eq:two-level-coverage}, and, by independence,
Equation~\eqref{eq:marginal-coverage}.

For two members of the same nonempty calibration region,
Equations~\eqref{eq:bg-moments}--\eqref{eq:population-p-nu-inverse} imply
\begin{equation}
    |p-p'|
    +
    \delta_F\{(D,\nu),(D',\nu')\}
    \le
    C\epsilon_N .
    \label{eq:supp-profile-F-radius}
\end{equation}
Combining this with the physical quotient inverse gives
Equation~\eqref{eq:dictionary-radius}; the certified
$\epsilon_N=O(N^{-1/2})$ rate gives
Equation~\eqref{eq:dictionary-radius-rate}.

It remains to be controlled for the coexistence of retained deployment explanations.
The cross-dictionary separation lemmas above provide the support margin
$m_S(\rho)$ and coherent-group/separated margin $m_G(\rho)$.
Any two retained candidates have fitted means within
$2\tau_{T,q}$ of each other.  Therefore, candidates with different deployment
alternatives cannot coexist when
$2\tau_{T,q}<m_G(\rho)$, and distinct coherent-group supports cannot coexist
when $2\tau_{T,q}<m_S(\rho)$.  When the support margin is positive, the
quotient geometry also gives a unique matching of nearby coherent atoms;
candidates on the same support branch then differ by $O(\rho)$ in physical distance.
Applying these cases with $\rho=\rho_N(s)$ yields
Equation~\eqref{eq:three-scale-upper-bound} and completes the proof.
\end{proof}

\subsection{Minimax physical resolution}

The upper bound shows what the proposed correspondence achieves.  To prove
that the rate is intrinsic, we compare any uniformly valid correspondence on
carefully chosen pairs of nearby statistical models whose physical targets
remain separated.  The generic two-point lemma below converts such
indistinguishability into a lower bound on expected physical diameter.

\subsubsection{Interior amplitude condition and generic two-point bound}
\label{sec:supp-minimax-setup}

The first lower-bound pair compares the coherent-block explanation with the
separated alternative.  To make that comparison vary only the physical
explanation rather than hit an amplitude boundary, we require the projected
coherent-block mean to lie strictly inside the allowed separated-amplitude
range:
\begin{equation}
\left\{
\begin{aligned}
    a_0^\top\sum_{j\in S}x_jd_j:
    S\in\binom{[q]}r,\;
    x_j\in[\beta_-,\beta_+],\;
    \\
    b=0,\;
    L=\lambda_\star sR,\;
    R\in O(U)
\end{aligned}
\right\}
\subset
\operatorname{int}(\Gamma_A).
\label{eq:anchor-amplitude-transversality}
\end{equation}
This condition permits the two deployment alternatives to be paired without
forcing either one to the boundary of its allowed amplitude range.

\begin{lemma}[Coverage at two alternatives implies nonzero physical diameter]
\label{lem:honesty-diameter}
Let $\vartheta(\xi_i)=(m_i,A_i)$, $i=0,1$, and set
\[
    d=d_H^{\mathrm{pr}}(A_0,A_1).
\]
If a random compact correspondence $\widehat C$ has coverage at least
$1-\alpha$ at both parameter values, then
\begin{equation}
    \mathbb E_{\xi_0}
    \operatorname{diam}_{\mathrm{pr}}(\widehat C)
    \ge
    d\left[
        1-2\alpha
        -
        \operatorname{TV}
        \left(
            \mathbb P_{\xi_0}^{N,T},
            \mathbb P_{\xi_1}^{N,T}
        \right)
    \right]_+ .
    \label{eq:supp-two-point-diameter}
\end{equation}
\end{lemma}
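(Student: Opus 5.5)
The plan is the standard two-point argument: split on the joint event that the correspondence covers both targets, and then pass the joint event from $\mathbb P_{\xi_1}$ back to $\mathbb P_{\xi_0}$ through total variation.

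First, a deterministic containment. On the event
\[
E=\{\vartheta(\xi_0)\in\widehat C\}\cap\{\vartheta(\xi_1)\in\widehat C\},
\]
both targets $(m_0,A_0)$ and $(m_1,A_1)$ belong to $\widehat C$. By the definition in Equation~\eqref{eq:physical-diameter}, the supremum over pairs in $\widehat C$ is then at least $d_H^{\mathrm{pr}}(A_0,A_1)=d$. Since the diameter is nonnegative, we get pointwise
\[
\operatorname{diam}_{\mathrm{pr}}(\widehat C)\ge d\,\mathbf 1_E,
\]
and hence
\[
\mathbb E_{\xi_0}\operatorname{diam}_{\mathrm{pr}}(\widehat C)\ge d\,\mathbb P_{\xi_0}(E).
\]
Measurability of $E$ comes from the closedness of the target-membership relation noted in Section~\ref{sec:supp-measurability}; I take it as given for the random compact correspondence.

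Second, the lower bound on $\mathbb P_{\xi_0}(E)$. Write $A=\{\vartheta(\xi_0)\in\widehat C\}$ and $B=\{\vartheta(\xi_1)\in\widehat C\}$. Both are events in the data $\omega$, because the targets are fixed points and $\widehat C$ is a function of the data only. By the union bound,
\[
\mathbb P_{\xi_0}(A\cap B)\ge 1-\mathbb P_{\xi_0}(A^c)-\mathbb P_{\xi_0}(B^c).
\]
Coverage at $\xi_0$ gives $\mathbb P_{\xi_0}(A^c)\le\alpha$. The event $B^c$ is a fixed measurable set in sample space, so
\[
\mathbb P_{\xi_0}(B^c)\le\mathbb P_{\xi_1}(B^c)+\operatorname{TV}(\mathbb P_{\xi_0}^{N,T},\mathbb P_{\xi_1}^{N,T})\le\alpha+\operatorname{TV}.
\]
Combining,
\[
\mathbb P_{\xi_0}(E)\ge 1-2\alpha-\operatorname{TV}.
\]
Taking the positive part (trivially valid, since the expectation is nonnegative) yields Equation~\eqref{eq:supp-two-point-diameter}.

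The only delicate point is the TV transfer in the second step. Coverage of $\vartheta(\xi_1)$ is guaranteed only under $\mathbb P_{\xi_1}$, and the event $B$ must be data-measurable and must not depend on the true parameter; this holds because $\vartheta(\xi_1)$ is a fixed target. The marker component $m_i$ plays no role, since the physical diameter uses only $d_H^{\mathrm{pr}}$. The lemma therefore holds even when $m_0\ne m_1$, which the local-explanation comparison $\mathbf 1\{I_G^{(r)}\le c_G\}$ in Theorem~\ref{thm:fixed-shell-minimax} needs. In that comparison $d$ is order one because a coherent-block atom and the separated atom are a fixed angle apart.
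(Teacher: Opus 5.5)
Your proof is correct and is the generic two-point argument the paper relies on; the paper states this lemma without writing out a proof. Your argument supplies the omitted step: the pointwise bound $\operatorname{diam}_{\mathrm{pr}}(\widehat C)\ge d\,\mathbf 1_E$, then the union bound, then transferring the $\xi_1$-coverage event to $\mathbb P_{\xi_0}$ at a total-variation cost, which matches how the lemma is applied (including with $m_0\ne m_1$) in the proof of Theorem~\ref{thm:fixed-shell-minimax}.
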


\subsubsection{Proof of the minimax lower bound and resolved optimality}
\label{sec:supp-proof-minimax}

The three lower-bound pairs correspond exactly to the three uncertainty
sources in the main text: local explanation, atom support, and physical
orientation.  After proving those lower terms, we return to the upper bound
and show that once the two deployment-side distinctions are resolved, only the
calibration-limited orientation rate remains.

\begin{proof}[Proof of Theorem~\ref{thm:fixed-shell-minimax} and
Corollaries~\ref{cor:known-learned-gap}--\ref{cor:oracle-persistence}]

The generic two-point diameter bound is given in
Lemma~\ref{lem:honesty-diameter}.  We apply it to three two-point submodels.

\paragraph{Coherent-group versus separated alternative.}
On the balanced submodel, choose
$x=\beta_-\mathbf 1_S$, $\sigma=\sigma_+$, and
$\gamma_\star=a_0^\top\mu_F\in\operatorname{int}(\Gamma_A)$.
The calibration laws are identical and
\begin{equation}
    \mathrm{KL}_{N,T}
    =
    \frac{T}{2\sigma_+^2}
    \|
        (I-P_{a_0})\mu_F
    \|_2^2
    \le
    C I_G^{(r)}.
    \label{eq:supp-group-pair-KL}
\end{equation}
The physical-support components remain separated by a fixed positive amount.
Equation~\eqref{eq:supp-two-point-diameter} therefore yields the order-one
lower term.

\paragraph{Wrong-support pair.}
Keep the dictionary fixed, choose two supports sharing $r-1$ atoms, and use
the common coefficient $\beta_-$.  If $i,j$ are the exchanged atoms,
\begin{equation}
    \|\mu_S-\mu_{S'}\|_2^2
    =
    2\beta_-^2\lambda_\star^2s^2
    \frac q{q-1},
    \label{eq:supp-support-pair-mean}
\end{equation}
so the joint KL is $O(I_S)$, while the projective Hausdorff separation of the
physical supports is $O(s)$ and bounded below by a fixed multiple of $s$.
This gives the order-$s$ lower term.

\paragraph{Compensated orientation pair.}
At the centered balanced submodel with equal active coefficients, let
\[
    w_S=\sum_{j\in S}v_j.
\]
Choose a nonzero skew generator $\Omega$ that fixes $w_S$ while moving at
least one active vertex, and normalize it so that
$\max_{j\in S}\|\Omega v_j\|=1$.
For sufficiently small fixed $h_0$, the same-label matching remains optimal
along $R\exp(h\Omega)$ and
\begin{equation}
    cs|h|
    \le
    d_{\mathrm{phys}}
    \{
        \vartheta(D_0,S),
        \vartheta(D_h,S)
    \}
    \le
    Cs|h|,
    \qquad
    |h|\le h_0 .
    \label{eq:supp-orientation-target-secant}
\end{equation}
The complete deployment law is identical along this compensated path, while
the calibration KL is at most $CNs^6h^2$ by
Equation~\eqref{eq:unknown-p-chord} with $p$ held fixed.
Taking
\[
    |h|
    \asymp
    \min\left\{
        h_0,
        (\sqrt N\,s^3)^{-1}
    \right\}
\]
and applying Equation~\eqref{eq:supp-two-point-diameter} yields
\[
    s\wedge\frac{1}{\sqrt N\,s^2}.
\]
Together, the three pairs prove Equation~\eqref{eq:three-gate-lower-bound}.

For the matching upper bound, set
\[
    R_D(s,N)
    =
    s\wedge(\sqrt N\,s^2)^{-1}.
\]
Equation~\eqref{eq:dictionary-radius-rate} gives
$\rho_N\le C_\rho R_D$.
Choose the fixed shell radius and the resolution thresholds so that the
coherent-group/separated indicator in
Equation~\eqref{eq:three-scale-upper-bound} vanishes whenever $I_G^{(r)}$ exceeds its
upper threshold.
If $R_D$ is a fixed fraction of $s$, the support term is already
$O(R_D)$; if $R_D$ is smaller, then $\rho_N=o(s)$, the support margin is
positive, and the upper threshold on $I_S$ removes the support indicator.
Thus the right-hand side of Equation~\eqref{eq:three-scale-upper-bound} is
$O(R_D)$, proving Equation~\eqref{eq:resolved-minimax-rate}.

For Corollary~\ref{cor:known-learned-gap}, set the dictionary uncertainty to
zero.  Then $I_S\to\infty$ separates the fine supports and
$I_G^{(r)}\to\infty$ excludes the separated alternative, so the physical
diameter tends to zero.  In the learned-dictionary problem, the compensated
orientation pair retains an order-$s$ physical displacement whenever
$Ns^6$ remains bounded.

For Corollary~\ref{cor:oracle-persistence}, the same compensated path already
fixes the coherent group, support, equal coefficients, separated atom,
coefficient law, noise levels, and all nonorientation nuisance parameters.
Hence the lower bound survives after these quantities are revealed by an
oracle.
\end{proof}

\subsection{Deployment-assisted orientation information}

The main-text secant identifies when deployment replication contains
orientation information.  Here we make the restricted-orbit rate precise and
then work out the two-atom case, where the abstract secant reduces to a simple
coefficient contrast.

\subsubsection{Restricted orientation orbit for the matching rate}
\label{sec:supp-task-orbit}

A matching rate cannot hold on an arbitrary orientation family without
controlling both physical target displacement and deployment mean
displacement.  We therefore restrict to a compact one-dimensional orientation
orbit on which both secants are uniformly comparable to $|\phi-\phi'|$.

For the matching finite-rate statement in
Theorem~\ref{thm:task-symmetry}, fix interior $p_0,\nu_0$, the separated atom,
a support $S_0$, all nonorientation dictionary coordinates, and known
deployment noise $\sigma$.  Consider the compact orientation orbit
\begin{equation}
\begin{gathered}
    D_\phi
    =
    D\{
        b=0,\,
        L=\lambda_\star sR\exp(\phi\Omega),\,
        P_a=P_{a_0}
    \},
    \\
    |\phi|\le h_0,
\end{gathered}
    \label{eq:restricted-orbit}
\end{equation}
and a compact coefficient set
$\mathcal X_{\mathrm{orb}}\Subset(\beta_-,\beta_+)^r$.  Assume that,
uniformly on this orbit,
\begin{align}
    c_Vs|\phi-\phi'|
    &\le
    d_{\mathrm{phys}}
    \notag\\
    &\quad
    \left\{
        \vartheta(D_\phi,S_0),
        \vartheta(D_{\phi'},S_0)
    \right\}
    \le
    C_Vs|\phi-\phi'|,
    \label{eq:target-secant}\\
    cs\chi_s|\phi-\phi'|
    &\le
    \inf_{x'\in\mathcal X_{\mathrm{orb}}}
    \notag\\
    &\quad
    \|
        \mu(D_\phi,S_0,x)
        -
        \mu(D_{\phi'},S_0,x')
    \|_2
    \notag\\
    &\le
    Cs\chi_s|\phi-\phi'|.
    \label{eq:test-mean-secant}
\end{align}
The first condition makes physical target displacement locally proportional to
$s|\phi-\phi'|$; the second defines the corresponding profiled deployment
secant scale $\chi_s$.

\subsubsection{Two-atom physical target secant}
\label{sec:supp-two-atom-secant}

The general secant is easiest to interpret for two active coherent atoms.  By
rotating their difference while fixing their sum, equal coefficients cancel
the orientation change exactly, whereas a nonzero coefficient contrast makes
the change observable.  This gives the concrete formula used in the main-text
corollary.

For $S=\{1,2\}$ on the balanced submodel, write
\[
    e=v_1+v_2,
    \qquad
    f=v_1-v_2.
\]
Choose $g\perp\operatorname{span}\{e,f\}$ with $\|g\|=\|f\|$, and a skew
generator $\Omega$ such that
\[
    \Omega e=0,
    \qquad
    \exp(h\Omega)f
    =
    \cos(h)f+\sin(h)g.
\]
Then
\[
    \Omega v_1=\frac12g,
    \qquad
    \Omega v_2=-\frac12g.
\]
For the coherent atoms along this orbit, a sufficiently short compact chart
satisfies
\begin{equation}
\begin{aligned}
    c_1s|h-h'|
    &\le
    d_{\mathrm{pr}}
    (
        [d_j(h)],
        [d_j(h')]
    )
    \\
    &\le
    C_1s|h-h'|,
    \qquad
    j=1,2.
\end{aligned}
    \label{eq:sm-tech-s6-2}
\end{equation}
Choosing the chart short enough that cross-label matches remain separated
makes the same-label matching Hausdorff-optimal, so
\begin{equation}
\begin{aligned}
    c_Vs|h-h'|
    &\le
    d_H^{\mathrm{pr}}
    \left(
        \{[d_1(h)],[d_2(h)]\},
        \{[d_1(h')],[d_2(h')]\}
    \right)
    \\
    &\le
    C_Vs|h-h'|.
\end{aligned}
    \label{eq:sm-tech-s6-3}
\end{equation}

For coefficients
\[
    x_1=\bar\beta+\frac d2,
    \qquad
    x_2=\bar\beta-\frac d2,
\]
exact coefficient profiling yields
\begin{equation}
    \inf_{\bar\beta',d'\in\mathbb R}
    \|
        \mu_{h,\bar\beta,d}
        -
        \mu_{h',\bar\beta',d'}
    \|_2
    =
    \frac{
        \lambda_\star s|d|\|f\|_2
    }2
    |\sin(h-h')|.
    \label{eq:sm-tech-s6-4}
\end{equation}
On any compact sign-preserving coefficient set with
$|d|\asymp|d_0|>0$, this verifies the two-sided deployment secant used in
Theorem~\ref{thm:task-symmetry}.
On the separate exact slice $d=0$, the deployment mean is invariant along the
orientation orbit.

\subsubsection{Proof of the deployment-information results}
\label{sec:supp-proof-task}

We now combine coefficient profiling with the Gaussian deployment likelihood.
The projected tangent gives the efficient deployment information, calibration and deployment KL contributions add by independence on the restricted orbit,
and the two-point diameter bound gives the matching lower rate.

\begin{proof}[Proof of Theorem~\ref{thm:task-symmetry} and
Corollary~\ref{cor:two-atom-contrast}]

Let
\[
    w_{S,x}
    =
    \sum_{j\in S}x_jv_j,
    \qquad
    \mathcal V_S^0
    =
    \left\{
        \sum_{j\in S}c_jv_j:
        \sum_{j\in S}c_j=0
    \right\}.
\]
At the centered balanced submodel, an orientation tangent changes the
deployment mean by
\[
    \dot\mu_R
    =
    \lambda_\star sR\Omega w_{S,x}.
\]
Coefficient perturbations with zero sum span
$\lambda_\star sR\mathcal V_S^0$, while a nonzero coefficient sum creates an
axial component and cannot cancel a pure order-$s$ transverse orientation
change.  Orthogonal projection therefore gives
Equation~\eqref{eq:tangent-secant}, and the Gaussian likelihood gives
Equation~\eqref{eq:test-efficient-information}.

For the restricted orientation orbit, two retained parameters separated by
$h=\phi-\phi'$ must satisfy
\[
    s^3|h|\lesssim N^{-1/2}
\]
from the calibration profile and
\[
    s\chi_s|h|\lesssim \sigma T^{-1/2}
\]
from the deployment profile.  The target secant therefore gives the upper
rate in Equation~\eqref{eq:restricted-task-rate}.  For the lower bound, we choose
\[
    |h|
    \asymp
    \min\left\{
        h_0,
        \left(
            Ns^6+T\chi_s^2s^2/\sigma^2
        \right)^{-1/2}
    \right\}.
\]
Calibration and deployment KL divergences add by independence, while the
target separation is at least $cs|h|$.
The two-point diameter bound in
Equation~\eqref{eq:supp-two-point-diameter} gives the matching lower rate.

For $r=2$, the technical secant calculation above gives
\[
    \inf_{\bar\beta',d'\in\mathbb R}
    \|
        \mu_{\phi,\bar\beta,d}
        -
        \mu_{\phi',\bar\beta',d'}
    \|_2
    =
    \frac{
        \lambda_\star s|d|\|v_1-v_2\|_2
    }{2}
    |\sin(\phi-\phi')|.
\]
On the declared sign-preserving coefficient set,
$|d|\asymp|d_0|$, which verifies the two-sided deployment secant and yields
Equation~\eqref{eq:amplitude-crossover}.  When $d=0$, the deployment mean is exactly
invariant along the orientation path, so deployment replication provides no
orientation information.
\end{proof}

%% file: appendices/B_finite_bank_and_numerical_details.tex
\section{Finite-Bank Implementation and Numerical Details}
\label{sec:supp-computation}

This section records the implementation details needed to reproduce the
numerical studies in Section~\ref{sec:experimental-evaluation}.  The first
part isolates the two theoretical mechanisms tested numerically: sixth-order
calibration sensitivity and coefficient-dependent deployment information.  The
second part specifies how a finite-bank candidate is evaluated and why the
held-out likelihood-ratio rule retains an on-bank truth candidate.  The final
parts describe the global and four-region studies, including the bank
construction, data splits, query policies, denominators, and query accounting.

These calculations have deliberately different roles.  The theory-guided
diagnostic illustrates a local mechanism and does not numerically implement
the continuous confidence correspondence.  The AEB experiments compare
adaptive evaluation with exhaustive evaluation of the same specified finite
bank; the banks are not assumed to approximate or outer-cover the continuous
parameter space.

\subsection{Theory-guided numerical diagnostics}
\label{sec:supp-numerical}

The purpose of this calculation is not to test an optimization algorithm.
Instead, it asks whether the two local mechanisms derived in
Section~\ref{sec:results} are visible directly in the controlled probability
model: does calibration distinguish orientation only at sixth order, and does
deployment become informative only when the coefficient profile exposes the
orientation change?

\subsubsection{Balanced model and exact calibration mixture}

We use the balanced $q=4$ construction because it keeps the local geometry
exactly controlled while still allowing a nontrivial orientation path.  The
resulting Bernoulli--Gaussian calibration law is a finite Gaussian mixture, so
the distributional scaling can be evaluated without fitting a dictionary.

Let $v_1,\ldots,v_4\in\mathbb R^3$ be the tetrahedral vertices
\[
\begin{gathered}
3^{-1/2}(1,1,1),\quad
3^{-1/2}(1,-1,-1),\\
3^{-1/2}(-1,1,-1),\quad
3^{-1/2}(-1,-1,1).
\end{gathered}
\]
With $u=e_4$, define the coherent atoms along the orientation path by
\begin{equation}
d_j(s,\phi)
=
\sqrt{1-s^2}\,u+sR_\phi v_j,
\label{eq:supp-numerical-path}
\end{equation}
where $R_\phi$ rotates about $v_1+v_2$.
Thus $R_\phi(v_1+v_2)=v_1+v_2$, and the deployment mean on
$S=\{1,2\}$ with equal coefficients is invariant.
The separated atom is $(u+e_1)/\sqrt2$.

We use
\[
\begin{gathered}
s\in\{0.045,0.055,0.067,0.082,0.100,0.122\},
\\
p=0.2,
\qquad
\nu=0.5,
\end{gathered}
\]
unit active-code variance, and path parameter $h=0.25$.
The skew generator is normalized by
\[
    \max_{j\in S}\|\Omega v_j\|=1.
\]

For a support mask $a\in\{0,1\}^5$, let
\[
w_a
=
p^{|a|}(1-p)^{5-|a|},
\qquad
\Sigma_a(\phi)
=
\nu I_4
+
D_\phi\operatorname{diag}(a)D_\phi^\top.
\]
The one-observation calibration density is evaluated as the exact
32-component Gaussian mixture
\begin{equation}
f_{s,\phi}(y)
=
\sum_{a\in\{0,1\}^5}
w_a\,
\varphi_4\{y;0,\Sigma_a(\phi)\}.
\label{eq:supp-exact-mixture}
\end{equation}
Hence this diagnostic uses neither a dictionary-learning optimizer nor a
numerical approximation to the continuous confidence region.

\subsubsection{Stable divergence calculation}

In the highly coherent regime, the two orientation distributions are extremely
close, so direct subtraction of likelihood-based divergences is numerically
fragile.  We therefore use midpoint identities that remain nonnegative and
stable near equality, and we verify the resulting power law both by Monte
Carlo and by deterministic Gauss--Hermite quadrature.

Let
\[
\begin{gathered}
P=f_{s,0},
\qquad
Q=f_{s,\phi},
\\
M=\frac{P+Q}{2},
\qquad
L=\log\frac{dP}{dQ}.
\end{gathered}
\]
To avoid cancellation in the near-coherent regime, we use the nonnegative
midpoint identities
\begin{align}
J(P,Q)
&=
2\,\mathbb E_M\{L\tanh(L/2)\},
\label{eq:supp-jeffreys-midpoint}\\
1-A(P,Q)
&=
\mathbb E_M
\left[
\frac{\tanh^2(L/2)}
{1+\operatorname{sech}(L/2)}
\right],
\label{eq:supp-affinity-midpoint}
\end{align}
where $J$ is Jeffreys divergence and
$A(P,Q)=\int\sqrt{dP\,dQ}$ is Hellinger affinity.

At each value of $s$, 128 common-random-number batches of size 8192 give
$2^{20}$ midpoint observations.
Paired resampling across the six scales produces the reported slope interval.
As an independent deterministic check, tensor Gauss--Hermite rules of orders
10 and 14 integrate every Gaussian component in
Equation~\eqref{eq:supp-exact-mixture}; the order-14 all-scale fit has exponent
$5.9342$ and $R^2=0.99999$.

For $N$ independent calibration observations,
\begin{equation}
A(P^{\otimes N},Q^{\otimes N})
=
A(P,Q)^N.
\label{eq:supp-product-affinity}
\end{equation}
For two Gaussian deployment means with common covariance
$\sigma^2I/T$, the affinity is
\begin{equation}
\exp\left\{
-\frac{T\|\mu_0-\mu_1\|_2^2}{8\sigma^2}
\right\}.
\label{eq:supp-gaussian-affinity}
\end{equation}
These identities are used only to visualize the separate calibration,
support-discrimination, and deployment-model information scales.

\ifdefined\TSPPUBLICATION
\begin{table*}[t]
\else
\begin{table}[htbp]
\fi
\caption{Theory-guided numerical diagnostics.  Monte Carlo intervals use the
128 paired batches; quadrature and deployment checks are deterministic.}
\label{tab:supp-numerical-summary}
\centering
\small
\begin{tabularx}{\linewidth}{@{}
  >{\raggedright\arraybackslash}p{.27\linewidth}
  >{\raggedright\arraybackslash}p{.35\linewidth}
  >{\raggedright\arraybackslash}X@{}}
\toprule
Diagnostic & Result & Interpretation \\
\midrule
Jeffreys $s$-exponent
& 5.9355; central 95\% paired-batch MC range [5.9249, 5.9468]
& sixth-order scaling; numerical stability \\
Jeffreys log--log $R^2$
& 0.99999
& stable scaling \\
Order-14 quadrature exponent
& 5.9342
& independent check \\
Maximum cross-$s$ affinity spread
& 0.0243
& approximate collapse \\
Median transition-band spread
& 0.0167
& finite-$s$ drift \\
Prespecified spread tolerance
& 0.015
& not met \\
Equal-coefficient deployment residual
& $6.82\times10^{-16}$
& invariant to numerical precision \\
Contrast-formula relative error
& $7.01\times10^{-15}$
& analytical agreement \\
\bottomrule
\end{tabularx}
\ifdefined\TSPPUBLICATION
\end{table*}
\else
\end{table}
\fi

At each fixed separation scale, 128 common-random-number batches of 8192
midpoint observations---4096 from each endpoint mixture---estimate the
Jeffreys divergence, for $2^{20}$ observations per scale.  The same batch
index is paired across all six fixed scales.  For each of 2000 bootstrap
replicates (seed 2026072202), 128 batch indices are sampled with replacement,
the batch estimates are averaged within each scale, and the ordinary
least-squares slope of log mean divergence against log separation scale is
recomputed.  The reported central 95\% range is the 2.5th and 97.5th
percentiles of those slopes.  It is a Monte Carlo stability diagnostic
conditional on the fixed mixture model and grid, not uncertainty across
independent datasets.  The underlying midpoint Monte Carlo seed is 2026072201;
the per-scale error bars are separate Student-$t$ intervals over the 128 batch
means.

\begin{figure}[htbp]
\centering
\includegraphics[width=.82\linewidth]{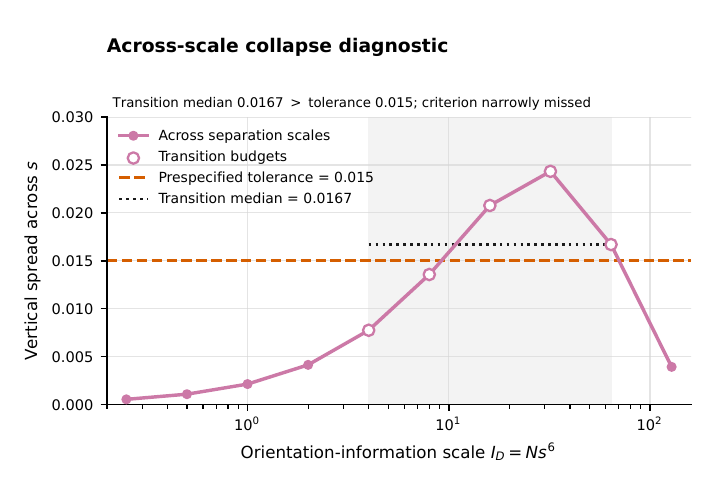}
\caption{Across-scale orientation-information collapse diagnostic underlying
Figure~\ref{fig:theorem-native-mechanism}.  At each stored
$I_D=Ns^6$, the plotted vertical spread is the maximum minus the minimum
product affinity across the saved separation scales.  Over the prespecified
transition budgets $I_D\in\{4,8,16,32,64\}$, the median spread is $0.0167$,
which narrowly exceeds the prespecified tolerance $0.015$.  The criterion is
therefore recorded as missed and is not used as evidence of numerical coverage
for the continuous confidence correspondence.}
\label{fig:supp-collapse-diagnostic}
\end{figure}

\subsubsection{Deployment task control and robustness checks}

The deployment calculation is designed to separate the effect of replication
from the effect of the coefficient profile.  Equal coefficients provide an
exact invariance control: the orientation changes physically, but the
deployment mean can remain unchanged after coefficient profiling.  Nonzero
contrasts provide the complementary informative cases predicted by
Corollary~\ref{cor:two-atom-contrast}.

\begin{figure}[htbp]
\centering
\includegraphics[width=.68\linewidth]{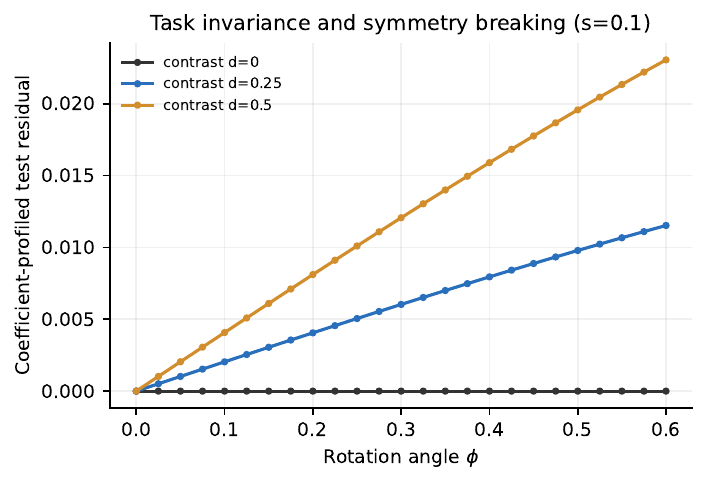}
\caption{Coefficient-profiled deployment residual at $s=0.1$.
Equal coefficients are invariant to numerical precision, whereas nonzero
coefficient contrasts follow the closed form in
Equation~\eqref{eq:two-atom-profile}.}
\label{fig:supp-task-invariance}
\end{figure}

For contrasts $d\in\{0.25,0.5\}$, the numerical least-squares residual is
compared with
\[
\frac{s|d|\|v_1-v_2\|_2}{2}|\sin\phi|.
\]
The largest relative discrepancy over the nonzero displayed cells is
$7.01\times10^{-15}$.
Optimal projective matching preserves the same coherent-atom labels throughout
the displayed path.

Additional robustness checks were performed after the primary calculation.
The fitted $h$-exponent is $1.943$ with $R^2=0.99983$.
Across the nine cells
\[
p\in\{0.1,0.2,0.35\},
\qquad
\nu\in\{0.3,0.5,1.0\},
\]
every log--log $R^2$ exceeds $0.9999$.
The smallest fitted $s$-exponent is $5.759$, occurring at the smallest
activation probability and noise level.
These calculations are reported as robustness diagnostics, not as a new
fitted theory.

\subsection{Finite-bank evaluator and certificate implementation}
\label{sec:supp-aeb-protocol}

This subsection records the implementation corresponding to the finite-bank
construction in Section~\ref{sec:finite-bank-controller}.  The finite bank
\[
    \mathcal B=\{e_1,\ldots,e_M\}
\]
and the requested report family are fixed before held-out candidate
evaluation.  Unqueried or numerically indeterminate candidates remain
possible explanations throughout the adaptive run.

For the represented-scale absence fields used in the application, if
$\lambda_R(e)$ is the source strength represented by candidate $e$ in region
$R$, the full predicate is
\begin{equation}
    q_{\mathrm{abs}(R,b_R)}(e)
    =
    \mathbf 1\{\lambda_R(e)<b_R\}.
    \label{eq:thresholded-absence-predicate}
\end{equation}
This assertion is relative to the candidate family and source-strength range
represented in the finite bank.

For candidate $e$, let $\ell_t(e)$ denote its log likelihood-ratio e-value at
checkpoint $t$, and define
\begin{equation}
    \tau_{\alpha_{\mathcal B}}
    =
    -\log\alpha_{\mathcal B},
    \qquad
    m(e)
    =
    \max_t\ell_t(e)-\tau_{\alpha_{\mathcal B}} .
    \label{eq:supp-log-e-margin}
\end{equation}
After the required numerical evaluation, $m(e)\le0$ is classified
\textsc{admissible} and $m(e)>0$ is classified \textsc{rejected}.
A queried candidate whose sign cannot be certified is
\textsc{indeterminate}; it remains in the possible set.

After $k$ logical queries, the implementation maintains
\begin{equation}
\begin{aligned}
    L_k
    &=
    \{
        e\in\mathcal B:
        e\text{ has been queried and is admissible}
    \},\\
    U_k
    &=
    \{
        e\in\mathcal B:
        e\text{ has not been rejected}
    \}.
\end{aligned}
\label{eq:supp-aeb-lower-upper}
\end{equation}
If $\mathcal A_{\mathcal B}$ is the exhaustive profile obtained by applying
the same classification rule to the full bank, then
\begin{equation}
    L_k
    \subseteq
    \mathcal A_{\mathcal B}
    \subseteq
    U_k.
    \label{eq:supp-aeb-sandwich}
\end{equation}

The deterministic certificate rule
\[
    \mathsf{Cert}_{\mathcal G}(L,U)
    \in
    \mathcal O_{\mathcal G}\cup\{\bot\}
\]
implements the universal and witness conditions of
Section~\ref{sec:finite-bank-controller}.  It returns a report only when
$U\ne\varnothing$, every required truth-level predicate holds for all
candidates in $U$, and every required ambiguity field has its prescribed
opposing admissible witnesses in $L$; otherwise it returns $\bot$.  The
predeclared priority function
$\pi_{\mathcal G}(e; L, U)$ and deterministic tie-breaking rule order the
unqueried candidates that can still change an unresolved certificate.  The
global and four-region policies below are concrete instances of these generic
rules.

Empty profile is a separate terminal state.  It is returned only when
$U_k=\varnothing$, equivalently, when every bank candidate has been certified
rejected.  A query-limited state with no admissible witness, or a fully queried
state containing an indeterminate candidate, is not interpreted as empty; it
leads to \textsc{abstain} unless an independently valid truth-level field has
already been certified.

Algorithm~\ref{alg:supp-aeb} gives the complete controller, including the early
empty-profile check, terminal safeguard, and abstention return.

\begin{algorithm}[H]
\caption{Active endpoint bracketing on a finite explanation bank}
\label{alg:supp-aeb}
\begin{algorithmic}[1]

\REQUIRE
Finite bank $\mathcal B$, held-out evaluator $\psi_W$, requested report family
$\mathcal G$, certificate rule $\mathsf{Cert}_{\mathcal G}$, priority rule
$\pi_{\mathcal G}$, and query budget $K$

\STATE
$L\leftarrow\varnothing$, $U\leftarrow\mathcal B$

\STATE
$Q_{\max}\leftarrow\min(K,M)$

\FOR{$k=0,1,\ldots,Q_{\max}$}

    \IF{$U=\varnothing$}
        \STATE
        \textbf{return} an empty-profile status
    \ENDIF

    \STATE
    $o\leftarrow\mathsf{Cert}_{\mathcal G}(L,U)$

    \IF{$o\ne\bot$}
        \STATE
        \textbf{return} $o$ together with $(L,U)$
    \ENDIF

    \IF{$k<Q_{\max}$}

        \STATE
        Select the next candidate using the predeclared tie-breaking rule:
        \[
            e
            \in
            \argmin_{\substack{
                e'\in U\\
                e'\text{ unqueried}
            }}
            \pi_{\mathcal G}(e';L,U)
        \]

        \STATE
        Evaluate $\psi_W(e)$

        \IF{$\psi_W(e)=\textsc{admissible}$}
            \STATE
            $L\leftarrow L\cup\{e\}$
        \ELSIF{$\psi_W(e)=\textsc{rejected}$}
            \STATE
            $U\leftarrow U\setminus\{e\}$
        \ELSE
            \STATE
            retain $e$ in $U$
        \ENDIF

    \ENDIF

\ENDFOR

\IF{$U=\varnothing$}
        \STATE
        \textbf{return} an empty-profile status
    \ELSE
        \STATE
        \textbf{return} \textsc{abstain} together with $(L,U)$ and the
        incomplete certificate state
    \ENDIF

\end{algorithmic}
\end{algorithm}

\subsubsection{Conditional candidate retention}

The adaptive priority rule may depend on proposal data, but the held-out
observations used to reject a candidate must remain untouched by that proposal
construction.  Conditioning on the proposal stage therefore turns the
likelihood-ratio process into an ordinary mean-one martingale under the
candidate being tested, which is exactly the structure needed for Ville's
inequality.

Let $\mathscr H_{\mathrm{prop}}$ contain the data split, all proposal-stage
observations, and the proposal and policy objects constructed from them.
Conditional on $\mathscr H_{\mathrm{prop}}$, the proposal density is fixed
before any held-out evaluation observation is used.  For the generic
one-stream evaluator, let $W=(W_1,\ldots, W_m)$ denote the held-out evaluation
sample.  For a fully specified on-bank candidate $e$, assume
$f_{\widehat e}\ll f_e$.  Then
\begin{equation}
    E_t(e)
    =
    \prod_{i\le t}
    \frac{f_{\widehat e}(W_i)}{f_e(W_i)}.
    \label{eq:supp-one-stream-eprocess}
\end{equation}
Under $\mathbb P_e$,
\[
\mathbb E_e
\left[
\frac{f_{\widehat e}(W_t)}{f_e(W_t)}
\middle|
\mathcal F_{t-1},\mathscr H_{\mathrm{prop}}
\right]
=
1,
\]
so $E_t(e)$ is a nonnegative conditional martingale.  The proposal density
may lie outside the finite bank; only its conditional independence from the
held-out evaluation sample is required.

The four-region evaluator specializes $W$ into independent calibration and
deployment streams $Y$ and $Z$.  For candidate $e=(g,S)$,
\begin{equation}
    E_k(e)
    =
    \prod_{i\le n_k}
    \frac{f^C_{\widehat g}(Y_i)}{f^C_g(Y_i)}
    \prod_{j\le t_k}
    \frac{f^D_{\widehat e}(Z_j)}{f^D_e(Z_j)}
    \label{eq:supp-two-stream-eprocess}
\end{equation}
along the deterministic checkpoint path
\begin{equation}
\begin{gathered}
(n_k,t_k)
\in
\{
(128,0),(256,0),(512,0),\\
(1024,0),(2048,0),(2253,116)
\}.
\end{gathered}
\label{eq:supp-six-checkpoints}
\end{equation}
Conditional Ville's inequality therefore gives candidate-wise rejection
probability at most $\alpha_{\mathcal B}$ when rejection requires a strict
crossing of $1/\alpha_{\mathcal B}$.  This protects the represented true
candidate; it is not a simultaneous nonrejection guarantee for all candidates.

The implementation evaluates likelihood-ratio scores in binary64 and
re-evaluates designated near-threshold or certificate-critical candidates at
90-decimal precision.  This high-precision check is not directed interval
arithmetic.  The formal probability statement is therefore the exact-real
candidate-wise result above; the numerical implementation is documented to
make the finite candidate classifications reproducible.

\subsubsection{Proofs of the finite-bank guarantees}
\label{sec:supp-aeb-guarantees}

The three finite-bank guarantees have different roles.  The first is
statistical and controls rejection of the represented true candidate.  The
second is deterministic and relates the partial $(L_k, U_k)$ state to
exhaustive same-bank evaluation.  The third combines the two to control false
truth-level reports.

\begin{proof}[Proof of Corollary~\ref{cor:eprocess-retention}]
Conditional on $\mathscr H_{\mathrm{prop}}$, the process $E_t(e)$ is a
nonnegative mean-one martingale under candidate $e$.  Ville's inequality and
the predeclared checkpoint set therefore give
\[
    \mathbb P_e
    \left\{
        \max_{t\in\mathcal T}E_t(e)
        >
        \frac{1}{\alpha_{\mathcal B}}
        \mathrel{\Big|}
        \mathscr H_{\mathrm{prop}}
    \right\}
    \le
    \alpha_{\mathcal B}.
\]
Integrating over the proposal-stage data proves the candidate-wise rejection
bound in Equation~\eqref{eq:on-bank-truth-retention}.
\end{proof}

\begin{proof}[Proof of Theorem~\ref{thm:finite-bank-bridge}]
The inclusion
$\mathcal A_{\mathcal B}(W)\subseteq U_k$ implies that every assertion holding
pointwise throughout $U_k$ also holds throughout the exhaustive finite-bank
profile.  Similarly, $L_k\subseteq\mathcal A_{\mathcal B}(W)$ implies that
opposing witnesses in $L_k$ both belong to the exhaustive profile.  Finally,
$F\subseteq C$ gives
$q_{\mathrm{fine}(R,F)}(e)\le q_{\mathrm{sector}(R,C)}(e)$ for every candidate
$e$.  These are pathwise statements at each query prefix, so they remain valid
at a data-dependent stopping prefix.
\end{proof}

\begin{proof}[Proof of Theorem~\ref{thm:on-bank-truth-validity}]
Fix $e_\star\in\mathcal B$.  If a reported truth-level assertion is false at
$e_\star$, then $q_{\widehat g}(e_\star)=0$.  Because every reported assertion
holds throughout the possible set, this event implies
$e_\star\notin U_k$.  Candidates leave $U_k$ only when certified rejected;
hence
\[
\begin{aligned}
    \mathbb P_{e_\star}
    \left\{
        \begin{gathered}
        \widehat g\text{ is reported and}\\
        q_{\widehat g}(e_\star)=0
        \end{gathered}
    \right\}
    &\le
    \mathbb P_{e_\star}
    \left\{
        e_\star\text{ is ever rejected}
    \right\}
    \\
    &\le
    \alpha_{\mathcal B}.
\end{aligned}
\]
Taking the supremum over $e_\star\in\mathcal B$ proves
Equation~\eqref{eq:on-bank-false-assertion-bound}.
\end{proof}

\subsection{Global finite-bank evaluation protocol}
\label{sec:supp-global-protocol}

The global study is designed to evaluate the computation--resolution tradeoff
of AEB independently of the application-specific regional predicates used in
the four-region example.  We vary the information regime and the requested
report profile so that query behavior is tested under both easy and difficult
certificate obligations.

The global study uses
\[
q=4,\qquad n=5,\qquad
\phi_{\rm truth}=0,\qquad
p_{\rm truth}=0.20,
\]
32 calibration-mixture components and active support $\{0,1\}$.
Each dataset is split into proposal and held-out evaluation fractions
$0.45$ and $0.55$.
The global study fixes its requested report family, certificate rule, priority
function, and deterministic tie-breaking order before held-out evaluation.

The proposal orientation $\widehat\phi_A$ is retained as a fixed reference in
the global diameter calculation.
If $S$ denotes the exhaustive set of surviving bank orientations, define
\[
\begin{aligned}
W_k^+
&=
\{\widehat\phi_A\}
\cup
\{
\substack{
\phi_j:\\
\phi_j\text{ has a queried admissible}\\
\text{nuisance witness}}
\},\\
P_k^+
&=
\{\widehat\phi_A\}
\cup
\{
\substack{
\phi_j:\\
\phi_j\text{ has not been eliminated}}
\},\\
S^+
&=
\{\widehat\phi_A\}\cup S .
\end{aligned}
\]
Correct candidate classifications give
\[
\begin{aligned}
    W_k^+
    \subseteq
    S^+
    \subseteq
    P_k^+,
    \\
    \operatorname{diam}(W_k^+)
    \le
    \operatorname{diam}(S^+)
    \le
    \operatorname{diam}(P_k^+).
\end{aligned}
\]
Thus, this experiment evaluates a proposal-referenced global physical-resolution
summary; it does not use the regional support predicates of the four-region
experiment.

\ifdefined\TSPPUBLICATION
\begin{table*}[t]
\else
\begin{table}[htbp]
\fi
\centering
\caption{Global finite-bank evaluation conditions.}
\label{supp-tab:global-protocol}
\small
\begin{tabular}{lrrrrr}
\toprule
Condition & $N$ & $s$ & $\nu_{\rm truth}$ & Bank size & Cases \\
\midrule
Low information
& 4096 & 0.35 & 1.20 & 1025 & 6 \\
Intermediate information
& 65536 & 0.50 & 0.90 & 1025 & 6 \\
High information
& 131072 & 0.50 & 0.80 & 369 & 6 \\
\bottomrule
\end{tabular}
\ifdefined\TSPPUBLICATION
\end{table*}
\else
\end{table}
\fi

The data seeds are
$2026080111$--$2026080116$,
$2026080121$--$2026080126$, and
$2026080131$--$2026080136$ for the three conditions, respectively.
The corresponding split seeds are obtained by replacing the leading $2$ with
$9$.

Each dataset is evaluated under three independently specified reporting
profiles:
\[
\begin{array}{c|ccc}
 & \alpha_{\mathcal B} & \delta_f/d_{\rm shell} & \delta_s/d_{\rm shell}\\ \hline
\textsc{risk-conservative}
&0.025&0.40&0.50\\
\textsc{balanced}
&0.077&0.35&0.60\\
\textsc{resolution-favoring}
&0.150&0.25&0.40
\end{array}
\]
giving 54 controller traces from 18 independent datasets.
Results are recorded at query-budget fractions
\[
    \{0.10,0.20,0.35,0.50,0.75,1.00\}.
\]
A logical query is counted whenever the controller requests a candidate
classification, even if a numerical score is already cached.

After every controller trace has been fixed, the same finite bank is evaluated
exhaustively using the same candidate-classification rule.
The finite-bank reference therefore does not influence the adaptive query
order.
Near-threshold candidates and candidates affecting terminal or elimination
certificates receive the high-precision re-evaluation described above.

The principal denominators in the main-text summary are 54 controller traces,
34 exhaustive-bank ambiguity traces, seven exhaustive-bank fine traces, and
20 exhaustive-bank nonambiguous traces.
Full-budget categorical agreement is evaluated over all 54 traces.
The 54 traces arise from three reporting profiles applied to 18 independent
datasets and must not be interpreted as 54 independent datasets.

Table~\ref{tab:global-controller} gives the complete summary at the two
displayed operating points, and Figure~\ref{fig:supp-global-recovery} separates
ambiguity and fine recovery at those budgets.

\input{tables/global_finite_bank_summary}

\begin{figure}[htbp]
\centering
\includegraphics[width=\linewidth]
{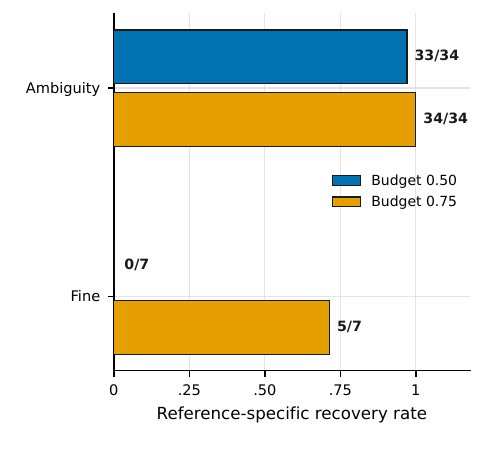}
\caption{Global AEB recovery at query budgets $0.50$ and $0.75$. The 54
reporting traces arise from three predeclared profiles applied to 18
independent cases; the recovery denominators are 34 exhaustive-bank ambiguity
traces and seven exhaustive-bank fine traces.}
\label{fig:supp-global-recovery}
\end{figure}

The median queried bank fraction is $0.209$ at both budget 0.50 and budget
0.75.  This is the median of normalized trace-specific query fractions, not
raw query counts.  Thirty-four terminal fractions are already at or below
0.50; the 27th and 28th order statistics are $201/1025$ and $227/1025$, whose
average is $214/1025=0.208780\ldots$.  These middle observations are therefore
fixed before either displayed cap.  Raising the cap changes the upper tail and
fine recovery rather than the median stopping fraction.

\subsection{Four-region application protocol}
\label{sec:supp-four-region-protocol}

The four-region study is designed to place several kinds of physical
conclusion in one controlled bank: fine localization in region A, group-level
resolution in the coherent region B, presence--absence ambiguity for weak
region C, and represented-scale absence with D-present controls in region D.
This lets the same AEB logic be evaluated across qualitatively different
certificate obligations.

The four-region response library contains 12 atoms, 72 dictionary states, and
the support patterns AB, ABC, and ABD, giving
\[
    M=216
\]
complete candidate explanations.
Figure~\ref{fig:supp-four-region-bank} records the response library and its
absolute atom-coherence structure.

\ifdefined\TSPPUBLICATION
\begin{figure*}[t]
\else
\begin{figure}[htbp]
\fi
  \centering
  \subfloat[Response library. The 12 atoms are grouped into physical regions
  A--D; color identifies the region, and line style or marker identifies the
  atom within a region.\label{fig:supp-four-region-bank:a}]{%
    \includegraphics[width=.535\linewidth]
    {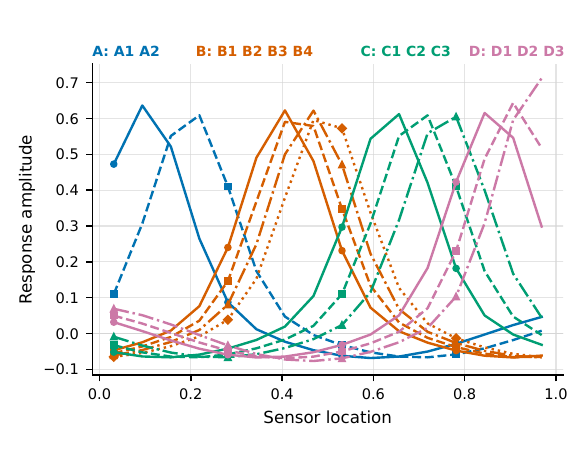}%
  }
  \hfill
  \subfloat[Absolute atom coherence. Region blocks expose the isolated,
  highly coherent, weak optional, and optional-interferer structures used in
  the application.\label{fig:supp-four-region-bank:b}]{%
    \includegraphics[width=.425\linewidth]
    {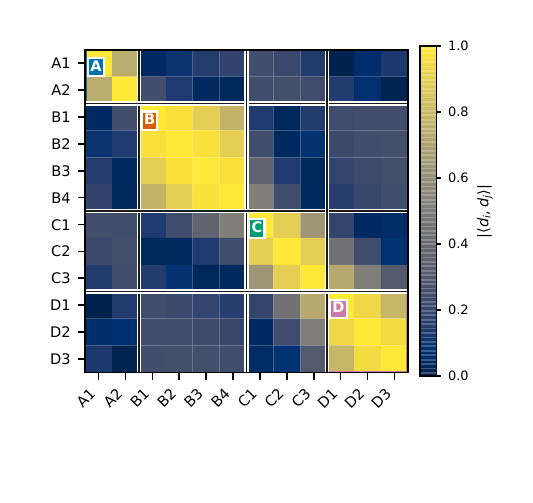}%
  }
  \caption{Four-region finite-bank construction used for the application
  protocol. The library contains 12 atoms arranged into the four physical
  regions, and the coherence matrix shows the within- and between-region atom
  similarities that determine the available physical resolution.}
  \label{fig:supp-four-region-bank}
\ifdefined\TSPPUBLICATION
\end{figure*}
\else
\end{figure}
\fi

The four-region study fixes its requested report family, certificate rule,
priority function, and deterministic tie-breaking order before held-out
evaluation.
The fixed design parameters are
\[
\begin{gathered}
    \alpha_{\mathcal B}=0.077,
    \qquad
    h=0.085,
    \\
    \tau_A=1,
    \qquad
    \tau_B=0.8,
    \\
    \tau_C=0.1,
    \qquad
    \tau_D=1.00.
\end{gathered}
\]
For candidate $e=(g,S)$,
\begin{equation}
    \Sigma_e
    =
    \nu_D I
    +
    \sum_{R\in S(e)}
    \tau_R^2a_R(g)a_R(g)^\top .
    \label{eq:supp-four-region-law}
\end{equation}
The represented regional strength is
\[
    \lambda_R(e)
    =
    \tau_R\mathbf1\{R\in S(e)\}.
\]
Accordingly, a reported absence statement excludes the optional-region-present
candidate family at the represented scale $\tau_R$.
It does not assert absence over a continuum of amplitudes or arbitrarily weak
off-bank sources.

Each dataset contains a $4096\times16$ calibration array and a
$192\times16$ deployment array.
The proposal/evaluation partition contains 1843/2253 calibration observations
and 76/116 deployment observations.
The six evaluation checkpoints are exactly those in
Equation~\eqref{eq:supp-six-checkpoints}.

The 15 fresh datasets consist of six persistent-only cases with seeds
$2026091001$--$2026091006$, six weak-C cases with seeds
$2026091011$--$2026091016$, and three D-present controls with seeds
$2026091021$--$2026091023$.

The point-valued comparator is a proposal-split deployment
maximum-likelihood selector that chooses one dictionary--support explanation
and then reports the physical interpretation conditional on that choice.
Among the 216 explanations, it maximizes the summed fixed-covariance Gaussian
log-likelihood on the 76-observation proposal subset of the 192 deployment
replicates, with the lowest canonical index breaking ties. It then uses the
maximizing explanation's dictionary state and support, reports selected
regions at their fine atom locations, and declares unselected regions absent
above the represented beta-min scale.

The regional policy alternates between actions that can establish opposing support witnesses and actions that can eliminate candidates, preventing a regional certificate.
Its fixed proposal order is the deployment proposal dictionary, followed by
the distinct calibration proposal dictionary, followed by candidate ID.
For the weak-C ambiguity objective, the fixed priority order is:
(i) obtain a C-absent witness,
(ii) obtain a C-present witness,
(iii) eliminate an alternative A location, and
(iv) eliminate the D-present family.
After each query, the regional rule
$\mathsf{Cert}_{\mathcal G}(L,U)$ is recomputed.
The primary query cap is
\[
    162/216.
\]
The $108/216$ state reported in diagnostics is a prefix of the same controller
trace, not a separate run.

The double-precision rejection threshold is
\[
    -\log\alpha_{\mathcal B}
    =
    -\log(0.077)
    =
    2.563949857.
\]
Provisionally admissible, near-threshold, and certificate-critical candidates
receive the high-precision re-evaluation used by the exhaustive finite-bank
reference.
The exhaustive reference scores all 216 candidates and projects only the
admissible subset to the regional physical map.

Table~\ref{tab:application-results} gives the complete four-region outcome
summary before the detailed denominator and empty-profile discussion below.

\input{tables/application_conclusions}

Fourteen of the 15 datasets yield nonempty exhaustive profiles.
The completed-profile denominators used in the main text are:
11 eligible A-fine cases, five B group/sector cases, five C-ambiguity cases,
and ten represented-scale D-absence cases.
The point-valued plug-in comparison uses the same five eligible weak-C cases for regions B and C.
The three D-present controls provide the denominator for the false-D-absence
check.

One weak-C realization produces an empty exhaustive finite profile.
At the 162-query online cap, its lower set is empty, and 54 candidates remain
possible, so AEB returns \textsc{abstain}; only subsequent exhaustive scoring
shows that no candidate in the finite bank is admissible.
The reported physical map is therefore null.
This realization is excluded from truth-relative utility rates based on
completed profiles, but its 162 queries remain in the computational summary.
It is not interpreted as evidence that all physical regions are absent.

The three D-present controls also reach the primary cap without an admissible
lower-set witness, although their exhaustive profiles are nonempty.
Their universal regional truth-level fields may be inspected separately, but
they are not counted as witnessed global nonabstaining maps.
None receives the represented-scale D-absence label.

\subsection{Query accounting, reproducibility, and scope}
\label{sec:supp-reproducibility}

Because cached scores, high-precision replay, and exhaustive reference scoring
can all occur outside the adaptive controller, the notion of a ``query'' must
be fixed explicitly before interpreting the reported savings.  We count only
controller-requested candidate classifications and separately document the
offline work needed for reproducibility.

A logical query is one controller-requested candidate evaluation.
Cache hits do not remove logical queries.
Offline exhaustive finite-bank scoring, high-precision verification,
completion checks, and report generation are excluded from the adaptive query
count and are not interpreted as deployment latency.

The global and four-region studies use different banks, report functionals,
and query policies; their observations and denominators are not pooled.
The publication package records the candidate laws, data splits, checkpoint
grids, proposal and witness priorities, tie order, report predicates, seeds,
raw numerical summaries, and figure data required to reproduce the reported
experiments.
For the theory-guided calculation, the Monte Carlo seed is 2026072201 and the
paired-bootstrap seed is 2026072202.

The numerical evidence has deliberately limited scope.
The theory-guided experiment illustrates the finite-$s$ manifestation of the
$s^6$ calibration geometry and deployment task symmetry.
The finite-bank experiments evaluate AEB relative to exhaustive evaluation of
the same specified candidate banks.
They do not establish numerical coverage of the continuous confidence
correspondence, completeness of the finite banks relative to the continuous
model, off-bank validity, worst-case sublinear query complexity, or transfer
to arbitrary real-data applications.

%% file: tables/global_finite_bank_summary.tex
\ifdefined\TSPREVIEW
\begin{table}[H]
\else
\begin{table}[t]
\fi
  \centering
  \caption{Global AEB outcomes over 18 independent cases and 54 reporting
  traces, relative to exhaustive scoring of the same finite candidate bank.}
  \label{tab:global-controller}
  \begingroup
  \footnotesize
  \setlength{\tabcolsep}{3pt}
  \renewcommand{\arraystretch}{1.12}
  \begin{tabularx}{\columnwidth}{
    >{\raggedright\arraybackslash}X
    >{\centering\arraybackslash}p{.17\columnwidth}
    >{\centering\arraybackslash}p{.17\columnwidth}}
    \toprule
    Outcome relative to exhaustive same-bank reference &
    Budget $0.50$ &
    Budget $0.75$ \\
    \midrule
    Non-abstaining (substantive) report & $41/54$ & $54/54$ \\
    Ambiguity conclusion recovered & $33/34$ & $34/34$ \\
    Fine conclusion recovered & $0/7$ & $5/7$ \\
    Unsafe finer-than-reference conclusion & $0/54$ & $0/54$ \\
    Median queried bank fraction & $0.209$ & $0.209$ \\
    \bottomrule
  \end{tabularx}
  \endgroup
  \vspace{2pt}

  \begin{minipage}{\linewidth}
    \footnotesize
    \emph{Note:} The 54 reporting traces arise from 18 independent cases under
    three predeclared reporting profiles.  Recovery denominators differ by the
    target exhaustive-bank conclusion: 34 ambiguity traces and seven fine
    traces.  At budget $0.75$, the two fine-reference conclusions not reported
    as fine were safely coarsened.  Unsafe counts are the deterministic
    cross-classification of the saved budget outputs against their exhaustive
    same-bank labels.  The trace-level explanation for the identical displayed
    medians is given in the supplement.
  \end{minipage}
\end{table}

%% file: tables/application_conclusions.tex
\ifdefined\TSPREVIEW
\begin{table}[H]
\else
\begin{table*}[t]
\fi
  \centering
  \caption{Physical conclusions across 15 fresh datasets in the four-region
  synthetic application.  Every reference conclusion is obtained by
  exhaustive evaluation of the same finite 216-explanation bank.}
  \label{tab:application-results}
  \begingroup
  \footnotesize
  \setlength{\tabcolsep}{4pt}
  \renewcommand{\arraystretch}{1.12}
  \begin{tabularx}{\textwidth}{
    >{\raggedright\arraybackslash}p{.18\textwidth}
    >{\raggedright\arraybackslash}p{.29\textwidth}
    >{\raggedright\arraybackslash}p{.17\textwidth}
    >{\raggedright\arraybackslash}X}
    \toprule
    Region / phenomenon &
    Exhaustive finite-bank reference &
    AEB outcome relative to reference &
    Point-valued plug-in selector \\
    \midrule
    A: isolated persistent &
    Fine localization in 11 eligible main profiles &
    Fine: $10/11$ &
    \textemdash \\

    B: highly coherent persistent &
    Group/sector level in five eligible completed weak-C profiles &
    Group/sector: $5/5$ &
    Unsupported fine localization: $5/5$ \\

    C: weak optional &
    Support ambiguity in the same five eligible completed weak-C profiles &
    Support ambiguity: $5/5$ &
    False certainty: $5/5$ \\

    D: represented-scale absence &
    Absence at the represented scale in 10 eligible main profiles &
    Absence: $10/10$ &
    \textemdash \\

    D-present controls &
    D present in all three eligible controls &
    False absence: $0/3$ &
    \textemdash \\
    \bottomrule
  \end{tabularx}
  \endgroup
  \vspace{2pt}

  \begin{minipage}{\textwidth}
    \footnotesize
    \emph{Note:} Fifteen fresh datasets yielded 14 nonempty completed
    exhaustive profiles: $11/12$ main profiles and $3/3$ D-present controls.
    One weak-C dataset had an empty exhaustive finite-bank profile and a null
    physical map. Recovery denominators follow the eligibility rules in the
    text; the B and C point-valued plug-in selector comparisons use the same
    five eligible completed weak-C profiles. The empty-profile dataset is
    excluded from truth-relative utility rates but retained in the query-cost
    summary, whose median AEB cost is $148/216$ queries. A dash indicates that
    no aggregate point-valued plug-in selector comparison was designated for
    that endpoint; the representative main-text panel may still display the
    corresponding case-level point-valued plug-in output.
  \end{minipage}
\ifdefined\TSPREVIEW
\end{table}
\else
\end{table*}
\fi